\pdfoutput=1
\documentclass[letterpaper]{article}
\usepackage{ijcai17}
\setlength{\paperheight}{11in}
\setlength{\paperwidth}{8.5in}
\usepackage[pass]{geometry}
\usepackage{times}
\usepackage{latexsym}
\usepackage{amsmath}
\usepackage{amssymb}
\usepackage{amsthm}
\usepackage{paralist}
\usepackage{thmtools}
\usepackage{thm-restate}
\usepackage{algorithm}
\usepackage[noend]{algpseudocode}
\usepackage{macros}
\usepackage{chngcntr}
\usepackage[draft]{ifdraft}
\allowdisplaybreaks
    
\newtheorem{theorem}{Theorem}
\newtheorem{proposition}[theorem]{Proposition}

\newtheorem{lemma}[theorem]{Lemma}
\newtheorem{definition}[theorem]{Definition}

\newtheorem{example}[theorem]{Example}
 
\title{
    Foundations of Declarative Data Analysis Using Limit Datalog Programs
}
\author{
    Mark Kaminski,~~Bernardo Cuenca Grau,~~Egor V. Kostylev,~~Boris Motik\and Ian Horrocks\\
    Department of Computer Science, University of Oxford, UK\\
    \{mark.kaminski,~bernardo.cuenca.grau,~egor.kostylev,~boris.motik,~ian.horrocks\}@cs.ox.ac.uk
}
 
\begin{document}

\maketitle

\begin{abstract}
Motivated by applications in declarative data analysis, we study $\DLog$---an
extension of positive Datalog with arithmetic functions over integers. This
language is known to be undecidable, so we propose two fragments. In
\emph{limit $\DLog$} predicates are axiomatised to keep minimal/maximal numeric
values, allowing us to show that fact entailment is
$\textsc{coNExpTime}$-complete in combined, and $\textsc{coNP}$-complete in
data complexity. Moreover, an additional \emph{stability} requirement causes
the complexity to drop to $\textsc{ExpTime}$ and $\textsc{PTime}$,
respectively. Finally, we show that stable $\DLog$ can express many useful data
analysis tasks, and so our results provide a sound foundation for the
development of advanced information systems.

\end{abstract}

\section{Introduction}\label{sec:introduction}

Analysing complex datasets is currently a hot topic in information systems. The
term `data analysis' covers a broad range of techniques that often involve
tasks such as data aggregation, property verification, or query answering. Such
tasks are currently often solved imperatively (e.g., using Java or Scala) by
specifying \emph{how} to manipulate the data, and this is undesirable because
the objective of the analysis is often obscured by evaluation concerns. It has
recently been argued that data analysis should be \emph{declarative}
\cite{DBLP:conf/eurosys/AlvaroCCEHS10,DBLP:journals/pvldb/Markl14,DBLP:journals/tkde/SeoGL15,DBLP:conf/sigmod/ShkapskyYICCZ16}:
users should describe \emph{what} the desired output is, rather than how to
compute it. For example, instead of computing shortest paths in a graph by a
concrete algorithm, one should (i)~describe what a path length is, and
(ii)~select only paths of minimum length. Such a specification is independent
of evaluation details, allowing analysts to focus on the task at hand. An
evaluation strategy can be chosen later, and general parallel and/or
incremental evaluation algorithms can be reused `for free'.

An essential ingredient of declarative data analysis is an efficient language
that can capture the relevant tasks, and Datalog is a prime candidate since it
supports recursion. Apart from recursion, however, data analysis usually also
requires integer arithmetic to capture quantitative aspects of data (e.g., the
length of a shortest path). Research on combining the two dates back to the '90s
\cite{DBLP:conf/vldb/MumickPR90,DBLP:conf/slp/KempS91,DBLP:journals/jlp/BeeriNST91,DBLP:conf/pods/Gelder92,DBLP:journals/tcs/ConsensM93,DBLP:journals/jcss/GangulyGZ95,RossS97},
and is currently experiencing a revival
\cite{DBLP:journals/ai/FaberPL11,DBLP:journals/vldb/MazuranSZ13}. This
extensive body of work, however, focuses primarily on integrating recursion and
arithmetic with \emph{aggregate functions} in a coherent semantic framework,
where technical difficulties arise due to nonmonotonicity of aggregates.
Surprisingly little is known about the computational properties of integrating
recursion with arithmetic, apart from that a straightforward combination is
undecidable \cite{DBLP:journals/csur/DantsinEGV01}. Undecidability also carries
over to the above formalisms and practical Datalog-based systems such as BOOM
\cite{DBLP:conf/eurosys/AlvaroCCEHS10}, DeALS
\cite{DBLP:conf/sigmod/ShkapskyYICCZ16}, Myria
\cite{DBLP:journals/pvldb/WangBH15}, SociaLite
\cite{DBLP:journals/tkde/SeoGL15}, Overlog
\cite{DBLP:journals/cacm/LooCGGHMRRS09}, Dyna
\cite{DBLP:conf/datalog/EisnerF10}, and Yedalog
\cite{DBLP:conf/snapl/ChinDEHMOOP15}.

To develop a sound foundation for Datalog-based declarative data analysis, we
study $\DLog$---negation-free Datalog with integer arithmetic and comparisons.
Our main contribution is a new \emph{limit $\DLog$} fragment that, like the
existing data analysis languages, is powerful and flexible enough to naturally
capture many important analysis tasks. However, unlike $\DLog$ and the existing
languages, reasoning with limit programs is decidable, and it becomes tractable
in data complexity under an additional \emph{stability} restriction.

In limit $\DLog$, all intensional predicates with a numeric argument are
\emph{limit predicates}. Instead of keeping all numeric values for a given
tuple of objects, such predicates keep only the minimal ($\tmin$) or only the
maximal ($\tmax$) bounds of numeric values entailed for the tuple. For example,
if we encode a weighted directed graph using a ternary predicate
$\mathit{edge}$, then rules \eqref{eq:sp1} and \eqref{eq:sp2}, where
$\mathit{sp}$ is a $\tmin$ limit predicate, compute the cost of a shortest path
from a given source node $v_0$ to every other node.
\begin{align}
                                                & \rightarrow \mathit{sp}(v_0,0)    \label{eq:sp1} \\
    \mathit{sp}(x,m) \land \mathit{edge}(x,y,n) & \to \mathit{sp}(y,m + n)          \label{eq:sp2}
\end{align}
If these rules and a dataset entail a fact $\mathit{sp}(v,k)$,
then the cost of a shortest path from $v_0$ to $v$ is at most $k$; hence,
$\mathit{sp}(v,k')$  holds for each ${k' \geq k}$ since the cost of
a shortest path is also at most $k'$. Rule \eqref{eq:sp2} intuitively says
that, if $x$ is reachable from $v_0$ with cost at most $m$ and ${\langle x,y
\rangle}$ is an edge of cost $n$, then $v'$ is reachable from $v_0$ with cost
at most ${m + n}$. This is different from $\DLog$, where there is no implicit
semantic connection between $\mathit{sp}(v,k)$ and $\mathit{sp}(v,k')$, and
such semantic connections allow us to prove decidability of limit $\DLog$. We
provide a direct semantics for limit predicates based on Herbrand
interpretations, but we also show that this semantics can be axiomatised in
standard $\DLog$. Our formalism can thus be seen as a fragment of $\DLog$, from
which it inherits well-understood properties such as monotonicity and existence
of a least fixpoint model \cite{DBLP:journals/csur/DantsinEGV01}.

Our contributions are as follows. First, we introduce limit $\DLog$ programs
and argue that they can naturally capture many relevant data analysis tasks. We
prove that fact entailment in limit $\DLog$ is undecidable, but, after
restricting the use of multiplication, it becomes \textsc{coNExpTime}- and
\textsc{coNP}-complete in combined and data complexity, respectively. To
achieve tractability in data complexity (which is very important for robust
behaviour on large datasets), we additionally introduce a \emph{stability}
restriction and show that this does not prevent expressing the relevant
analysis tasks.

The proofs of all results are given in \ifdraft{the appendix of this paper}{an
extended version of this paper \cite{extended-version}}.


\section{Preliminaries}\label{sec:preliminaries}

In this section, we recapitulate the well-known definitions of Datalog with
integers, which we call $\DLog$.

\myparagraph{Syntax}
A vocabulary consists of \emph{predicates}, \emph{objects}, \emph{object
variables}, and \emph{numeric variables}. Each predicate has an integer
\emph{arity} $n$, and each position ${1 \leq i \leq n}$ is of either
\emph{object} or \emph{numeric sort}. An \emph{object term} is an object or an
object variable. A \emph{numeric term} is an integer, a numeric variable, or of
the form ${s_1 + s_2}$, ${s_1 - s_2}$, or ${s_1 \times s_2}$ where $s_1$ and
$s_2$ are numeric terms, and $+$, $-$, and $\times$ are the standard
\emph{arithmetic functions}. A \emph{constant} is an object or an integer. The
\emph{magnitude} of an integer is its absolute value. A \emph{standard atom} is
of the form ${B(t_1, \dots, t_n)}$, where $B$ is a predicate of arity $n$ and
each $t_i$ is a term whose type matches the sort of position $i$ of $B$. A
\emph{comparison atom} is of the form ${(s_1 < s_2)}$ or ${(s_1 \leq s_2)}$,
where $<$ and $\leq$ are the standard \emph{comparison predicates}, and $s_1$
and $s_2$ are numeric terms. A \emph{rule} $r$ is of the form
${\bigwedge\nolimits_i \alpha_i \wedge \bigwedge\nolimits_j \beta_j \to
\alpha}$, where $\alpha_{(i)}$ are standard atoms, $\beta_j$ are comparison
atoms, and each variable in $r$ occurs in some $\alpha_i$. Atom ${\head{r} =
\alpha}$ is the \emph{head} of $r$; ${\sbody{r} = \bigwedge\nolimits_i
\alpha_i}$ is the \emph{standard body} of $r$; ${\cbody{r} =
\bigwedge\nolimits_j \beta_j}$ is the \emph{comparison body} of $r$; and
${\body{r} = \sbody{r} \wedge \cbody{r}}$ is the \emph{body} of $r$. A
\emph{ground instance of} $r$ is obtained from $r$ by substituting all
variables by constants. A ($\DLog$) \emph{program} $\Prog$ is a finite set of
rules. Predicate $B$ is \emph{intensional} (\emph{IDB}) in $\Prog$ if $B$
occurs in $\Prog$ in the head of a rule whose body is not empty; otherwise, $B$
is \emph{extensional} (\emph{EDB}) in $\Prog$. A term, atom, rule, or program
is \emph{ground} if it contains no variables. A \emph{fact} is a ground,
function-free, standard atom. Program $\Prog$ is a \emph{dataset} if $\head{r}$
is a fact and ${\body{r} = \emptyset}$ for each ${r \in \Prog}$. We often say
that $\Prog$ contains a fact $\alpha$ and write ${\alpha \in \Prog}$, which
actually means ${\to \alpha \in \Prog}$. We write a tuple of terms as ${\mathbf
t}$, and we often treat conjunctions and tuples as sets and write, say,
${\alpha_i \in \sbody{r}}$, ${|\mathbf t|}$, and ${t_i \in \mathbf t}$.

\myparagraph{Semantics}  
A \emph{(Herbrand) interpretation} $I$ is a (not necessarily finite) set of
facts. Such $I$ \emph{satisfies} a ground atom $\alpha$, written ${I \models
\alpha}$, if (i)~$\alpha$ is a standard atom and evaluating the arithmetic
functions in $\alpha$ produces a fact in $I$, or (ii)~$\alpha$ is a comparison
atom and evaluating the arithmetic functions and comparisons produces $\true$.
The notion of satisfaction is extended to conjunctions of ground atoms, rules,
and programs as in first-order logic, where each rule is universally
quantified. If ${I \models \Prog}$, then $I$ is a \emph{model} of program
$\Prog$; and $\Prog$ \emph{entails} a fact $\alpha$, written ${\Prog \models
\alpha}$, if ${I \models \alpha}$ holds whenever ${I \models \Prog}$.

\myparagraph{Complexity}
In this paper we study the computational properties of checking ${\Prog \models
\alpha}$. \emph{Combined complexity} assumes that both $\Prog$ and $\alpha$ are
part of the input. In contrast, \emph{data complexity} assumes that $\Prog$ is
given as ${\Prog' \cup \Dat}$ for $\Prog'$ a program and $\Dat$ a dataset, and
that only $\Dat$ and $\alpha$ are part of the input while $\Prog'$ is fixed.
Unless otherwise stated, all numbers in the input are coded in binary, and the
\emph{size} $\ssize{\Prog}$ of $\Prog$ is the size of its representation.
Checking ${\Prog \models \alpha}$ is undecidable even if the only arithmetic
function in $\Prog$ is $+$ \cite{DBLP:journals/csur/DantsinEGV01}.

\myparagraph{Presburger arithmetic}
is first-order logic with constants $0$ and $1$, functions $+$ and $-$,
equality, and the comparison predicates $<$ and $\le$, interpreted over all
integers $\mathbb{Z}$. The complexity of checking sentence validity (i.e.,
whether the sentence is true in all models of Presburger arithmetic) is known
when the number of quantifier alternations and/or the number of variables in
each quantifier block are fixed
\cite{DBLP:journals/tcs/Berman80,DBLP:journals/tcs/Gradel88,DBLP:journals/mst/Schoning97,DBLP:conf/csl/Haase14}.


\section{Limit Programs}\label{sec:limit-programs}

Towards introducing a decidable fragment of $\DLog$ for data analysis, we first
note that the undecidability proof of (plain) $\DLog$ outlined by
\citeA{DBLP:journals/csur/DantsinEGV01} uses atoms with at least two numeric
terms. Thus, to motivate introducing our fragment, we first prove that
undecidability holds even if atoms contain at most one numeric term. The proof
uses a reduction from the halting problem for deterministic Turing machines. To
ensure that each standard atom in $\Prog$ has at most one numeric term,
combinations of a time point and a tape position are encoded using a single
integer.

\begin{restatable}{theorem}{factentundecidable}\label{thm:fact-ent-undecidable}
    For $\Prog$ a $\DLog$ program and $\alpha$ a fact, checking ${\Prog \models
    \alpha}$ is undecidable even if\/ $\Prog$ contains no $\times$ or $-$ and
    each standard atom in $\Prog$ has at most one numeric term.
\end{restatable}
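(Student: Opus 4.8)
The plan is to reduce from the halting problem for deterministic Turing machines, which is undecidable. Given a deterministic TM $M$ and an input $w$, I would construct a $\DLog$ program $\Prog_M$ and a fact $\alpha$ (say, a nullary predicate $\mathit{Halt}$) such that $\Prog_M \models \alpha$ if and only if $M$ halts on $w$. The constraints force me to encode the entire computation history using predicates each of which carries at most one numeric term and uses only $+$ (no $\times$, no $-$). The central idea, as hinted in the excerpt, is to pack a time point $t$ and a tape position $p$ into a \emph{single} integer via a pairing function. A standard choice is the diagonal (Cantor) pairing, but since I must avoid multiplication and subtraction, a cleaner route is to fix a sufficiently large base/stride and encode the pair $\langle t,p\rangle$ as $c = t\cdot K + p$; however $\cdot$ is multiplication. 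To stay within $+$ only, I would instead generate the needed index values additively: I maintain auxiliary ``successor'' predicates that let me step from one encoded cell to the next (increment $p$ by $1$, i.e. $c \mapsto c+1$) and from one configuration to the next (advance $t$, i.e. $c \mapsto c + K$ for a fixed constant $K$), so all arithmetic reduces to adding the constants $1$ and $K$.

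First I would set up predicates describing the tape and head. For each tape symbol $a$ I use a unary numeric predicate $\mathit{Sym}_a(c)$ meaning ``the cell with packed index $c$ holds symbol $a$'', and for each control state $q$ a predicate $\mathit{State}_q(c)$ meaning ``at the configuration/position encoded by $c$ the head scans this cell in state $q$''. The key invariant to maintain is that $c$ ranges over a well-formed grid: position $p$ within a configuration runs $0,1,\dots$ up to the space bound reachable in the current step, and the time coordinate advances by the fixed stride $K$. I would seed the computation with rules asserting the initial configuration: facts $\mathit{Sym}_{w_i}(i)$ for the input cells, blanks elsewhere (generated on demand as the head moves right), and $\mathit{State}_{q_0}(0)$ for the initial state scanning cell $0$.

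Second, and this is the heart of the construction, I would write the transition rules. For a transition $\delta(q,a) = (q',a',R)$ I want a rule of the shape
\begin{align*}
    \mathit{State}_q(c) \wedge \mathit{Sym}_a(c)
        &\to \mathit{Sym}_{a'}(c) \\
    \mathit{State}_q(c) \wedge \mathit{Sym}_a(c)
        &\to \mathit{State}_{q'}(c + K + 1),
\end{align*}
so that the head moves one cell right ($+1$) in the next configuration ($+K$). A left move uses $c + K - 1$, which is forbidden; I sidestep subtraction by encoding positions so that a left move corresponds to adding a \emph{different} fixed nonnegative constant, which I can arrange by interleaving or by using two coordinates whose combined stride is always incremented. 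The genuinely delicate point is \emph{frame propagation}: cells not under the head must be copied unchanged from configuration $t$ to $t+1$, i.e. $\mathit{Sym}_a(c) \to \mathit{Sym}_a(c+K)$, but this must fire only for cells distinct from the head's cell, and comparison atoms over a single numeric variable are too weak to say ``$c$ is the head position at time $t$'' without a second numeric term in the atom. Circumventing this with only one numeric term per atom is the main obstacle.

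To overcome it, I would exploit the one permitted numeric argument together with the packing so that a single integer simultaneously locates a cell in space \emph{and} time, making ``same row, shifted column'' expressible as a constant additive offset, and I would encode the head's presence as a separate propagating marker so that copying can be made unconditional on the symbol predicates while correctness is enforced by the marker's trajectory. Concretely, I would let the state/head marker ``consume'' and ``rewrite'' exactly one cell per step, and let all other cells copy forward by the stride rule; because the machine is deterministic, exactly one head marker exists per configuration, so the unconditional copy never conflicts with the rewrite. Finally I add a rule $\mathit{State}_{q_f}(c) \to \mathit{Halt}$ for the accepting/halting state $q_f$, and set $\alpha = \mathit{Halt}$. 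Monotonicity and the least-fixpoint semantics of $\DLog$ guarantee that the least model contains exactly the reachable configuration facts, so $\Prog_M \models \mathit{Halt}$ holds precisely when $M$ reaches $q_f$ on $w$, i.e. when $M$ halts. I expect the correctness argument — showing the least model faithfully simulates the computation and that the single-numeric-term, $+$-only restriction does not let spurious facts leak in — to require the most care, whereas the reduction's definability is comparatively routine.
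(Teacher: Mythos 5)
Your overall strategy---reduce from the halting problem and pack a time point and a tape position into a single integer---matches the paper's, but two of your key design choices fail. First, the fixed stride $K$: the halting problem gives no a priori space bound, so for any fixed $K$ the head eventually visits more than $K$ cells and the integer blocks assigned to consecutive configurations collide. The paper avoids this by letting the stride grow geometrically: time point $i$ is encoded as $2^i$ (generated additively by a rule $\mathit{Time}(x) \to \mathit{Time}(x+x)$), position $j$ at time $i$ is encoded as $2^i + j$ with $0 \le j < 2^i$, and the step to the next configuration is the \emph{variable} offset $x+y$ where $x = 2^i$; since the head moves at most one cell per step, $j < 2^i$ always holds and the encoding never overflows its block. (The left move, which you leave vague, is handled without $-$ by introducing a fresh variable $u$ ranging over an all-positive-integers predicate together with the comparison $x+y \doteq u+1$.)

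Second, and more seriously, your frame-propagation workaround is unsound. Datalog is monotone: facts are never consumed, so the ``unconditional copy'' rule $\mathit{Sym}_a(c) \to \mathit{Sym}_a(c+K)$ also copies the \emph{old} symbol of the cell under the head into the next configuration, where the rewrite rule has already placed the new one. That cell then holds two symbols; when the head later revisits it, both transitions fire, the least model accumulates spurious configurations, and $\mathit{Halt}$ can be derived for a non-halting machine---breaking exactly the direction of the reduction you need. Determinism of $M$ does not help, because the conflict is between the copy rule and the rewrite rule, not between two transitions. Moreover, the obstacle you invoke to justify this workaround is not real: the theorem restricts only \emph{standard} atoms to one numeric term, while comparison atoms may freely relate two numeric variables drawn from different body atoms. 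The paper's copy rule therefore simply carries the guard $(y \not\doteq z)$ (split into two rules using $y < z$ and $z < y$), comparing the cell index $y$ from $\mathit{Tape}(v,y)$ with the head position $z$ from $\mathit{Pos}(z)$, so that copying fires exactly on the cells not under the head. With these two repairs your construction essentially becomes the paper's.
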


We next introduce \emph{limit $\DLog$}, where \emph{limit predicates} keep
bounds on numeric values. This language can be seen as either a semantic or a
syntactic restriction of $\DLog$.

\begin{definition}\label{def:limit-programs}
    In \emph{limit $\DLog$}, a predicate is either an \emph{object predicate}
    with no numeric positions, or a \emph{numeric predicate} where only the
    last position is numeric. A numeric predicate is either an \emph{ordinary
    numeric predicate} or a \emph{limit predicate}, and the latter is either a
    $\tmin$ or a $\tmax$ predicate. Atoms with object predicates are
    \emph{object atoms}, and analogously for other types of atoms/predicates. A
    $\DLog$ rule $r$ is a \emph{limit ($\DLog$) rule} if (i)~${\body{r} =
    \emptyset}$, or (ii)~each atom in $\sbody{r}$ is an object, ordinary
    numeric, or limit atom, and $\head{r}$ is an object or a limit atom. A
    \emph{limit ($\DLog$) program} $\Prog$ is a program containing only limit
    rules; and $\Prog$ is \emph{homogeneous} if it does not contain both
    $\tmin$ and $\tmax$ predicates.
\end{definition}

In the rest of this paper we make three simplifying assumptions. First, numeric
atoms occurring in a rule body are function-free (but comparison atoms and the
head can contain arithmetic functions). Second, each numeric variable in a rule
occurs in at most one standard body atom. Third, distinct rules in a program
use different variables. The third assumption is clearly w.l.o.g.\ because all
variables are universally quantified so their names are immaterial. Moreover,
the first two assumptions are w.l.o.g.\ as well since, for each rule, there
exists a logically equivalent rule that satisfies these assumptions. In
particular, we can replace an atom such as ${A(\mathbf t,m_1 + m_2)}$ with
conjunction
\begin{displaymath}
\begin{array}{@{}l@{}}
    A(\mathbf t,m) \wedge I(m_1) \wedge I(m_2) \; \wedge \\
    \qquad\qquad\qquad (m \leq m_1 + m_2) \wedge (m_1 + m_2 \leq m) \\
\end{array}
\end{displaymath}
where $m$ is a fresh variable and $I$ is a fresh predicate axiomatised to hold
on all integers as follows:
\begin{align*}
    \to I(0) \qquad I(m) \to I(m+1) \qquad I(m) \to I(m-1)
\end{align*}
Also, we can replace atoms ${A_1(\mathbf t_1,m) \wedge A_2(\mathbf t_2,m)}$
with conjunction ${A_1(\mathbf t_1,m) \wedge A_2(\mathbf t_2,m') \wedge (m \leq
m') \wedge (m' \leq m)}$, where $m'$ is a fresh variable.

Intuitively, a limit fact $B(\mathbf a,k)$ says that the value of $B$ for a
tuple of objects $\mathbf{a}$ is at least $k$ (if $B$ is $\tmax$) or at most
$k$ (if $B$ is $\tmin$). For example, a fact $sp(v,k)$ in our shortest path
example from Section~\ref{sec:introduction} says that node $v$ is reachable
from $v_0$ via a path with cost at most $k$. To capture this intended meaning,
we require interpretations $I$ to be \emph{closed} for limit predicates---that
is, whenever $I$ contains a limit fact $\alpha$, it also contains all facts
implied by $\alpha$ according to the predicate type. In our example, this
captures the observation that the existence of a path from $v_0$ to $v$ of cost
at most $k$ implies the existence of such a path of cost at most $k'$ for each
${k' \geq k}$.

\begin{definition}\label{def:limit-closed-int}
    An interpretation $I$ is \emph{limit-closed} if, for each limit fact
    ${B(\mathbf a,k) \in I}$ where $B$ is a $\tmin$ (resp.\ $\tmax$) predicate,
    ${B(\mathbf a,k') \in I}$ holds for each integer $k'$ with ${k \leq k'}$
    (resp.\ ${k' \leq k}$). An interpretation $I$ is a \emph{model} of a limit
    program $\Prog$ if ${I \models \Prog}$ and $I$ is limit-closed. The notion
    of entailment is modified to take into account only limit-closed models.
\end{definition}

The semantics of limit predicates in a limit $\DLog$ program $\Prog$ can be
axiomatised explicitly by extending $\Prog$ with the following rules, where $Z$
is a fresh predicate. Thus, limit $\DLog$ can be seen as a syntactic fragment
of $\DLog$.
\begin{displaymath}
\begin{array}{@{}c@{}}
    \to Z(0) \qquad Z(m) \to Z(m+1) \qquad Z(m) \to Z(m-1) \\[0.5ex]
    B(\mathbf{x},m) \wedge Z(n) \wedge (m \leq n) \to B(\mathbf{x},n) \\
    \hspace{3.7cm} \text{for each } \tmin \text{ predicate } B\text{ in } \Prog \\[0.5ex]
    B(\mathbf{x},m) \wedge Z(n) \wedge (n \leq m) \to B(\mathbf{x},n) \\
    \hspace{3.7cm} \text{for each } \tmax \text{ predicate } B \text{ in } \Prog \\
\end{array}
\end{displaymath}

Each limit program can be reduced to a homogeneous program; however, for the
sake of generality, in our technical results we do not require programs to be
homogeneous.

\begin{restatable}{proposition}{homogeneous}\label{prop:homogeneous}
   For each limit program $\Prog$ and fact $\alpha$, a homogeneous program
   $\Prog'$ and fact $\alpha'$ can be computed in linear time such that ${\Prog
   \models \alpha}$ if and only if ${\Prog' \models \alpha'}$.
\end{restatable}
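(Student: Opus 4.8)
The plan is to eliminate $\tmax$ predicates in favour of fresh $\tmin$ predicates by negating numeric arguments, so that the resulting program uses only $\tmin$ predicates (together with object and ordinary numeric ones) and is therefore homogeneous. Concretely, for each $\tmax$ predicate $B$ occurring in $\Prog$ I would introduce a fresh $\tmin$ predicate $\hat{B}$ with the same object positions, exploiting the observation that the $\tmax$ assertion ``the value for $\mathbf{a}$ is at least $k$'', i.e.\ $B(\mathbf{a},k)$, is equivalent to the $\tmin$ assertion ``the value is at most $-k$'', i.e.\ $\hat{B}(\mathbf{a},-k)$, since $-k' \ge -k$ iff $k' \le k$.

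For the rule transformation I would process each rule $r$ locally. By the simplifying assumptions, every $\tmax$ body atom has the shape $B(\mathbf{t},m)$ with $m$ a variable that occurs in no other standard body atom. Let $\sigma$ be the substitution mapping each such $m$ to $-m'$ for a fresh numeric variable $m'$. I then replace each $\tmax$ body atom $B(\mathbf{t},m)$ by $\hat{B}(\mathbf{t},m')$, apply $\sigma$ to every comparison atom in $\cbody{r}$, and transform the head: a $\tmax$ head $C(\mathbf{u},s)$ becomes $\hat{C}(\mathbf{u},-\sigma(s))$, while any other head is left as $\sigma(\head{r})$. The term $-\sigma(s)$ is a function term, but it appears only in the head, where functions are permitted; the new body numeric atoms remain function-free, and safety is preserved because $m'$ now occurs in $\hat{B}(\mathbf{t},m')$. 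The fact is handled in the same way: $\alpha' = \hat{B}(\mathbf{a},-k)$ if $\alpha = B(\mathbf{a},k)$ with $B$ a $\tmax$ predicate, and $\alpha' = \alpha$ otherwise. Since each rule and the fact are rewritten by purely local sign flips and renamings, $\Prog'$ and $\alpha'$ are computed in time linear in $\ssize{\Prog}$, and $\Prog'$ contains no $\tmax$ predicate, hence is homogeneous.

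For correctness I would set up a bijection between limit-closed models. Given an interpretation $I$ over the vocabulary of $\Prog$, let $\nu(I)$ be obtained by replacing each $\tmax$ fact $B(\mathbf{a},k)$ by $\hat{B}(\mathbf{a},-k)$ and leaving all other facts unchanged; this is a bijection from interpretations of $\Prog$ to interpretations of $\Prog'$. The key lemma is that $I$ is a limit-closed model of $\Prog$ iff $\nu(I)$ is a limit-closed model of $\Prog'$, and $I \models \alpha$ iff $\nu(I) \models \alpha'$. The limit-closure equivalence is immediate from the displayed sign-flip equivalence: closure of a $\tmax$ predicate $B$ in $I$ (adding all $k' \le k$) corresponds exactly to closure of the $\tmin$ predicate $\hat{B}$ in $\nu(I)$ (adding all $\ell \ge -k$). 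For rule satisfaction, ground instances of $r$ under an assignment $\theta$ correspond to ground instances of the transformed rule under the assignment sending each fresh $m'$ to $-\theta(m)$ and agreeing with $\theta$ elsewhere; under this correspondence each standard body atom, each comparison atom, and the head evaluate identically in $I$ and $\nu(I)$. Taking the conjunction over all rules yields $I \models \Prog$ iff $\nu(I) \models \Prog'$, and since $\nu$ is a bijection on limit-closed models preserving $\alpha \leftrightarrow \alpha'$, we conclude $\Prog \models \alpha$ iff $\Prog' \models \alpha'$.

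The step I expect to demand the most care is the syntactic bookkeeping: checking that substituting $-m'$ for $m$ keeps every body numeric atom function-free while maintaining safety, and that a numeric head variable originating from a $\tmax$ body atom is propagated correctly through $\sigma$ (including the case where the same $\tmax$ predicate appears both in the body and the head). Once these details are pinned down, the semantic bijection argument is routine.
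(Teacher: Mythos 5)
Your proposal is correct and takes essentially the same route as the paper's proof: a sign-flipping translation between the two limit types (fresh predicates with negated numeric arguments, a substitution sending each affected body variable to the negation of a fresh variable, and a negated head term) combined with a bijection on limit-closed interpretations; the paper merely goes in the opposite direction, replacing $\tmin$ by fresh $\tmax$ predicates, which it notes is without loss of generality. The one detail you gloss over is that a limit body atom may carry an integer constant rather than a variable---the stated simplifying assumptions do not exclude this---so, as in the paper's proof, such an atom $B(\mathbf{t},k)$ must be rewritten directly to $\hat{B}(\mathbf{t},-k)$; this is a trivial extra case that does not affect the argument.
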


Intuitively, program $\Prog'$ in Proposition~\ref{prop:homogeneous} is obtained
by replacing all $\tmin$ (or all $\tmax$) predicates in $\Prog$ by fresh
$\tmax$ (resp.\ $\tmin$) predicates and negating their numeric arguments.

In Section~\ref{sec:introduction} we have shown that limit $\DLog$ can compute
the cost of shortest paths in a graph. We next present further examples of data
analysis tasks that our formalism can handle. In all examples, we assume that
all objects in the input are arranged in an arbitrary linear order using facts
${\mathit{first}(a_1)}$, ${\mathit{next}(a_1,a_2)}$, $\dots$,
${\mathit{next}(a_{n-1},a_n)}$; we use this order to simulate aggregation by
means of recursion.

\begin{example}\label{ex:diffusion}
Consider a social network where agents are connected by the `follows' relation.
Agent $a_s$ introduces (tweets) a message, and each agent $a_i$ retweets the
message if at least $k_{a_i}$ agents that $a_i$ follows tweet the message,
where $k_{a_i}$ is a positive threshold uniquely associated with $a_i$. Our
goal is to determine which agents tweet the message eventually. To achieve this
using limit $\DLog$, we encode the network structure in a dataset
$\Dat_{\mathit{tw}}$ containing facts $\mathit{follows}(a_i,a_j)$ if $a_i$
follows $a_j$, and ordinary numeric facts $\mathit{th}(a_i,k_{a_i})$ if $a_i$'s
threshold is $k_{a_i}$. Program $\Prog_{\mathit{tw}}$, containing rules
\eqref{ex:diffusion:init}--\eqref{ex:diffusion:publish}, encodes message
propagation, where $\mathit{nt}$ is a $\tmax$ predicate.
\begin{align}
                                                                        & \to \mathit{tw}(a_s)      \label{ex:diffusion:init} \\
    \mathit{follows}(x,y') \land \mathit{first}(y)                      & \to \mathit{nt}(x,y,0)    \label{ex:diffusion:first:1} \\
    \mathit{follows}(x,y) \land \mathit{first}(y) \land \mathit{tw}(y) & \to \mathit{nt}(x,y,1)     \label{ex:diffusion:first:2} \\
    \mathit{nt}(x,y',m) \land \mathit{next}(y',y)                       & \to \mathit{nt}(x,y,m)    \label{ex:diffusion:next:1} \\
    \begin{array}{@{}r@{}}
        \mathit{nt}(x,y',m) \land \mathit{next}(y',y) \\
        \mathit{follows}(x,y) \land \mathit{tw}(y) \\
    \end{array}                                                         &
    \begin{array}{@{\;}l@{}}
        \land \\
        \to \mathit{nt}(x,y,m+1) \\
    \end{array}                                                                                     \label{ex:diffusion:next:2} \\
    \mathit{th}(x,m) \land \mathit{nt}(x,y,n) \land (m \le n)           & \to \mathit{tw}(x)        \label{ex:diffusion:publish}
\end{align}
Specifically, ${\Prog_{\mathit{tw}} \cup \Dat_{\mathit{tw}} \models
\mathit{tw}(a_i)}$ iff $a_i$ tweets the message. Intuitively,
$\mathit{nt}(a_i,a_j,m)$ is true if, out of agents ${\{ a_1, \dots, a_j \}}$
(according to the order), at least $m$ agents that $a_i$ follows tweet the
message. Rules \eqref{ex:diffusion:first:1} and \eqref{ex:diffusion:first:2}
initialise $\mathit{nt}$ for the first agent in the order; $\mathit{nt}$ is a
$\tmax$ predicate, so if the first agent tweets the message, rule
\eqref{ex:diffusion:first:2} `overrides' rule \eqref{ex:diffusion:first:1}.
Rules \eqref{ex:diffusion:next:1} and \eqref{ex:diffusion:next:2} recurse over
the order to compute $\mathit{nt}$ as stated above.
\end{example}

\begin{example}\label{ex:counting-paths}
Limit $\DLog$ can also solve the problem of counting paths between pairs of
nodes in a directed acyclic graph. We encode the graph in the obvious way as a
dataset $\Dat_{\mathit{cp}}$ that uses object predicates $\mathit{node}$ and
$\mathit{edge}$. Program $\Prog_{\mathit{cp}}$, consisting of rules
\eqref{eq:cp1}--\eqref{eq:cp6} where $\mathit{np}$ and $\mathit{np'}$ are
$\tmax$ predicates, then counts the paths.
\begin{align} 
    \mathit{node}(x)                                                    & \to \mathit{np}(x,x,1)    \label{eq:cp1} \\
    \mathit{node}(x) \land \mathit{node}(y) \land \mathit{first}(z)     & \to \mathit{np'}(x,y,z,0) \label{eq:cp2} \\
    \begin{array}{@{}r@{}}
        \mathit{edge}(x,z) \\
        \mathit{np}(z,y,m) \land \mathit{first}(z) \\
    \end{array}                                                         &
    \begin{array}{@{\;}l@{}}
        \land \\
        \to \mathit{np'}(x,y,z,m) \\
    \end{array}                                                                                     \label{eq:cp3} \\
    \mathit{np'}(x,y,z',m) \land \mathit{next}(z',z)                    & \to\mathit{np'}(x,y,z,m)  \label{eq:cp4} \\
    \begin{array}{@{}r@{}}
        \mathit{np'}(x,y,z',m) \land \mathit{next}(z',z) \\
        \mathit{edge}(x,z) \land \mathit{np}(z,y,n)
    \end{array}                                                         & 
    \begin{array}{@{\;}l@{}}
        \land \\
        \to \mathit{np'}(x,y,z,m+n) \\
    \end{array}                                                                                     \label{eq:cp5} \\
    \mathit{np'}(x,y,z,m)                                               & \to \mathit{np}(x,y,m)    \label{eq:cp6}
\end{align}
Specifically, ${\Prog_{\mathit{cp}} \cup \Dat_{\mathit{cp}} \models
\mathit{np}(a_i,a_j,k)}$ iff at least $k$ paths exist from node $a_i$ to node
$a_j$. Intuitively, $\mathit{np'}(a_i,a_j,a_k,m)$ is true if $m$ is at least
the sum of the number of paths from each ${a' \in \{ a_1, \dots, a_k \}}$
(according to the order) to $a_j$ for which there exists an edge from $a_i$ to
$a'$. Rule \eqref{eq:cp1} says that each node has one path to itself. Rule
\eqref{eq:cp2} initialises aggregation by saying that, for the first node $z$,
there are zero paths from $x$ to $y$, and rule \eqref{eq:cp3} overrides this if
there exists an edge from $x$ to $z$. Finally, rule \eqref{eq:cp4} propagates
the sum for $x$ to the next $z$ in the order, and rule \eqref{eq:cp5} overrides
this if there is an edge from $x$ to $z$ by adding the number of paths from
$z'$ and $z$ to $y$.
\end{example}

\begin{example}\label{ex:bandwidth}
Assume that, in the graph from Example~\ref{ex:counting-paths}, each node $a_i$
is associated with a bandwidth $b_{a_i}$ limiting the number of paths going
through $a_i$ to at most $b_{a_i}$. To count the paths compliant with the
bandwidth requirements, we extend $\Dat_{\mathit{cp}}$ to dataset
$\Dat_{\mathit{bcp}}$ that additionally contains an ordinary numeric fact
$\mathit{bw}(a_i,b_{a_i})$ for each node $a_i$, and we define
$\Prog_{\mathit{bcp}}$ by replacing rule \eqref{eq:cp6} in
$\Prog_{\mathit{cp}}$ with the following rule.
\begin{align*}
    \mathit{np'}(x,y,z,m) \land \mathit{bw}(z,n) \land (m \le n) \to \mathit{np}(x,y,m)
\end{align*}
Then, ${\Prog_{\mathit{bcp}} \cup \Dat_{\mathit{bcp}} \models
\mathit{np}(a_i,a_j,k)}$ iff there exist at least $k$ paths from node $a_i$ to
node $a_j$, where the bandwidth requirement is satisfied for all nodes on each
such path.
\end{example} 


\section{Fixpoint Characterisation of Entailment}\label{sec:fixpoints}

Programs are often grounded to eliminate variables and thus simplify the
presentation. In limit $\DLog$, however, numeric variables range over integers,
so a grounding can be infinite. Thus, we first specialise the notion of a
grounding.

\begin{definition}\label{def:semi-grounding}
    A rule $r$ is \emph{semi-ground} if each variable in $r$ is a numeric
    variable that occurs in $r$ in a limit body atom. A limit program $\Prog$
    is \emph{semi-ground} if all of its rules are semi-ground. The
    \emph{semi-grounding} of\/ $\Prog$ contains, for each ${r \in \Prog}$, each
    rule obtained from $r$ by replacing each variable not occurring in $r$ in a
    numeric argument of a limit atom with a constant of\/ $\Prog$.
\end{definition}

Obviously, ${\Prog \models \alpha}$ if and only if ${\Prog' \models \alpha}$
for $\Prog'$ the semi-grounding of $\Prog$. We next characterise entailment of
limit programs by \emph{pseudo-interpretations}, which compactly represent
limit-closed interpretations. If a limit-closed interpretation $I$ contains
$B(\mathbf b,k)$ where $B$ is a $\tmin$ predicate, then either the \emph{limit}
value ${\ell \leq k}$ exists such that ${B(\mathbf b,\ell) \in I}$ and
${B(\mathbf b,k') \not\in I}$ for ${k' < \ell}$, or ${B(\mathbf b,k') \in I}$
holds for all ${k' \leq k}$, and dually for $B$ a $\tmax$ predicate. Thus, to
characterise the value of $B$ on a tuple of objects ${\mathbf b}$ in $I$, we
just need the limit value, or information that no such value exists.

\begin{definition}\label{def:pseudo-interpretation}
    A \emph{pseudo-interpretation} $J$ is a set of facts over integers extended
    with a special symbol $\infty$ such that ${k = k'}$ holds for all limit
    facts $B(\mathbf b,k)$ and $B(\mathbf b,k')$ in $I$.
\end{definition}

Limit-closed interpretations correspond naturally and one-to-one to
pseudo-interpretations, so we can recast the notions of satisfaction and model
using pseudo-interpretations. Unlike for interpretations, the number of facts
in a pseudo-model of a semi-ground limit program $\Prog$ can be bounded by
$|\Prog|$.

\begin{definition}\label{def:pseudo-model}
    A limit-closed interpretation $I$ \emph{corresponds} to a
    pseudo-interpretation $J$ if $I$ contains exactly all object and ordinary
    numeric facts of $J$, and, for each limit predicate $B$, each tuple of
    objects ${\mathbf b}$, and each integer $\ell$, (i)~${B(\mathbf b,k) \in
    I}$ for all $k$ if and only if ${B(\mathbf b,\infty) \in J}$, and
    (ii)~${B(\mathbf b,\ell) \in I}$ and ${B(\mathbf b,k) \not\in I}$ for all
    ${k < \ell}$ (resp.\ ${\ell < k}$) and $B$ is a $\tmin$ (resp.\ $\tmax$)
    predicate if and only if ${B(\mathbf b,\ell) \in J}$.

    Let $J$ and $J'$ be pseudo-interpretations corresponding to interpretations
    $I$ and $I'$. Then, $J$ \emph{satisfies} a ground atom $\alpha$, written
    ${J \models \alpha}$, if ${I \models \alpha}$; $J$ is a \emph{pseudo-model}
    of a program $\Prog$, written ${J \models \Prog}$, if ${I \models \Prog}$;
    finally, ${J \sqsubseteq J'}$ holds if ${I \subseteq I'}$.
\end{definition}

\begin{example}
Let $I$ be the interpretation consisting of $A(1)$, $A(2)$, $B(a,k)$ for ${k
\leq 5}$, and $B(b,k)$ for ${k \in \mathbb Z}$, where $A$ is an ordinary
numeric predicate, $B$ is a $\tmax$ predicate, and $a$ and $b$ are objects.
Then, $\set{A(1),A(2),B(a,5),B(b,\infty)}$ is the pseudo-interpretation
corresponding to $I$.
\end{example}

We next introduce the \emph{immediate consequence} operator
$\ILFPStepOp{\Prog}{}$ of a limit program $\Prog$ on pseudo-interpretations. We
assume for simplicity that $\Prog$ is semi-ground. To apply a rule ${r \in
\Prog}$ to a pseudo-interpretation $J$ while correctly handling limit atoms,
operator $\ILFPStepOp{\Prog}{}$ converts $r$ into a linear integer constraint
$\constraints{r}{J}$ that captures all ground instances of $r$ applicable to
the limit-closed interpretation $I$ corresponding to $J$. If
$\constraints{r}{J}$ has no solution, $r$ is not applicable to $J$. Otherwise,
$\head{r}$ is added to $J$ if it is not a limit atom; and if $\head{r}$ is a
$\tmin$ ($\tmax$) atom ${B(\mathbf b,m)}$, then the minimal (maximal) solution
$\ell$ for $m$ in $\constraints{r}{J}$ is computed, and $J$ is updated such
that the limit value of $B$ on ${\mathbf b}$ is at least (at most)
$\ell$---that is, the application of $r$ to $J$ keeps only the `best' limit
value.

\begin{definition}\label{def:rule-app}
    For $\Prog$ a semi-ground limit program, ${r \in \Prog}$, and $J$ a
    pseudo-interpretation, $\constraints{r}{J}$ is the conjunction of
    comparison atoms containing (i)~$\cbody{r}$; (ii)~${(0 < 0)}$ if an object
    or ordinary numeric atom ${\alpha \in \sbody{r}}$ exists with ${\alpha
    \not\in J}$, or a limit atom ${B(\mathbf b,s) \in \sbody{r}}$ exists with
    ${B(\mathbf b,\ell) \not\in J}$ for each $\ell$; and (iii)~${(\ell \leq
    s)}$ (resp.\ ${(s \leq \ell)}$) for each $\tmin$ (resp.\ $\tmax$) atom
    ${B(\mathbf b,s) \in \sbody{r}}$ with ${B(\mathbf b,\ell) \in J}$ and
    ${\ell \neq \infty}$. Rule $r$ is \emph{applicable} to $J$ if
    $\constraints{r}{J}$ has an integer solution.
    
    Assume $r$ is applicable to $J$. If $\head{r}$ is an object or ordinary
    numeric atom, let ${\mathsf{hd}(r,J) = \head{r}}$. If ${\head{r} =
    B(\mathbf b, s)}$ is a $\tmin$ (resp.\ $\tmax$) atom, the \emph{optimum
    value} $\mathsf{opt}(r,J)$ is the smallest (resp.\ largest) value of $s$ in
    all solutions to $\constraints{r}{J}$, or $\infty$ if no such bound on the
    value of $s$ in the solutions to $\constraints{r}{J}$ exists; moreover,
    ${\mathsf{hd}(r,J) = B(\mathbf b, \mathsf{opt}(r,J))}$.
    
    Operator $\ILFPStep{\Prog}{J}$ maps $J$ to the smallest (w.r.t.\
    $\sqsubseteq$) pseudo-interpretation satisfying $\mathsf{hd}(r,J)$ for each
    ${r \in \Prog}$ applicable to $J$. Finally, ${\ILFPStepOp{\Prog}{0} =
    \emptyset}$, and ${\ILFPStepOp{\Prog}{n} =
    \ILFPStep{\Prog}{\ILFPStepOp{\Prog}{n-1}}}$ for ${n > 0}$.
\end{definition}

\begin{example}
Let $r$ be ${A(x) \land (2 \leq x) \to B(x+1)}$ with $A$ and $B$ $\tmax$
predicates. Then, ${\constraints{r}{\emptyset}=(2 \leq x) \land (0<0)}$ does
not have a solution, and therefore rule $r$ is not applicable to the empty
pseudo-interpretation. Moreover, for ${J = \set{A(3)}}$, conjunction
${\constraints{r}{J} = (2 \leq x) \land (x \leq 3)}$ has two
solutions---$\set{x \mapsto 2}$ and $\set{x \mapsto 3}$---and therefore rule
$r$ is applicable to $J$. Finally, $B$ is a $\tmax$ predicate, and so
${\mathsf{opt}(r,J) = \max\set{2+1,3+1} = 4}$, and ${\mathsf{hd}(r,J) = B(4)}$.
Consequently, ${\ILFPStep{\set{r}}{J} = \set{B(4)}}$.
\end{example}

\begin{restatable}{lemma}{pseudomodelfixedpoint}\label{lem:imm-cons-op-props}
    For each semi-ground limit program $\Prog$, operator $\ILFPStepOp{\Prog}{}$
    is monotonic w.r.t.\ $\sqsubseteq$. Moreover, ${J \models \Prog}$ if and
    only if ${\ILFPStep{\Prog}{J} \sqsubseteq J}$ for each
    pseudo-interpretation $J$.
\end{restatable}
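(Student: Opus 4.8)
The plan is to reduce both claims to a single bridging fact that makes the operator's syntactic manipulation of $\constraints{r}{J}$ coincide with the semantics of the limit-closed interpretation $I$ corresponding to $J$. Concretely, for a semi-ground rule $r$ and a valuation $\sigma$ of its (numeric) variables, I would show that $\sigma$ solves $\constraints{r}{J}$ if and only if ${I \models \body{r}\sigma}$. This is proved by inspecting the three clauses of the definition of $\constraints{r}{J}$ against limit-closed semantics: clause~(i) contributes $\cbody{r}$, which $\sigma$ satisfies exactly when ${I \models \cbody{r}\sigma}$; clause~(ii) inserts the unsatisfiable atom ${(0<0)}$ precisely when some object or ordinary body atom is absent from $J$ (equivalently from $I$, since $I$ contains exactly those facts of $J$) or some limit body atom ${B(\mathbf b,s)}$ has an empty extension at $\mathbf b$, in which case no $\sigma$ can satisfy the body; and clause~(iii) adds ${(\ell \le s)}$ (resp.\ ${(s \le \ell)}$) for a $\tmin$ (resp.\ $\tmax$) atom with finite limit value $\ell$, which holds for $\sigma$ iff ${B(\mathbf b,s\sigma) \in I}$ by upward (resp.\ downward) closure, while the omitted case ${\ell = \infty}$ matches ${B(\mathbf b,k) \in I}$ for all $k$. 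Granting this, the solutions of $\constraints{r}{J}$ are exactly the body-satisfying ground instances of $r$ in $I$.

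For monotonicity, suppose ${J \sqsubseteq J'}$, i.e.\ ${I \subseteq I'}$. First I would observe that this translates into: every object/ordinary fact of $J$ is in $J'$; every nonempty limit extension stays nonempty; and each $\tmin$ (resp.\ $\tmax$) limit value can only decrease (resp.\ increase), possibly to $\infty$. Consequently $\constraints{r}{J'}$ arises from $\constraints{r}{J}$ by only weakening or deleting the clause-(iii) inequalities and never introducing a new ${(0<0)}$, so every solution of $\constraints{r}{J}$ is a solution of $\constraints{r}{J'}$. Hence any $r$ applicable to $J$ is applicable to $J'$, and, because a larger solution set can only lower a $\tmin$ optimum and raise a $\tmax$ optimum (or push either to $\infty$), we get that $\mathsf{opt}(r,J')$ dominates $\mathsf{opt}(r,J)$ in the sense of the predicate's type. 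It follows that the interpretation of $\ILFPStep{\Prog}{J'}$ satisfies $\mathsf{hd}(r,J)$ for every $r$ applicable to $J$; since $\ILFPStep{\Prog}{J}$ is by definition the $\sqsubseteq$-least pseudo-interpretation doing so, ${\ILFPStep{\Prog}{J} \sqsubseteq \ILFPStep{\Prog}{J'}}$.

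For the equivalence, both directions use the bridging fact together with minimality. For the forward direction I would assume ${I \models \Prog}$ and show ${J \models \mathsf{hd}(r,J)}$ for every applicable $r$, which gives ${\ILFPStep{\Prog}{J} \sqsubseteq J}$ by $\sqsubseteq$-minimality. If $\head{r}$ is a ground (object) atom this is immediate: a solution $\sigma$ exists and ${I \models \Prog}$ forces ${\head{r} \in I}$. If ${\head{r}=B(\mathbf b,s)}$ is a $\tmin$ atom, then every solution $\sigma$ yields ${B(\mathbf b,s\sigma) \in I}$; taking the solution attaining $\mathsf{opt}(r,J)$ (which exists over the integers when the optimum is finite) gives ${B(\mathbf b,\mathsf{opt}(r,J)) \in I}$, while an unbounded optimum forces ${B(\mathbf b,k) \in I}$ for arbitrarily small $k$ and hence, by closure, value $\infty$; the $\tmax$ case is dual. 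For the backward direction I would assume ${\ILFPStep{\Prog}{J} \sqsubseteq J}$ and verify each ground instance: if ${I \models \body{r}\sigma}$, the bridging fact makes $\sigma$ a solution of $\constraints{r}{J}$, so $r$ is applicable; then ${\ILFPStep{\Prog}{J} \models \mathsf{hd}(r,J)}$ and ${\ILFPStep{\Prog}{J} \sqsubseteq J}$ give ${\head{r}\sigma \in I}$---directly for a ground head, and for a $\tmin$ head because ${s\sigma \ge \mathsf{opt}(r,J)}$ places ${B(\mathbf b,s\sigma)}$ inside the upward-closed extension guaranteed by $\mathsf{hd}(r,J)$ (dually for $\tmax$).

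The main obstacle will be establishing the bridging fact uniformly, especially the bookkeeping around $\infty$ and the attainment of optima: I must ensure that the finite/$\infty$ case split in clauses~(ii)--(iii) lines up exactly with whether a limit extension is empty, finitely bounded, or all of $\mathbb Z$, and that the integer optimum used to define $\mathsf{opt}(r,J)$ is genuinely realised by a solution (so that the ``best'' value is actually entailed). Everything else is then a routine appeal to the $\sqsubseteq$-minimality of $\ILFPStep{\Prog}{J}$ and the monotone transfer of facts along ${I \subseteq I'}$.
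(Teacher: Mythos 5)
Your proof is correct, and its keystone---the equivalence between the integer solutions of $\constraints{r}{J}$ and the ground instances of $\body{r}$ satisfied by the limit-closed interpretation corresponding to $J$---is exactly the paper's Proposition~\ref{prop:applicability-characterisation}. Where you diverge is in what you do with that fact. The paper never argues about $\ILFPStepOp{\Prog}{}$ directly: it introduces a second operator $\ICFPStepOp{\Prog}{}$ on limit-closed interpretations (the standard Datalog immediate-consequence operator for the program extended with the limit-closure axioms), inherits monotonicity and the prefixpoint characterisation of models from plain Datalog (Lemma~\ref{lemma:operator:I}), and then proves a correspondence lemma (Lemma~\ref{lemma:icfp-ilfp-coincide}) stating that $\ICFPStep{\Prog}{I}$ corresponds to $\ILFPStep{\Prog}{J}$ whenever $I$ corresponds to $J$; the statement then follows by transfer. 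You instead prove both claims directly on pseudo-interpretations: monotonicity by observing that passing from $J$ to $J' \sqsupseteq J$ only weakens or deletes the clause-(iii) inequalities of $\constraints{r}{J}$ and never introduces $(0<0)$, so solution sets grow, applicability is preserved, and optima move in the direction of the head's type; and the prefixpoint characterisation by combining the bridging fact with the $\sqsubseteq$-minimality of $\ILFPStep{\Prog}{J}$. Your route is more self-contained and avoids the auxiliary operator; the paper's route pays for the correspondence lemma once and reuses it later (it also drives the proof that $\ICFPStepOp{\Prog}{\infty}$ and $\ILFPStepOp{\Prog}{\infty}$ correspond in Theorem~\ref{thm:pseudo-mat-entails}). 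The one obstacle you flag---attainment of the optimum---is a non-issue: the values $s\sigma$ ranging over integer solutions form a set of integers, so a finite supremum (infimum) is a maximum (minimum), which is exactly how $\mathsf{opt}(r,J)$ is defined; the unbounded case lines up with the $\infty$ entry of the pseudo-interpretation via limit-closure, as you note.
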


Monotonicity ensures existence of the \emph{closure}
$\ILFPStepOp{\Prog}{\infty}$ of $\Prog$---the least pseudo-interpretation such
that ${\ILFPStepOp{\Prog}{n} \sqsubseteq \ILFPStepOp{\Prog}{\infty}}$ for each
${n \geq 0}$. The following theorem characterises entailment and provides a
bound on the number of facts in the closure.

\begin{restatable}{theorem}{fixpoint}\label{thm:pseudo-mat-entails}
    For $\Prog$ a semi-ground limit program and $\alpha$ a fact, ${\Prog
    \models \alpha}$ if and only if ${\ILFPStepOp{\Prog}{\infty} \models
    \alpha}$; also, ${|\ILFPStepOp{\Prog}{\infty}| \leq |\Prog|}$; and ${J
    \models \Prog}$ implies ${\ILFPStepOp{\Prog}{\infty} \sqsubseteq J}$ for
    each pseudo-interpretation~$J$.
\end{restatable}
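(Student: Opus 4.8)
The plan is to establish three claims in turn: the fixpoint closure is itself a pseudo-model, it is the \emph{least} pseudo-model (which gives both the entailment characterisation and the $\sqsubseteq$-minimality assertion), and it has size at most $|\Prog|$. First I would show that $\ILFPStepOp{\Prog}{\infty}$ is a pseudo-model of $\Prog$. By Lemma~\ref{lem:imm-cons-op-props}, it suffices to verify that $\ILFPStep{\Prog}{\ILFPStepOp{\Prog}{\infty}} \sqsubseteq \ILFPStepOp{\Prog}{\infty}$, i.e.\ that the closure is a fixpoint of the operator. This follows from the construction of $\ILFPStepOp{\Prog}{\infty}$ as the least upper bound of the chain ${\ILFPStepOp{\Prog}{0} \sqsubseteq \ILFPStepOp{\Prog}{1} \sqsubseteq \cdots}$ together with monotonicity of $\ILFPStepOp{\Prog}{}$: standard Knaster--Tarski-style fixpoint reasoning on the complete lattice of pseudo-interpretations (ordered by $\sqsubseteq$) shows that the closure is a prefixpoint, hence by Lemma~\ref{lem:imm-cons-op-props} a pseudo-model.

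Next I would prove minimality: for every pseudo-model $J$ of $\Prog$, ${\ILFPStepOp{\Prog}{\infty} \sqsubseteq J}$. The argument is induction on $n$ to show ${\ILFPStepOp{\Prog}{n} \sqsubseteq J}$ for all $n$, whence the bound passes to the least upper bound $\ILFPStepOp{\Prog}{\infty}$. The base case ${\ILFPStepOp{\Prog}{0} = \emptyset \sqsubseteq J}$ is immediate. For the inductive step, assume ${\ILFPStepOp{\Prog}{n} \sqsubseteq J}$; I must show that every head fact $\mathsf{hd}(r,\ILFPStepOp{\Prog}{n})$ contributed by an applicable rule $r$ is subsumed by $J$. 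Here the key sublemma is that $\constraints{r}{\cdot}$ is monotone in the sense that whenever $r$ is applicable to $\ILFPStepOp{\Prog}{n}$ then the corresponding ground instance of $r$ fires against the limit-closed interpretation underlying $J$; since $J \models \Prog$, the relevant head value is already present in $J$, and the optimum value computed in the smaller pseudo-interpretation is no better than what $J$ records. This step relies on unfolding Definition~\ref{def:rule-app}, in particular on the correspondence (Definition~\ref{def:pseudo-model}) between the limit values stored in a pseudo-interpretation and the closed sets of facts in the interpretation it represents. Combining the two claims, $\ILFPStepOp{\Prog}{\infty}$ is the least pseudo-model; since entailment is defined over limit-closed models and these correspond one-to-one to pseudo-models, we get ${\Prog \models \alpha}$ iff $\alpha$ holds in the least pseudo-model, i.e.\ iff ${\ILFPStepOp{\Prog}{\infty} \models \alpha}$.

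Finally, the size bound ${|\ILFPStepOp{\Prog}{\infty}| \leq |\Prog|}$ exploits the semi-ground assumption. Each fact in the closure must be the head of some applicable rule, and because $\Prog$ is semi-ground, every rule head is either a ground object/ordinary-numeric atom or a limit atom ${B(\mathbf b, m)}$ whose object arguments $\mathbf b$ are fixed by the rule. Since a pseudo-interpretation stores at most one fact per object/ordinary atom and at most one limit value per pair $(B, \mathbf b)$, and distinct semi-ground rules account for distinct such entries, the number of facts cannot exceed the number of rules $|\Prog|$. I expect the minimality step to be the main obstacle, as it requires carefully matching the constraint-based rule application of Definition~\ref{def:rule-app}---with its optimum-value computation over all integer solutions---against ordinary satisfaction of ground rule instances in the corresponding interpretation, ensuring the optimum taken over the closure never overshoots any pseudo-model.
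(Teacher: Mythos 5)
Your decomposition (closure is a pseudo-model; closure is the least pseudo-model; size bound) yields the same three claims, and your size argument is essentially the paper's, but your route to the first two claims is genuinely different. The paper never argues directly on pseudo-interpretations at the limit stage: it introduces a classical interpretation-level consequence operator $\ICFPStepOp{\Prog}{}$ (the standard Datalog operator for $\Prog$ extended with the limit-closure axioms), inherits monotonicity, the model-iff-fixpoint property, and least-fixpoint minimality from plain Datalog (Lemma~\ref{lemma:operator:I}), and then proves a step-by-step correspondence between $\ICFPStepOp{\Prog}{}$ and $\ILFPStepOp{\Prog}{}$ (Lemma~\ref{lemma:icfp-ilfp-coincide}); the only bespoke work at stage $\infty$ is a case analysis on the set $M$ of values derived for each limit predicate and object tuple ($M=\emptyset$, $M$ has an optimum, $M=\mathbb{Z}$), which shows the two $\infty$-iterates still correspond. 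Your approach stays entirely on the pseudo-interpretation side, using Lemma~\ref{lem:imm-cons-op-props} for both monotonicity and the prefixpoint characterisation of models; your minimality induction is in fact simpler than you suggest, since ${\ILFPStepOp{\Prog}{n} \sqsubseteq J}$ gives ${\ILFPStepOp{\Prog}{n+1} = \ILFPStep{\Prog}{\ILFPStepOp{\Prog}{n}} \sqsubseteq \ILFPStep{\Prog}{J} \sqsubseteq J}$ directly by monotonicity and ${J \models \Prog}$, with no need to re-examine $\constraints{r}{\cdot}$.

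The one step you under-justify is the claim that $\ILFPStepOp{\Prog}{\infty}$ is a prefixpoint. Knaster--Tarski for a merely monotonic operator guarantees a least fixpoint exists, but not that the least upper bound of the $\omega$-chain ${\ILFPStepOp{\Prog}{0} \sqsubseteq \ILFPStepOp{\Prog}{1} \sqsubseteq \cdots}$ is already a prefixpoint; for that you need a continuity (Kleene-style) argument. The fact is true here, but it requires observing that any integer solution witnessing applicability of a rule to the limit pseudo-interpretation assigns concrete integers to the finitely many body variables, and since the limit values along the chain grow towards their suprema, the same solution already satisfies $\constraints{r}{\ILFPStepOp{\Prog}{n}}$ for some finite $n$ (and correspondingly $\mathsf{opt}(r,\cdot)$ at the limit is the supremum of its finite-stage values). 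This is exactly the content of the paper's case analysis on $M$ in the correspondence at stage $\infty$, and it is the main thing your proof would need to spell out to be complete.
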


The proofs for the first and the third claim of
Theorem~\ref{thm:pseudo-mat-entails} use the monotonicity of
$\ILFPStepOp{\Prog}{}$ analogously to plain Datalog. The second claim holds
since, for each ${n \geq 0}$, each pair of distinct facts in
$\ILFPStepOp{\Prog}{n}$ must be derived by distinct rules in $\Prog$.


\section{Decidability of Entailment: Limit-Linearity}\label{sec:decidability}

We now start our investigation of the computational properties of limit
$\DLog$. Theorem \ref{thm:pseudo-mat-entails} bounds the cardinality of the
closure of a semi-ground program, but it does not bound the magnitude of the
integers occurring in limit facts; in fact, integers can be arbitrarily large.
Moreover, due to multiplication, checking rule applicability requires solving
nonlinear inequalities over integers, which is undecidable.

\begin{restatable}{theorem}{typedfactentailmentundecidable}\label{thm:typed-ent-undecidable}
    For $\Prog$ a semi-ground limit program and $\alpha$ a fact, checking
    ${\Prog \models \alpha}$ and checking applicability of a rule of\/ $\Prog$
    to a pseudo-interpretation are both undecidable.
\end{restatable}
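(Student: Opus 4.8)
The plan is to reduce Hilbert's tenth problem to both checks. Recall that, by the MRDP theorem, there is no algorithm that decides, given a polynomial $P(x_1,\dots,x_n)$ with integer coefficients, whether $P(x_1,\dots,x_n)=0$ has a solution with $x_1,\dots,x_n \in \mathbb{N}$. The key observation is that the comparison body of a limit rule may use $\times$, so a single conjunction of comparison atoms can encode an arbitrary polynomial equation; deciding whether such a conjunction is satisfiable over the integers is therefore already as hard as Hilbert's tenth problem. The two claims then differ only in how the free variables $x_1,\dots,x_n$ are made to range over all of $\mathbb{N}$.

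For the applicability claim I would fix $n$ fresh limit predicates $B_1,\dots,B_n$, each with a single numeric position, take the pseudo-interpretation $J=\{B_i(\infty)\mid 1\le i\le n\}$, and let $r$ be the rule
\begin{displaymath}
    B_1(x_1)\land\dots\land B_n(x_n)\land(P\le 0)\land(0\le P)\land\textstyle\bigwedge_{i}(0\le x_i)\to\mathit{Goal},
\end{displaymath}
where $\mathit{Goal}$ is a fresh nullary object atom and $\Prog=\{r\}$. Since $B_i(\infty)\in J$ for every $i$, clause (ii) in the definition of $\constraints{r}{J}$ does not fire and clause (iii) contributes nothing (it applies only to finite limit values), so $\constraints{r}{J}$ is exactly $(P\le 0)\land(0\le P)\land\bigwedge_i(0\le x_i)$ with each $x_i$ otherwise unconstrained. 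Hence $r$ is applicable to $J$ iff $P=0$ has a solution in $\mathbb{N}^n$, which is undecidable. Note that $r$ respects the simplifying assumptions: each $x_i$ occurs in exactly one function-free standard body atom, and the polynomial occurs only in comparison atoms.

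For the entailment claim I cannot write $\infty$ into a program, so I would instead synthesise the unbounded limit values with a generating gadget. Declaring $B_1,\dots,B_n$ as $\tmax$ predicates, I add for each $i$ the semi-ground rules ${\to B_i(0)}$ and ${B_i(m)\to B_i(m+1)}$, retain the rule $r$ above, and set $\alpha=\mathit{Goal}$. The gadget drives the limit value of each $B_i$ through $0,1,2,\dots$, so in $\ILFPStepOp{\Prog}{n}$ the body atom $B_i(x_i)$ contributes a finite bound $x_i\le c_n$ with $c_n\to\infty$; together with $0\le x_i$ this confines each $x_i$ to a finite window that exhausts $\mathbb{N}$ as $n$ grows. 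A polynomial solution is a fixed tuple in $\mathbb{N}^n$, so it fits inside the window already at some finite stage; consequently $\mathit{Goal}\in\ILFPStepOp{\Prog}{\infty}$ iff $P=0$ has a solution in $\mathbb{N}^n$. By Theorem~\ref{thm:pseudo-mat-entails} this is exactly $\Prog\models\mathit{Goal}$, and undecidability follows.

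The conceptual core of both reductions is simply the appeal to the MRDP theorem; the main thing to get right is the interaction with the definition of $\constraints{r}{J}$, namely that an $\infty$ limit value imposes no constraint on the corresponding body variable, which is precisely what lets the $x_i$ range freely. The only extra care needed for entailment is to argue that a solution, being a fixed integer tuple, is available already at a finite iterate of $\ILFPStepOp{\Prog}{}$, so that no transfinite reasoning about the closure is required; this is exactly why I add the lower bounds $0\le x_i$ and reduce from the natural-number version of the problem.
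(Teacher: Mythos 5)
Your proposal is correct and rests on the same core reduction as the paper: Hilbert's tenth problem (nonnegative-solution variant) encoded by placing the polynomial in the comparison atoms of a single rule body, with the only real work being to let the variables $x_1,\dots,x_n$ range over all of $\mathbb{N}$. The paper's gadget for that last step is leaner than yours---it uses one $\tmin$ fact $A(0)$ and body atoms $A(x_i)$, so the limit-closure semantics itself forces $0 \le x_i$ and the same two-rule program witnesses both the entailment and the applicability claim, avoiding your $B_i(\infty)$ pseudo-interpretation, the $\tmax$ generator rules, and the finite-stage convergence argument for the closure.
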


The proof of Theorem~\ref{thm:typed-ent-undecidable} uses a straightforward
reduction from Hilbert's tenth problem.

Checking rule applicability is undecidable due to products of variables in
inequalities; however, for \emph{linear inequalities} that prohibit multiplying
variables, the problem can be solved in $\textsc{NP}$, and in polynomial time
if we bound the number of variables. Thus, to ensure decidability, we next
restrict limit programs so that their semi-groundings contain only linear
numeric terms. All our examples satisfy this restriction.

\begin{definition}\label{def:limit-linear}
    A limit rule $r$ is \emph{limit-linear} if each numeric term in $r$ is of
    the form ${s_0 + \sum_{i=1}^n s_i \times m_i}$, where (i)~each $m_i$ is a
    distinct numeric variable occurring in a limit body atom of $r$, (ii)~term
    $s_0$ contains no variable occurring in a limit body atom of $r$, and
    (iii)~each $s_i$ with ${i \geq 1}$ is a term constructed using
    multiplication $\times$, integers, and variables not occurring in limit
    body atoms of $r$. A limit-linear program contains only limit-linear
    rules.\footnote{Note that each multiplication-free limit program can be
    normalised in polynomial time to a limit-linear program.}
\end{definition}

In the rest of this section, we show that entailment for limit-linear programs
is decidable and provide tight complexity bounds. Our upper bounds are obtained
via a reduction to the validity of Presburger formulas of a certain shape.

\begin{restatable}{lemma}{presburgerencoding}\label{lem:presburger-encoding}
    For $\Prog$ a semi-ground limit-linear program and $\alpha$ a fact, there
    exists a Presburger sentence ${\varphi = \forall \mathbf x \exists \mathbf
    y.\bigvee_{i=1}^n\psi_i}$ that is valid if and only if ${\Prog \models
    \alpha}$. Each $\psi_i$ is a conjunction of possibly negated atoms.
    Moreover, ${|\mathbf x| + |\mathbf y|}$ and each $\ssize{\psi_i}$ are
    bounded polynomially by ${\ssize{\Prog} + \ssize{\alpha}}$. Number $n$ is
    bounded polynomially by $|\Prog|$ and exponentially by $\max_{r\in \Prog}
    \ssize{r}$. Finally, the magnitude of each integer in $\varphi$ is bounded
    by the maximal magnitude of an integer in $\Prog$ and $\alpha$.
\end{restatable}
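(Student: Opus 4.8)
The plan is to build $\varphi$ from the pseudo-model characterisation of entailment. By Theorem~\ref{thm:pseudo-mat-entails}, $\Prog\models\alpha$ holds iff the closure $\ILFPStepOp{\Prog}{\infty}$ satisfies $\alpha$, and this closure is $\sqsubseteq$-below every pseudo-model of $\Prog$; since a single fact $\alpha$ is monotone under $\sqsubseteq$, this is equivalent to saying that \emph{every} pseudo-model $J$ of $\Prog$ satisfies $\alpha$. So I would encode the sentence ``for all pseudo-models $J$, if $J\models\Prog$ then $J\models\alpha$''. Because $\Prog$ is semi-ground, the object positions of all atoms are fixed, so a pseudo-model is determined by finitely many slots: a $0/1$ presence flag for each object atom slot, a finite limit value (plus the special states $\infty$ and ``absent'') for each limit atom slot, and the fixed values of the ordinary numeric facts---recall these never occur in heads (Definition~\ref{def:limit-programs}), so they are given constants rather than unknowns. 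I would collect an integer variable for each limit slot and each object slot into the block $\mathbf x$. Crucially, since $\Prog$ is semi-ground \emph{and} limit-linear, every numeric term simplifies to $c_0+\sum_i c_i\,m_i$ with integer coefficients $c_i$ whose magnitudes are bounded by those in $\Prog$ (Definition~\ref{def:limit-linear}), so all the constraints below are genuinely linear in the unknowns and introduce no constants beyond the program's magnitudes.

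Next I would encode ``$J\models\Prog$''. Using Lemma~\ref{lem:imm-cons-op-props} and the shape of $\constraints{r}{J}$ from Definition~\ref{def:rule-app}, a semi-ground rule $r$ is satisfied by $J$ iff, for every integer assignment to the limit-body variables $\mathbf y_r$ of $r$, whenever $\cbody{r}$ holds, the object/ordinary-numeric body atoms are present, and each limit body atom $B(\mathbf b,s)$ meets its bound ($\ell\le s$ for $\tmin$, $s\le\ell$ for $\tmax$, with $\ell$ the $\mathbf x$-variable for that slot), then the head value is dominated by $J$'s value for the head slot. Thus ``$J\models\Prog$'' is $\bigwedge_{r\in\Prog}\forall\mathbf y_r.(\mathrm{body}_r\Rightarrow\mathrm{head}_r)$, a purely universal formula over the $\mathbf y_r$ with a linear matrix. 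Writing $\Prog\models\alpha$ as $\forall\mathbf x.\big[(\bigwedge_r\forall\mathbf y_r.(\mathrm{body}_r\Rightarrow\mathrm{head}_r))\Rightarrow\alpha\text{-sat}\big]$ and pushing the implication through the negation turns the antecedent into $\bigvee_r\exists\mathbf y_r.(\mathrm{body}_r\wedge\neg\,\mathrm{head}_r)$. Taking $\mathbf y=\bigcup_r\mathbf y_r$ as one shared existential block yields exactly $\varphi=\forall\mathbf x\,\exists\mathbf y.\bigl[\bigvee_r(\mathrm{body}_r\wedge\neg\,\mathrm{head}_r)\vee\alpha\text{-sat}\bigr]$, where $\alpha$-sat is the (linear) condition that the slot of $\alpha$ dominates $\alpha$ in $\mathbf x$. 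Each disjunct is a conjunction of possibly negated linear atoms, giving the required $\bigvee_{i=1}^n\psi_i$ form.

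The main obstacle is faithfully representing the non-integer states $\infty$ (``all values present'') and ``absent'' while keeping the single $\forall\exists$ DNF shape and the stated count $n$. An integer variable in $\mathbf x$ cannot literally take value $\infty$, and no sentinel works uniformly because the body/head terms range over unbounded $\mathbf y$. My intended fix is a \emph{per-rule, local} case analysis: within each rule-disjunct I split on which of $r$'s limit body atoms (and the head slot) are treated as finite, as $\infty$ (making the body bound vacuous and head domination automatic), or as absent (making the disjunct false), substituting the corresponding linear or trivial condition in each case, and I treat $\alpha$'s slot the same way. Since a rule mentions only $O(\ssize{r})$ limit atoms, this expands each rule-disjunct into $2^{O(\ssize{r})}$ sub-disjuncts, so $n\le|\Prog|\cdot 2^{O(\max_{r}\ssize{r})}$ plus a constant for the $\alpha$-disjunct---matching the claim that $n$ is polynomial in $|\Prog|$ and exponential only in $\max_{r}\ssize{r}$. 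The delicate part I expect to spend most effort on is proving the equivalence itself: that this local mode-splitting, rather than a globally consistent mode assignment (which would blow $n$ up exponentially in $\ssize{\Prog}$), nonetheless makes $\varphi$ invalid exactly when some genuine pseudo-model---possibly carrying $\infty$ or absent slots---falsifies $\alpha$, and valid otherwise.

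Finally I would read off the size bounds. The block $\mathbf x$ has one variable per object or limit slot, so $|\mathbf x|\le O(\ssize{\Prog})$, and $\mathbf y$ has one variable per limit-body variable, so $|\mathbf y|\le O(\ssize{\Prog})$; hence $|\mathbf x|+|\mathbf y|$ is polynomial in $\ssize{\Prog}+\ssize{\alpha}$. Each $\psi_i$ is a conjunction of $O(\ssize{r}+\ssize{\alpha})$ linear atoms, so $\ssize{\psi_i}$ is polynomially bounded. Because the construction never computes optimum values numerically but instead quantifies over them via $\mathbf x$ and $\mathbf y$, no arithmetic on constants occurs, and every integer appearing in $\varphi$ is a coefficient or constant taken verbatim from $\Prog$ or $\alpha$; thus the magnitude of each integer in $\varphi$ is bounded by the maximal magnitude in $\Prog$ and $\alpha$, as required.
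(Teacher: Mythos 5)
There is a genuine gap, and it is exactly at the point you flag as ``the delicate part'': the per-rule, \emph{local} mode-splitting on which limit slots are finite, $\infty$, or absent does not yield a correct reduction unless the chosen modes are recorded in the universally quantified block $\mathbf x$ and checked inside each sub-disjunct. Recall that $\varphi$ fails to be valid iff some assignment to $\mathbf x$ makes \emph{every} disjunct unsatisfiable over $\mathbf y$. If a sub-disjunct is built under a mode assumption that does not match the intended counter-pseudo-model, nothing in your construction forces that sub-disjunct to be false, so it may be satisfiable and spuriously witness validity. Concretely, take $\Prog = \{\, \to A(0),\; A(m)\to A(m+1)\,\}$ with $A$ a $\tmax$ predicate and $\alpha$ an underivable object fact; the only counter-pseudo-model has $A$ at $\infty$. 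Your ``finite'' sub-disjunct for the recursive rule is essentially $(m\le v_A)\land\neg(m+1\le v_A)$, which is satisfiable by $m=v_A$ for every integer value of $v_A$; since $v_A$ cannot take the value $\infty$, every assignment to $\mathbf x$ satisfies some disjunct and $\varphi$ comes out valid, wrongly asserting $\Prog\models\alpha$.

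The paper's proof (Definition~\ref{def:Presburger-encoding} and Lemma~\ref{lemma:pseudo-presburger-correspondence}) resolves this by adding, for each limit slot, two Boolean flags (``defined'' and ``finite'') alongside the integer value variable, all placed in $\mathbf x$; a limit atom $C(\mathbf a,s)$ is encoded as $d_{C,\mathbf a}\land(\neg f_{C,\mathbf a}\lor s\preceq_C v_{C,\mathbf a})$. The three-way case analysis you want then happens automatically via the CNF/DNF conversion over these shared flags, each resulting sub-disjunct carries literals over $d$ and $f$ that tie it to the globally chosen mode, and the per-rule blowup stays at $2^{O(\ssize{r})}$ while $|\mathbf x|$ grows only polynomially. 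This is the missing ingredient; with it, the rest of your argument (the pseudo-model characterisation, the rule-by-rule universal encoding, the quantifier manipulation, and the size and magnitude bounds) matches the paper's proof essentially step for step.
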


The reduction in Lemma \ref{lem:presburger-encoding} is based on three main
ideas. First, for each limit atom $B(\mathbf b,s)$ in a semi-ground program
$\Prog$, we use a Boolean variable $\defined{B}{\mathbf b}$ to indicate that an
atom of the form $B(\mathbf b,\ell)$ exists in a pseudo-model of $\Prog$, a
Boolean variable $\fin{B}{\mathbf b}$ to indicate whether the value of $\ell$
is finite, and an integer variable $\val{B}{\mathbf b}$ to capture $\ell$ if it
is finite. Second, each rule of $\Prog$ is encoded as a universally quantified
Presburger formula by replacing each standard atom with its encoding. Finally,
entailment of $\alpha$ from $\Prog$ is encoded as a sentence stating that, in
every pseudo-interpretation, either some rule in $\Prog$ is not satisfied, or
$\alpha$ holds; this requires universal quantifiers to quantify over all
models, and existential quantifiers to negate the (universally quantified)
program.
 
Lemma~\ref{lem:validity-presburger} bounds the magnitude of integers in models
of Presburger formulas from Lemma \ref{lem:presburger-encoding}. These bounds
follow from recent deep results on semi-linear sets and their connection to
Presburger arithmetic \cite{DBLP:conf/icalp/ChistikovH16}.\footnote{We thank
  Christoph Haase for providing a proof of this lemma.} 

\begin{restatable}{lemma}{presburgervalidity}\label{lem:validity-presburger}
    Let ${\varphi = \forall\mathbf x \exists\mathbf y.\bigvee_{i=1}^n\psi_i}$
    be a Presburger sentence where each $\psi_i$ is a conjunction of possibly
    negated atoms of size at most $k$ mentioning at most $\ell$ variables, $a$
    is the maximal magnitude of an integer in $\varphi$, and ${m = |\mathbf
    x|}$. Then, $\varphi$ is valid if and only if $\varphi$ is valid over
    models where each integer variable assumes a value whose magnitude is
    bounded by ${(2^{O(\ell\log\ell)} \cdot a^{k\ell})^{n2^\ell\cdot O(m^4)}}$.
\end{restatable}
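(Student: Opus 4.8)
The plan is to recast validity of $\varphi$ as an emptiness question about a \emph{semi-linear set} and then to control the magnitude of the base and period vectors of a semi-linear representation of that set as it is built up from the $\psi_i$, invoking the sharp norm bounds of \citeA{DBLP:conf/icalp/ChistikovH16}. Write $\Psi = \bigvee_{i=1}^n \psi_i$ and $p = |\mathbf y|$. Each $\psi_i$ is a conjunction of (possibly negated) atoms, i.e.\ a system of linear Diophantine (in)equalities over the $m+p$ variables in $\mathbf x, \mathbf y$ with at most $k$-size atoms and at most $\ell$ variables, all coefficients of magnitude at most $a$; its integer solution set $R_i \subseteq \mathbb Z^{m+p}$ is therefore semi-linear. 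Setting $R = \bigcup_i R_i$ and $S = \{\mathbf x : \exists\mathbf y.\,(\mathbf x,\mathbf y)\in R\}$, the $\mathbf x$-projection of $R$, we have that $\varphi$ is valid iff $S = \mathbb Z^m$, equivalently iff the complement $C = \mathbb Z^m \setminus S$ is empty.

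First I would bound a semi-linear representation of a single $R_i$. Known estimates on solution sets of linear Diophantine systems---subdeterminant (Hadamard/Cramer) and elimination bounds, as exploited in \citeA{DBLP:conf/icalp/ChistikovH16}---give base and period vectors of magnitude at most $B_0 = 2^{O(\ell\log\ell)}\cdot a^{k\ell}$, where the factor $2^{O(\ell\log\ell)} = \ell^{O(\ell)}$ absorbs the dimension-dependent determinant growth (and the constant-factor cost of expanding a negated equality into $\ne$), while $a^{k\ell}$ captures the growth of the constants under elimination. I would then compose three operations, tracking the bound in logarithmic scale: complementing each projected disjunct contributes a factor $2^\ell$, since the complement of such a set decomposes into exponentially many pieces governed by the per-formula variable count; the number of disjuncts contributes the factor $n$ (through the union, resp.\ the dual intersection $C = \bigcap_i \overline{\pi(R_i)}$); and projecting down to the ambient dimension $m$ contributes the polynomial factor $O(m^4)$ from the Chistikov--Haase projection bound. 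Composing these, a semi-linear representation of $C$---and, symmetrically, of the $\mathbf x$-fibres of $R$---has base and period vectors of magnitude at most $B_0^{\,n\,2^\ell\cdot O(m^4)}$; call this bound $B$, which is exactly the claimed $(2^{O(\ell\log\ell)}\cdot a^{k\ell})^{n2^\ell\cdot O(m^4)}$.

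Finally I would read off both directions from the elementary fact that a \emph{nonempty} semi-linear set contains one of its base vectors, hence a point of magnitude at most $B$. For the backward direction: if $\varphi$ fails then $C \neq \emptyset$, so $C$ contains some $\mathbf x_0$ with $|\mathbf x_0| \le B$; since \emph{no} $\mathbf y$ satisfies $\Psi(\mathbf x_0,\cdot)$, in particular no bounded one does, so $\varphi$ already fails over the bounded region. For the forward direction: if $\varphi$ is valid, then for every $\mathbf x_0$ with $|\mathbf x_0| \le B$ the fibre $\{\mathbf y : (\mathbf x_0,\mathbf y)\in R\}$ is a nonempty semi-linear set whose describing integers have grown only through substituting the bounded $\mathbf x_0$ (hence remain within $B$), so it contains a witness $\mathbf y$ of magnitude at most $B$. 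The main obstacle is securing a \emph{single-exponential} exponent rather than a tower: naive complementation and projection of semi-linear sets blow the norms up non-elementarily, and avoiding this requires precisely the dimension-sensitive norm bounds for Boolean operations and projection from \citeA{DBLP:conf/icalp/ChistikovH16}; the delicate part is the bookkeeping showing that these compose to the stated product $n\,2^\ell\cdot O(m^4)$ in the exponent.
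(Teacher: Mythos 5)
Your overall strategy---recast validity of $\varphi$ as emptiness of a semi-linear set and control the norms of base and period vectors via the results of Chistikov and Haase \cite{DBLP:conf/icalp/ChistikovH16}, concluding with the observation that a nonempty semi-linear set contains one of its base vectors---is exactly the paper's. The gap is in the central bookkeeping, which is the entire content of the lemma: you attribute the factors in the exponent to the wrong operations, and the route you propose is not one for which the cited bounds visibly compose to the stated exponent. In the paper's proof, the factor $2^\ell$ is \emph{not} the cost of complementation: it is the number of linear sets needed to represent the integer solution set of a single conjunction $\psi_i$ of linear inequalities (Proposition~3 of Chistikov--Haase, after von zur Gathen and Sieveking), each of norm $2^{O(\ell\log\ell)}\cdot a^{k\ell}$; taking the union over the $n$ disjuncts gives at most $n2^\ell$ linear sets with the same norm; projecting onto $\mathbf x$ is \emph{free} (one simply drops coordinates of the base and period vectors), so there is no ``projection bound'' contributing $O(m^4)$; and the $O(m^4)$ arises from a \emph{single} application of the complementation theorem (Theorem~21 of Chistikov--Haase) to the projected union, whose exponent is (number of linear sets)${}\times{}$poly(dimension)${}=n2^\ell\cdot O(m^4)$. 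Your decomposition $C=\bigcap_i\overline{\pi(R_i)}$ instead complements $n$ times and then intersects; intersection of semi-linear sets is itself a norm-increasing operation with its own dimension-dependent exponent in Chistikov--Haase, which you do not account for, so along your route the exponents do not obviously multiply out to $n2^\ell\cdot O(m^4)$. Since you explicitly leave this composition as the unverified ``delicate part,'' this is a genuine gap rather than a presentational variant.

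A secondary weakness is your forward direction: after substituting a bounded $\mathbf x_0$ into the system, the integers describing the fibre $\{\mathbf y : (\mathbf x_0,\mathbf y)\in R\}$ do not ``remain within $B$''---the constant terms grow to roughly $\ell\cdot a\cdot B$ and the resulting base vectors to a polynomial in $B$ raised to $k\ell$---so the claimed witness bound for $\mathbf y$ does not follow as stated. The paper sidesteps this entirely by working only with the formula $\varphi'=\neg\exists\mathbf y.\bigvee_i\psi_i$ in the free variables $\mathbf x$ and bounding a satisfying assignment for $\mathbf x$ alone, which is all that its application (Theorem~\ref{th:pseudo}) requires.
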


Lemmas~\ref{lem:presburger-encoding} and~\ref{lem:validity-presburger} provide
us with bounds on the size of counter-pseudo-models for entailment.

\begin{restatable}{theorem}{counterinterpretationsbounded}\label{th:pseudo}
    For $\Prog$ a semi-ground limit-linear program, $\Dat$ a dataset, and
    $\alpha$ a fact, ${\Prog \cup \Dat \not\models \alpha}$ if and only if a
    pseudo-model $J$ of\/ ${\Prog \cup \Dat}$ exists where ${J \not\models
    \alpha}$, ${|J| \leq |\Prog \cup \Dat|}$, and the magnitude of each integer
    in $J$ is bounded polynomially by the largest magnitude of an integer in
    ${\Prog \cup \Dat}$, exponentially by $|\Prog|$, and double-exponentially
    by ${\max_{r \in \Prog} \ssize{r}}$.
\end{restatable}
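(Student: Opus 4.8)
The plan is to combine the fixpoint characterisation of entailment (Theorem~\ref{thm:pseudo-mat-entails}) with the Presburger reduction (Lemma~\ref{lem:presburger-encoding}) and the bound on satisfying assignments (Lemma~\ref{lem:validity-presburger}). First I would establish the cardinality bound ${|J| \leq |\Prog \cup \Dat|}$ and the existence of a counter-pseudo-model independently of the integer-magnitude bound. For the forward direction, suppose ${\Prog \cup \Dat \not\models \alpha}$. By Theorem~\ref{thm:pseudo-mat-entails} applied to the semi-grounding of ${\Prog \cup \Dat}$ (recall semi-grounding preserves entailment), the closure $\ILFPStepOp{\Prog \cup \Dat}{\infty}$ is itself a pseudo-model that does not satisfy $\alpha$, and it already satisfies ${|\ILFPStepOp{\Prog \cup \Dat}{\infty}| \leq |\Prog \cup \Dat|}$. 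The converse direction is immediate, since any pseudo-model witnessing ${J \not\models \alpha}$ refutes entailment by definition. Thus the cardinality bound and the biconditional come essentially for free from the earlier fixpoint theory; the real work lies entirely in bounding the \emph{magnitudes} of the integers appearing in some such $J$.

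For the magnitude bound I would pass through the Presburger encoding. By Lemma~\ref{lem:presburger-encoding}, there is a sentence ${\varphi = \forall \mathbf x \exists \mathbf y.\bigvee_{i=1}^n \psi_i}$ valid if and only if ${\Prog \cup \Dat \models \alpha}$, so ${\Prog \cup \Dat \not\models \alpha}$ means $\varphi$ is \emph{not} valid. The key observation is that a counter-pseudo-model $J$ corresponds to an assignment to the universally quantified block $\mathbf x$ under which no disjunct $\psi_i$ can be satisfied by any choice of $\mathbf y$; the integer variables $\val{B}{\mathbf b}$ that encode the finite limit values of $J$ live inside $\mathbf x$. Non-validity of $\varphi$ is the same as validity of its negation ${\exists \mathbf x \forall \mathbf y.\bigwedge_{i=1}^n \neg\psi_i}$, and I would apply Lemma~\ref{lem:validity-presburger} in a form that bounds the witness for the outermost existential block. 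Plugging in the parameters supplied by Lemma~\ref{lem:presburger-encoding}---namely $\ell$ and $k$ bounded polynomially by ${\ssize{\Prog} + \ssize{\alpha}}$, the number of disjuncts $n$ bounded polynomially by ${|\Prog|}$ and exponentially by ${\max_{r \in \Prog}\ssize{r}}$, and $a$ the largest magnitude in the input---into the bound ${(2^{O(\ell\log\ell)} \cdot a^{k\ell})^{n2^\ell \cdot O(m^4)}}$ and simplifying the resulting tower of exponents yields a bound that is polynomial in $a$, exponential in ${|\Prog|}$, and double-exponential in ${\max_{r \in \Prog}\ssize{r}}$, exactly as claimed.

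The main obstacle I anticipate is reconciling the two characterisations: Lemma~\ref{lem:validity-presburger} as stated bounds the values of the \emph{universally} quantified integer variables (the $\mathbf x$) needed to certify validity or non-validity, whereas I need to bound the integers in a concrete counter-pseudo-model, which sit in the existential block of the negated sentence. I would need to argue carefully that the small-witness assignment to the $\mathbf x$ delivered by Lemma~\ref{lem:validity-presburger} genuinely corresponds, via the one-to-one map of Definition~\ref{def:pseudo-interpretation} and the encoding of Lemma~\ref{lem:presburger-encoding}, to a pseudo-model of ${\Prog \cup \Dat}$ refuting $\alpha$---in particular that the Boolean flags $\defined{B}{\mathbf b}$ and $\fin{B}{\mathbf b}$ together with the bounded $\val{B}{\mathbf b}$ reconstruct a genuine, well-defined pseudo-interpretation of bounded cardinality. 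A secondary but purely arithmetical difficulty is the bookkeeping needed to collapse the nested exponents: one must track that $m = |\mathbf x|$ and the size parameters are polynomial in ${\ssize{\Prog}+\ssize{\alpha}}$ while $n$ carries the single exponential in ${|\Prog|}$, so that the $O(m^4)$ and $2^\ell$ factors in the exponent do not inflate the final double-exponential dependence beyond ${\max_{r \in \Prog}\ssize{r}}$. Once the correspondence is pinned down, the magnitude bound follows by direct substitution.
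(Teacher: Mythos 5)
Your overall route---encode non-entailment as non-validity of the Presburger sentence from Lemma~\ref{lem:presburger-encoding}, invoke Lemma~\ref{lem:validity-presburger} to get a magnitude-bounded falsifying assignment to the block $\mathbf x$, and read a counter-pseudo-model off that assignment---is exactly the paper's, and the ``main obstacle'' you anticipate is not actually an obstacle: Lemma~\ref{lem:validity-presburger} bounds \emph{all} integer variables, so non-validity of $\varphi$ yields a bounded assignment to $\mathbf x$ under which no choice of $\mathbf y$ satisfies the matrix, and the correspondence between assignments and pseudo-interpretations built into the encoding turns this directly into a pseudo-model of $\Prog\cup\Dat$ refuting $\alpha$. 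The first genuine gap is that you obtain the cardinality bound and the magnitude bound for \emph{different} models: $|J|\le|\Prog\cup\Dat|$ is derived from the closure $\ILFPStepOp{\Prog\cup\Dat}{\infty}$ via Theorem~\ref{thm:pseudo-mat-entails}, while the magnitude bound attaches to the model reconstructed from the Presburger assignment, and neither is shown to inherit the other's property (the closure is only extremal w.r.t.\ $\sqsubseteq$, which does not control absolute values, and the reconstructed model may contain a fact for every atom of $\Prog\cup\Dat$, not one per rule). The paper closes this by restricting the Presburger-derived model to the facts that unify with the head of some rule, which preserves modelhood and yields the cardinality bound for the very same $J$ that carries the magnitude bound.

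The second, more serious gap is in the parameter accounting. Plugging $\ell$ and $k$ ``polynomial in $\ssize{\Prog}+\ssize{\alpha}$'' and $m$ polynomial in the full input into $(2^{O(\ell\log\ell)}\cdot a^{k\ell})^{n2^\ell\cdot O(m^4)}$ does \emph{not} simplify to the claimed bound: the factor $2^\ell$ in the exponent is then already exponential in $\ssize{\Prog}$, making the result doubly exponential in the whole program rather than in $\max_{r\in\Prog}\ssize{r}$, and the factor $O(m^4)$ carries a dependence on $\ssize{\Dat}$ that contradicts the claim that the bound is polynomial in $a$ and exponential only in $|\Prog|$. You need the finer per-disjunct bounds implicit in the construction of Lemma~\ref{lem:presburger-encoding}---each $\psi_i$ stems from a single rule, so $k$ and $\ell$ are linear in $s=\max_{r\in\Prog}\ssize{r}$---together with a preprocessing step you omit entirely: prune $\Dat$ to the facts that unify with an atom of $\Prog$ or $\alpha$, so that $m$ becomes linear in $|\Prog|\cdot s$ rather than in $\ssize{\Dat}$. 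Only with both refinements does the substitution yield a bound of the form $(2^{O(s\log s)}\cdot a^{O(s^2)})^{O(|\Prog|\cdot 2^s)\cdot 2^{O(s)}\cdot O((|\Prog| s)^4)}$, which is polynomial in $a$, exponential in $|\Prog|$, and doubly exponential in $s$ as required.
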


By Theorem \ref{th:pseudo}, the following nondeterministic algorithm decides
${\Prog \not\models \alpha}$:
\begin{compactenum}
    \item compute the semi-grounding $\Prog'$ of $\Prog$;

    \item guess a pseudo-interpretation $J$ that satisfies the bounds given in
    Theorem~\ref{th:pseudo};
    
    \item if ${\ILFPStep{\Prog'}{J} \sqsubseteq J}$ (so ${J \models \Prog'}$)
    and ${J \not\models \alpha}$, return $\mathsf{true}$.
\end{compactenum}
Step~1 requires exponential (polynomial in data) time, and it does not increase
the maximal size of a rule. Hence, Step~2 is nondeterministic exponential
(polynomial in data), and Step~3 requires exponential (polynomial in data) time
to solve a system of linear inequalities. Theorem~\ref{thm:conp-conexptime}
proves that these bounds are both correct and tight.

\begin{restatable}{theorem}{admissiblefactentailmentconpconexptime}\label{thm:conp-conexptime}
    For $\Prog$ a limit-linear program and $\alpha$ a fact, deciding ${\Prog
    \models \alpha}$ is \textsc{coNExpTime}-complete in combined and
    \textsc{coNP}-complete in data complexity.
\end{restatable}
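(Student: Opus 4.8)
For the upper bounds, I would argue that the nondeterministic algorithm sketched immediately before the theorem statement is a correct decision procedure for the complement ${\Prog \not\models \alpha}$, and then analyse its resource usage. By Theorem~\ref{th:pseudo}, ${\Prog \cup \Dat \not\models \alpha}$ holds if and only if there is a pseudo-model $J$ of the semi-grounding that refutes $\alpha$, has at most $|\Prog \cup \Dat|$ facts, and whose integers have magnitude bounded polynomially in the largest input integer, exponentially in $|\Prog|$, and double-exponentially in $\max_{r} \ssize{r}$. The first step is to check that each such bounded $J$ has a representation of size exponential in $\ssize{\Prog}$ (polynomial in data): the number of facts is polynomial in the input, and each integer is written in binary, so its bit-length is polynomial in the input plus $|\Prog|$ plus $\max_r \ssize{r}$, hence overall exponential in combined size and polynomial in data size. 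Guessing $J$ is therefore in \textsc{NExpTime} (combined) and \textsc{NP} (data). Verifying the guess amounts to computing the semi-grounding $\Prog'$, checking ${\ILFPStep{\Prog'}{J} \sqsubseteq J}$, and checking ${J \not\models \alpha}$; by Lemma~\ref{lem:imm-cons-op-props} the first of these is equivalent to ${J \models \Prog'}$, so verification confirms $J$ is a refuting pseudo-model. The key subtlety is that one application of $\ILFPStepOp{\Prog'}{}$ requires, for each semi-ground rule, solving the linear integer program $\constraints{r}{J}$ and computing the optimum $\mathsf{opt}(r,J)$; since limit-linearity guarantees these constraints are linear, this is a bounded-size integer linear feasibility/optimisation problem solvable in time polynomial in the size of its binary encoding, which is exponential (combined) and polynomial (data) in the input. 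Putting the pieces together gives the \textsc{coNExpTime} and \textsc{coNP} upper bounds.

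For the lower bounds, I would give two reductions into the \emph{complement} problem ${\Prog \not\models \alpha}$, since the target classes are co-classes. In data complexity, I would reduce from a \textsc{coNP}-hard problem such as the complement of propositional satisfiability (equivalently, show that a \textsc{NP}-hard problem reduces to non-entailment), encoding the instance in a fixed-signature dataset $\Dat$ over a fixed program $\Prog'$: the fixed rules guess a truth assignment and the existence of a refuting pseudo-model corresponds to the existence of a satisfying assignment, so that ${\Prog' \cup \Dat \not\models \alpha}$ iff the formula is satisfiable. In combined complexity, I would reduce from a \textsc{NExpTime}-hard problem---the canonical choice being the acceptance problem for a nondeterministic Turing machine running in exponential time, or succinct/exponential tiling---exploiting that limit predicates carrying binary-encoded integers of exponential magnitude let a polynomially-sized program address an exponentially large computation tableau. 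The refuting pseudo-model then encodes an accepting run, giving ${\Prog \not\models \alpha}$ iff the machine accepts.

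\textbf{The main obstacle} I expect is the combined-complexity lower bound: designing a polynomial-size limit-linear program whose refuting pseudo-models faithfully encode an exponentially long, exponentially wide Turing-machine computation, while respecting the severe syntactic constraints of limit $\DLog$---every numeric predicate keeps only one bound per object tuple, variables must appear in standard body atoms, and limit-linearity forbids multiplying two limit-bound variables. The natural trick is to use the binary encoding of an integer in a single limit argument to represent an exponentially large index (a tape cell together with a time step), much as the undecidability proof of Theorem~\ref{thm:fact-ent-undecidable} packs a time point and a tape position into one integer; one must then verify that successor, comparison, and the single-step transition relation of the machine can all be expressed with $+$ and comparisons alone, so the encoding stays limit-linear. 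Checking that the $\tmin$/$\tmax$ optimisation semantics does not spuriously collapse distinct tableau configurations---i.e.\ that the least pseudo-model really reconstructs the intended run rather than some ``best-value'' artefact---is the delicate verification step. By contrast, the upper-bound analysis is largely bookkeeping over the already-established Theorem~\ref{th:pseudo} and the linear-programming cost of $\ILFPStepOp{\Prog'}{}$, and the data-complexity lower bound is a comparatively routine fixed-program encoding of an \textsc{NP}-complete problem.
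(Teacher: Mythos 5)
Your upper-bound argument is essentially the paper's: guess a pseudo-model of the semi-grounding bounded as in Theorem~\ref{th:pseudo} and verify it with one application of the immediate-consequence operator. The only gloss is your claim that the integer programs $\constraints{r}{J}$ are ``solvable in time polynomial in the size of the binary encoding'': integer linear feasibility is \textsc{NP}-complete in general and polynomial only for a fixed number of variables, which is exactly the case split the paper makes (nondeterministic polynomial in the combined setting, deterministic polynomial for fixed rule size in the data setting); since the overall procedure is nondeterministic anyway, this does not break your argument.

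The lower bounds, however, contain a genuine gap. You propose that ``the fixed rules guess a truth assignment and the existence of a refuting pseudo-model corresponds to the existence of a satisfying assignment'', and in the combined case that ``the refuting pseudo-model then encodes an accepting run''. Limit $\DLog$ is a monotone fragment with a least pseudo-model, so ${\Prog \not\models \alpha}$ holds iff the \emph{unique} least pseudo-model omits $\alpha$; countermodels cannot serve as a source of nondeterministic choice, and a positive program cannot ``guess''. The paper's hardness proofs work differently: the least model \emph{deterministically enumerates all} candidate solutions, each encoded as a single integer of exponential (resp.\ doubly exponential) magnitude, by maintaining a $\tmax$ predicate $\mathit{tiling}(n)$ and deriving $\mathit{tiling}(n+1)$ whenever the candidate encoded by $n$ violates a constraint; the goal fact $\mathit{noSolution}$ is derived exactly when the enumeration exhausts every candidate, so non-entailment coincides with the existence of a valid candidate. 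This enumeration-via-large-integers device is the idea your plan is missing. Without it, a fixed positive program over a dataset yields only \textsc{PTime}-hardness in data complexity (as in plain Datalog), and a polynomial-size program simulating a single deterministic exponential computation yields only \textsc{ExpTime}-hardness, not \textsc{coNExpTime}-hardness. Your trick of packing a tableau index into one integer explains how to represent \emph{one} candidate run, but not how the least model quantifies over \emph{all} of them, which is where the co-nondeterministic hardness actually comes from. (That the paper reduces from (succinct) square tiling rather than SAT or Turing-machine acceptance is inessential; the enumeration mechanism is the essential part.)
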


The upper bounds in Theorem~\ref{thm:conp-conexptime} follow from
Theorem~\ref{th:pseudo}, \textsc{coNP}-hardness in data complexity is shown by
a reduction from the square tiling problem, and \textsc{coNExpTime}-hardness in
combined complexity is shown by a similar reduction from the succinct version
of square tiling.


\section{Tractability of Entailment: Stability}\label{sec:tractability}

Tractability in data complexity is important on large datasets, so we next
present an additional \emph{stability} condition that brings the complexity of
entailment down to $\textsc{ExpTime}$ in combined and $\textsc{PTime}$ in data
complexity, as in plain Datalog.

\subsection{Cyclic Dependencies in Limit Programs}

The fixpoint of a plain Datalog program can be computed in $\textsc{PTime}$ in
data complexity. However, for $\Prog$ a limit-linear program, a na{\"i}ve
computation of $\ILFPStepOp{\Prog}{\infty}$ may not terminate since repeated
application of $\ILFPStepOp{\Prog}{}$ can produce larger and larger numbers.
Thus, we need a way to identify when the numeric argument $\ell$ of a limit
fact ${A(\mathbf a,\ell)}$ \emph{diverges}---that is, grows or decreases
without a bound; moreover, to obtain a procedure tractable in data complexity,
divergence should be detected after polynomially many steps.
Example~\ref{ex:stability} illustrates that this can be achieved by analysing
cyclic dependencies.

\begin{example}\label{ex:stability}
Let $\Prog_c$ contain facts $A(0)$ and $B(0)$, and rules ${A(m) \to B(m)}$ and
${B(m) \to A(m+1)}$, where $A$ and $B$ are $\tmax$ predicates. Applying the
first rule copies the value of $A$ into $B$, and applying the second rule
increases the value of $A$; thus, both $A$ and $B$ diverge in
$\ILFPStepOp{\Prog_c}{\infty}$. The existence of a cyclic dependency between
$A$ and $B$, however, does not necessarily lead to divergence. Let program
$\Prog'_c$ be obtained from $\Prog_c$ by adding a $\tmax$ fact $C(5)$ and
replacing the first rule with ${A(m) \land C(n)\land (m\le n) \to B(m)}$. While
a cyclic dependency between $A$ and $B$ still exists, the increase in the
values of $A$ and $B$ is bounded by the value of $C$, which is independent of
$A$ or $B$; thus, neither $A$ nor $B$ diverge in
$\ILFPStepOp{\Prog'_c}{\infty}$.
\end{example}

In the rest of this section, we extend $<$, $+$, and $-$ to $\infty$ by
defining ${k < \infty}$, ${\infty + k = \infty}$, and ${\infty - k = \infty}$
for each integer $k$. We formalise cyclic dependencies as follows.

\begin{definition}\label{def:vp-graph}
    For each $n$-ary limit predicate $B$ and each tuple ${\mathbf b}$ of $n-1$
    objects, let $\pgNode{B}{\mathbf b}$ be a \emph{node} unique for $B$ and
    ${\mathbf b}$. The \emph{value propagation graph} of a semi-ground
    limit-linear program $\Prog$ and a pseudo-interpretation $J$ is the
    directed weighted graph ${G^J_\Prog = (\pgNodes,\pgEdges,\mu)}$ defined as
    follows.
    \begin{compactenum}
        \item For each limit fact ${B(\mathbf b,\ell)\in J}$, we have
        ${\pgNode{B}{\mathbf b}\in\pgNodes}$.

        \item For each rule ${r \in \Prog}$ applicable to $J$ with the head of
        the form ${A(\mathbf a,s)}$ where ${\pgNode{A}{\mathbf a} \in
        \pgNodes}$ and each body atom ${B(\mathbf b,m)}$ of\/ $r$ where
        ${\pgNode{B}{\mathbf b} \in \pgNodes}$ and variable $m$ occurs in term
        $s$, we have ${\pgEdge{B}{\mathbf b}{A}{\mathbf a} \in \pgEdges}$; such
        $r$ is said to \emph{produce} the edge ${\pgEdge{B}{\mathbf
        b}{A}{\mathbf a}}$ in $\pgEdges$.

        \item For each ${r \in \Prog}$ and each edge ${e = \pgEdge{B}{\mathbf
        b}{A}{\mathbf a} \in \pgEdges}$ produced by $r$, ${\delta_r^e(J) =
        \infty}$ if ${\mathsf{opt}(r,J) = \infty}$; otherwise,
        \begin{displaymath}
            \hskip-5mm\delta_r^e(J) =
            \begin{cases}
                \mathsf{opt}(r,J)  - \ell   & \text{if } B \text{ and } A \text{ are }\tmax, \ell \neq \infty\\
                - \mathsf{opt}(r,J)  - \ell & \text{if } B \text{ is } \tmax, A \text{ is } \tmin, \ell \neq \infty\\
                - \mathsf{opt}(r,J)  + \ell & \text{if } B \text{ and } A \text{ are } \tmin, \ell \neq \infty\\
                \mathsf{opt}(r,J) + \ell    & \text{if } B \text{ is } \tmin, A \text{ is } \tmax, \ell \neq \infty\\
                \mathsf{opt}(r,J)           & \text{if } \ell = \infty
            \end{cases}
        \end{displaymath}
        where $\ell$ is such that ${B(\mathbf b,\ell) \in J}$. The weight of
        each edge ${e \in \pgEdges}$ is then given by
        \begin{displaymath}
            \mu(e) = \max \qset{\delta_r^e(J)}{r \in \Prog \text{ produces } e }.
        \end{displaymath}
    \end{compactenum}
    A \emph{positive-weight cycle} in $G^J_\Prog$ is a cycle for which the sum
    of the weights of the contributing edges is greater than $0$.
\end{definition}

Intuitively, $G^J_\Prog$ describes how, for each limit predicate $B$ and
objects ${\mathbf b}$ such that ${B(\mathbf b,\ell) \in J}$, operator
$\ILFPStepOp{\Prog}{}$ propagates $\ell$ to other facts. The presence of a node
${\pgNode{B}{\mathbf b}}$ in $\pgNodes$ indicates that ${B(\mathbf b,\ell) \in
J}$ holds for some ${\ell \in \mathbb Z \cup \set{\infty}}$; this $\ell$ can be
uniquely identified given $\pgNode{B}{\mathbf b}$ and $J$. An edge ${e =
\pgEdge{B}{\mathbf b}{A}{\mathbf a} \in \pgEdges}$ indicates that at least one
rule ${r \in \Prog}$ is applicable to $J$ where ${\head{r} = A(\mathbf a,s)}$,
${B(\mathbf b,m) \in \sbody{r}}$, and $m$ occurs in $s$; moreover, applying $r$
to $J$ produces a fact ${A(\mathbf a,\ell')}$ where $\ell'$ satisfies ${\ell +
\mu(e) \leq \ell'}$ if both $A$ and $B$ are $\tmax$ predicates, and analogously
for the other types of $A$ and $B$. In other words, edge $e$ indicates that the
application of $\ILFPStepOp{\Prog}{}$ to $J$ will propagate the value of
$\pgNode{B}{\mathbf b}$ to $\pgNode{A}{\mathbf a}$ while increasing it by at
least $\mu(e)$. Thus, presence of a positive-weight cycle in $G^J_\Prog$
indicates that repeated rule applications might increment the values of all
nodes on the cycle.

\subsection{Stable Programs}

As Example~\ref{ex:stability} shows, the presence of a positive-weight cycle in
$G^J_\Prog$ does not imply the divergence of all atoms corresponding to the
nodes in the cycle. This is because the weight of such a cycle may decrease
after certain rule applications and so it is no longer positive.

This motivates the \emph{stability} condition, where edge weights in
$G^J_\Prog$ may only grow but never decrease with rule application. Hence, once
the weight of a cycle becomes positive, it will remain positive and thus
guarantee the divergence of all atoms corresponding to its nodes. Intuitively,
$\Prog$ is stable if, whenever a rule ${r \in \Prog}$ is applicable to some
$J$, rule $r$ is also applicable to each $J'$ with larger limit values, and
applying $r$ to such $J'$ further increases the value of the head.
Definition~\ref{def:stable-programs} defines stability as a condition on
$G^J_\Prog$. Please note that, for all pseudo-interpretations $J$ and $J'$ with
${J \sqsubseteq J'}$ and ${G^J_\Prog = (\pgNodes,\pgEdges,\mu)}$ and
${G^{J'}_\Prog = (\pgNodes',\pgEdges',\mu')}$ the corresponding value
propagation graphs, we have ${\pgEdges \subseteq \pgEdges'}$.

\begin{definition}\label{def:stable-programs}
    A semi-ground limit-linear program $\Prog$ is \emph{stable} if, for all
    pseudo-interpretations $J$ and $J'$ with ${J \sqsubseteq J'}$, ${G^J_\Prog
    = (\pgNodes,\pgEdges,\mu)}$, ${G^{J'}_\Prog = (\pgNodes',\pgEdges',\mu')}$,
    and each ${{e \in \pgEdges}}$,
    \begin{compactenum}
        \item ${\mu(e) \leq \mu'(e)}$, and

        \item ${e = \pgEdge{B}{\mathbf b}{A}{\mathbf a}}$ and ${B(\mathbf
        b,\infty)\in J}$ imply ${\mu(e) = \infty}$.
    \end{compactenum}
    A limit-linear program is \emph{stable} if its semi-grounding is stable.
\end{definition}

\begin{example}
Program $\Prog_c$ from Example~\ref{ex:stability} is stable, while $\Prog'_c$
is not: for ${J = \set{A(0),C(0)}}$ and ${J' = \set{A(1),C(0)}}$, we have ${J
\sqsubseteq J'}$, but ${\mu(\pgEdge{A}{}{B}{}) = 0}$ and
${\mu'(\pgEdge{A}{}{B}{}) = -1}$.
\end{example}

For each semi-ground program $\Prog$ and each integer $n$, we have
${\ILFPStepOp\Prog{n} \sqsubseteq \ILFPStepOp\Prog{n+1}}$, and stability
ensures that edge weights only grow after rule application. Thus, recursive
application of the rules producing edges involved in a positive-weight cycle
leads to divergence, as shown by the following lemma.

\begin{restatable}{lemma}{soundness}\label{lem:soundness}
    For each semi-ground stable program $\Prog$, each pseudo-interpretation $J$
    with ${J \sqsubseteq \ILFPStepOp{\Prog}{\infty}}$, and each node
    $\pgNode{A}{\mathbf a}$ on a positive-weight cycle in $G^J_\Prog$, we have
    ${A(\mathbf a,\infty) \in \ILFPStepOp{\Prog}{\infty}}$.
\end{restatable}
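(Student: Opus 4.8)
Write $M = \ILFPStepOp{\Prog}{\infty}$ and let $\mathcal{C}$ be the given positive-weight cycle in $G^J_\Prog$, with nodes $\pgNode{A_0}{\mathbf a_0},\dots,\pgNode{A_{k-1}}{\mathbf a_{k-1}}$ and edges $e_i = \pgEdge{A_{i-1}}{\mathbf a_{i-1}}{A_i}{\mathbf a_i}$ (indices modulo $k$). Since the closure $M$ is a pseudo-model of $\Prog$, Lemma~\ref{lem:imm-cons-op-props} gives $\ILFPStep{\Prog}{M} \sqsubseteq M$, so $M$ is a fixpoint. The first step is to transport $\mathcal{C}$ from $G^J_\Prog$ into $G^M_\Prog$. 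Because $J \sqsubseteq M$, every node of $\mathcal{C}$ is still a node of $G^M_\Prog$, every edge $e_i$ is still an edge of $G^M_\Prog$ (by the remark preceding Definition~\ref{def:stable-programs}), and stability condition~1 yields $\mu_J(e_i) \leq \mu_M(e_i)$ for each $i$, where $\mu_J$ and $\mu_M$ denote the weight functions of $G^J_\Prog$ and $G^M_\Prog$. Hence $\mathcal{C}$ is a cycle of $G^M_\Prog$ of total weight $\sum_i \mu_M(e_i) \geq \sum_i \mu_J(e_i) > 0$, so it is again positive.

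The heart of the argument is a potential function. For a node $\pgNode{B}{\mathbf b}$ of $G^M_\Prog$, let $\ell$ satisfy $B(\mathbf b,\ell) \in M$ and set $\phi(\pgNode{B}{\mathbf b})$ to be $\ell$ if $B$ is $\tmax$ and $-\ell$ if $B$ is $\tmin$, with divergence $\ell = \infty$ mapped to $+\infty$ in both cases. The key uniform inequality is that for each edge $e_i$ and each rule $r$ producing it, applying $r$ to $M$ derives a head fact whose potential is at least $\phi(\pgNode{A_{i-1}}{\mathbf a_{i-1}}) + \delta_r^{e_i}(M)$; since $M$ is a fixpoint this derived potential is at most $\phi(\pgNode{A_i}{\mathbf a_i})$, and maximising over producing rules gives
\[
    \phi(\pgNode{A_{i-1}}{\mathbf a_{i-1}}) + \mu_M(e_i) \leq \phi(\pgNode{A_i}{\mathbf a_i}).
\]
Establishing this reduces to checking the four combinations of $\tmin$/$\tmax$ for the endpoints against the matching cases of $\delta_r^e$ in Definition~\ref{def:vp-graph}: the sign conventions there are precisely what makes the $\pm\ell$ potential telescope uniformly. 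Suppose now, for contradiction, that some node of $\mathcal{C}$ has a finite value in $M$. If \emph{all} nodes of $\mathcal{C}$ are finite (so every $\phi$ above is an integer and no $\mu_M(e_i)=\infty$), summing the displayed inequality around $\mathcal{C}$ cancels the potentials and leaves $\sum_i \mu_M(e_i) \leq 0$, contradicting positivity.

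It remains to exclude the mixed case and conclude. If some node of $\mathcal{C}$ is divergent in $M$, then stability condition~2 applied to its outgoing cycle edge forces that edge to have weight $\infty$, and the displayed inequality then makes the next node's potential $+\infty$, i.e.\ divergent; iterating around $\mathcal{C}$ propagates divergence to every node. Thus the nodes of $\mathcal{C}$ are either all finite or all divergent in $M$; the former is ruled out by the telescoping contradiction, so all are divergent, which gives $A(\mathbf a,\infty) \in M$ for each node $\pgNode{A}{\mathbf a}$ of $\mathcal{C}$, as required. The main obstacle I expect is the uniform potential inequality: it demands a careful case-by-case reconciliation of the four sign conventions in $\delta_r^e$ with the $\sqsubseteq$-ordering of limit values (where a \emph{smaller} integer is the stronger fact for a $\tmin$ predicate), together with a clean handling of the divergence symbol $\infty$ so that a positive-weight cycle genuinely forces unbounded growth rather than stalling at a finite value---which is exactly the behaviour the two stability conditions are designed to secure.
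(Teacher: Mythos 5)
Your proof is correct and follows essentially the same route as the paper's: both transport the positive-weight cycle into $G^{M}_\Prog$ for ${M = \ILFPStepOp{\Prog}{\infty}}$ via stability condition~1, and both derive the contradiction from the fact that, since $M$ is a fixpoint of $\ILFPStepOp{\Prog}{}$, the limit value must increase by at least the edge weight along every cycle edge (with stability condition~2 handling divergent nodes). The only difference is packaging: the paper states your per-edge potential inequality as a path-propagation lemma (Lemma~\ref{lem:cycle-weight-propagates}, proved by induction on path length) and applies it once around the cycle, which amounts to exactly your telescoping computation.
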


Algorithm~\ref{alg:lim-stab-fp} uses this observation to deterministically
compute the fixpoint of $\Prog$. The algorithm iteratively applies
$\ILFPStepOp{\Prog}{}$; however, after each step, it computes the corresponding
value propagation graph (line~\ref{step:compute-dep-graph}) and, for each
${A(\mathbf a,\ell)}$ where node $\pgNode{A}{\mathbf a}$ occurs on a
positive-weight cycle (line~\ref{step:loop2-begin2}), it replaces $\ell$ with
$\infty$ (line~\ref{step:infty-propagation}). By Lemma~\ref{lem:soundness},
this is sound. Moreover, since the algorithm repeatedly applies
$\ILFPStepOp{\Prog}{}$, it necessarily derives each fact from
$\ILFPStepOp{\Prog}{\infty}$ eventually. Finally, Lemma~\ref{lem:termination}
shows that the algorithm terminates in time polynomial in the number of rules
in a semi-ground program. Intuitively, the proof of the lemma shows that,
without introducing a new edge or a new positive weight cycle in the value
propagation graph, repeated application of $\ILFPStepOp{\Prog}{}$ necessarily
converges in $O(|\Prog|^2)$ steps; moreover, the number of edges in $G^J_\Prog$
is at most quadratic $|\Prog|$, and so a new edge or a new positive weight
cycle can be introduced at most $O(|\Prog|^2)$ many times.

\begin{algorithm}[t]
\begin{footnotesize}
\caption{Entailment for Semi-Ground Stable Programs}\label{alg:lim-stab-fp}
\textbf{Input:} semi-ground stable program $\Prog$, fact $\alpha$ \\
\textbf{Output:} $\mathsf{true}$ if $\Prog \models \alpha$
\begin{algorithmic}[1]
    \State $J' \defeq \emptyset$
    \Repeat                                                                                                 \label{step:loop1-begin}
        \State $J \defeq J'$                                                                                \label{step:copy-j}
        \State $G^{J}_{\Prog} \defeq (\pgNodes,\pgEdges,\mu)$                                               \label{step:compute-dep-graph}
        \For{\textbf{each} $\pgNode{A}{\mathbf a} \in \pgNodes$ in a positive-weight cycle in $G^J_\Prog$}  \label{step:loop2-begin2}
            \State replace $A(\mathbf a,\ell)$ in $J$ with $A(\mathbf a,\infty)$                            \label{step:infty-propagation}
        \EndFor                                                                                             \label{step:loop2-end}
        \State $J' \defeq \ILFPStep{\Prog}{J}$                                                              \label{step:compute-imm-cons}
        \Until{$J=J'$}                                                                                      \label{step:loop1-end}
    \State \Return $\mathsf{true}$ if $J \models \alpha$ and $\mathsf{false}$ otherwise
\end{algorithmic}
\end{footnotesize}
\end{algorithm}

\begin{restatable}{lemma}{termination}\label{lem:termination}
    When applied to a semi-ground stable program $\Prog$,
    Algorithm~\ref{alg:lim-stab-fp} terminates after at most $8|\Prog|^6$
    iterations of the loop in
    lines~\ref{step:loop1-begin}--\ref{step:loop1-end}.
\end{restatable}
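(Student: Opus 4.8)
The plan is to show that the sequence of pseudo-interpretations produced by the algorithm is monotonically increasing and bounded above by $\ILFPStepOp{\Prog}{\infty}$, and then to partition the run into \emph{phases} delimited by changes in the structure of the value propagation graph, bounding both the number of phases and the number of iterations inside each phase by polynomials in $|\Prog|$. First I would establish monotonicity. Let $J_0, J_1, \dots$ be the values held in variable $J$ after the $\infty$-propagation of lines~\ref{step:loop2-begin2}--\ref{step:infty-propagation} at the start of successive iterations. Using Lemma~\ref{lem:imm-cons-op-props} (monotonicity of $\ILFPStepOp{\Prog}{}$) together with the fact that replacing a finite limit value by $\infty$ only increases a pseudo-interpretation w.r.t.\ $\sqsubseteq$, an induction shows $J_i \sqsubseteq J_{i+1}$. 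Lemma~\ref{lem:soundness} guarantees that each value set to $\infty$ in line~\ref{step:infty-propagation} genuinely diverges, so every $J_i \sqsubseteq \ILFPStepOp{\Prog}{\infty}$; by Theorem~\ref{thm:pseudo-mat-entails} this yields $|J_i| \leq |\Prog|$, whence each $G^{J_i}_\Prog$ has at most $|\Prog|$ nodes and at most $|\Prog|^2$ edges.

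Next I would count \emph{structural changes}. Since the $J_i$ increase and $J \sqsubseteq J'$ forces the edge set of $G^{J'}_\Prog$ to contain that of $G^{J}_\Prog$ (noted before Definition~\ref{def:stable-programs}), the node set and the edge set grow monotonically, so a new node or a new edge appears at most $|\Prog| + |\Prog|^2$ times in total. Stability is what also bounds the growth of the set $D_i$ of nodes set to $\infty$: by condition~1 of Definition~\ref{def:stable-programs} edge weights never decrease along the sequence, so once a cycle has positive weight it stays positive, and any cycle that first becomes positive at step $i$ must contain a node finite in $J_{i-1}$ that is thereby moved to $\infty$; hence each newly positive cycle strictly enlarges $D_i$, which can happen at most $|\Prog|$ times. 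Summing, the indices at which the triple $(\pgNodes, \pgEdges, D_i)$ changes number at most $2|\Prog|^2 + |\Prog|$.

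The main work, and the expected obstacle, is to bound the number of iterations inside a single phase, i.e.\ a maximal block during which $\pgNodes$, $\pgEdges$ and $D_i$ are all constant. Within such a phase the $\infty$-propagation of line~\ref{step:infty-propagation} changes nothing, so the algorithm merely iterates $\ILFPStep{\Prog}{}$ on a fixed graph $G = (\pgNodes, \pgEdges, \mu)$. Moreover, the subgraph induced by the finite-valued nodes can contain neither a positive-weight cycle nor an incoming edge from an $\infty$-node: either would move a finite node to $\infty$ (in the latter case because such an edge has weight $\infty$ by condition~2 of Definition~\ref{def:stable-programs}, so $\ILFPStep{\Prog}{}$ would set its target to $\infty$), enlarging $D_i$ and ending the phase. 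On this positive-cycle-free finite subgraph I would argue in Bellman--Ford style: one application of $\ILFPStep{\Prog}{}$ advances each finite limit value by at most one edge of $G$, since $\mathsf{opt}(r, J)$ depends only on the current values of the body atoms, so after $k$ applications the value of a node is the best value propagated along a path of length at most $k$; absence of positive cycles forces this optimum to be attained on a simple path of length at most $|\pgNodes| \leq |\Prog|$, so the finite values, and hence the whole pseudo-interpretation, stabilise after at most $|\Prog|$ further steps. The delicate point is to make precise that, as long as no structural change occurs, the per-step increment is governed exactly by the fixed weights $\mu$ and that stability forbids a positive cycle among finite nodes from materialising mid-phase; this is where both conditions of Definition~\ref{def:stable-programs} are genuinely needed.

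Finally I would combine the two bounds: with at most $2|\Prog|^2 + |\Prog|$ structural changes, and hence at most that many phases, each contributing a polynomial number of $\ILFPStep{\Prog}{}$-iterations (crudely bounded by $2|\Prog|^2$ to absorb both value propagation and the fixpoint-detection step of the loop), the product is at most $8|\Prog|^6$, yielding the stated iteration count.
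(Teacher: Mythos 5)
Your overall architecture (a $\sqsubseteq$-monotone sequence bounded by $\ILFPStepOp{\Prog}{\infty}$, a count of structural events, and a per-phase convergence bound driven by the absence of positive-weight cycles among finite nodes) matches the paper's strategy at a coarse level, but the within-phase argument has a genuine gap. Your Bellman--Ford step treats $G=(\pgNodes,\pgEdges,\mu)$ as a \emph{fixed-weight} graph during a phase, yet the weights are not fixed: $\mu(e)$ for ${e=\pgEdge{B}{\mathbf b}{A}{\mathbf a}}$ is ${\max_r \delta_r^e(J)}$, and $\delta_r^e(J)$ depends on $\mathsf{opt}(r,J)$, which in turn depends on the current values of \emph{all} body atoms of $r$ --- not only the one labelling $e$. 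Stability gives you only that weights are non-decreasing along the run, not that they are constant, so ``the best value propagated along a path of length at most $k$'' is not an invariant of the computation and the conclusion that finite values stabilise after $|\pgNodes|$ further applications does not follow. The point you flag as ``delicate'' is in fact false as stated. The paper's proof replaces this with an inductive claim ($\diamondsuit$) that traces each value increase \emph{backward through the sequence of pseudo-interpretations}: at every step it identifies one predecessor whose value must have increased (using the inequality ${\Val{J}{B}{\mathbf b}+\mu(e)\ge \Val{\ILFPStep{\set{r}}{J}}{A}{\mathbf a}}$), and shows that a trace of length exceeding the longest simple path closes a positive-weight cycle, contradicting the assumption that all such nodes already carry $\infty$. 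That induction is the real content of the lemma and is absent from your sketch.

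A second, related omission is in your phase accounting: you count only new nodes, new edges, and $\infty$-promotions, but a node's value can also jump because a rule becomes \emph{newly applicable} without any of these events occurring --- for instance a rule whose head numeric term is ground produces no edge at all, and a rule producing an already-present edge can become applicable and thereby raise $\mu(e)$. The paper introduces the sets $\ActRules{J}{A}{\mathbf a}$ precisely to count these events (contributing the fourth factor, bounded by $|\Prog|^2$, in the product ${m\cdot(m+1)\cdot(m^2+1)\cdot(m^2+1)\le 8m^6}$); without them your trace-back has no terminating case when an increase cannot be attributed to an existing edge, and your phase count is incomplete. Your bound on $\infty$-promotions and on new edges, and your use of condition~2 of Definition~\ref{def:stable-programs} to exclude edges from $\infty$-nodes into finite ones, are fine.
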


Lemmas~\ref{lem:soundness} and~\ref{lem:termination} imply the following
theorem.

\begin{restatable}{theorem}{algorithmcorrect}\label{thm:correctness}
    For $\Prog$ a semi-ground stable program, $\Dat$ a dataset, and $\alpha$ a
    fact, Algorithm~\ref{alg:lim-stab-fp} decides ${\Prog \cup \Dat \models
    \alpha}$ in time polynomial in $\ssize{\Prog \cup \Dat}$ and exponential in
    ${\max_{r \in \Prog} \ssize{r}}$.
\end{restatable}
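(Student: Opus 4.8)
The plan is to combine the soundness lemma (Lemma~\ref{lem:soundness}) and the termination lemma (Lemma~\ref{lem:termination}) into a correctness-plus-complexity argument for Algorithm~\ref{alg:lim-stab-fp}, treating the combined and data complexities together by bookkeeping where the exponential versus polynomial dependence arises.

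First I would establish \emph{soundness and completeness}, i.e.\ that the algorithm returns $\mathsf{true}$ iff ${\Prog \cup \Dat \models \alpha}$. By Theorem~\ref{thm:pseudo-mat-entails}, this amounts to showing the final value of $J$ equals $\ILFPStepOp{\Prog \cup \Dat}{\infty}$ (writing $\Prog$ for $\Prog \cup \Dat$ below). I would argue two inclusions. For ${J \sqsubseteq \ILFPStepOp{\Prog}{\infty}}$, I maintain the loop invariant ${J' \sqsubseteq \ILFPStepOp{\Prog}{\infty}}$: initially $J' = \emptyset$ holds trivially; the invariant is preserved by line~\ref{step:compute-imm-cons} because $\ILFPStepOp{\Prog}{}$ is monotone (Lemma~\ref{lem:imm-cons-op-props}) and $\ILFPStepOp{\Prog}{\infty}$ is a fixpoint, and it is preserved by the $\infty$-propagation in lines~\ref{step:loop2-begin2}--\ref{step:infty-propagation} precisely by Lemma~\ref{lem:soundness}, since any node $\pgNode{A}{\mathbf a}$ on a positive-weight cycle in $G^J_\Prog$ already satisfies ${A(\mathbf a,\infty) \in \ILFPStepOp{\Prog}{\infty}}$ whenever ${J \sqsubseteq \ILFPStepOp{\Prog}{\infty}}$. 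For the reverse inclusion ${\ILFPStepOp{\Prog}{\infty} \sqsubseteq J}$, I observe that the loop only terminates when ${J = J'= \ILFPStep{\Prog}{J}}$, so the final $J$ is a pseudo-model of $\Prog$ by Lemma~\ref{lem:imm-cons-op-props}; by the minimality clause of Theorem~\ref{thm:pseudo-mat-entails} we then get ${\ILFPStepOp{\Prog}{\infty} \sqsubseteq J}$. Combining the two inclusions gives ${J = \ILFPStepOp{\Prog}{\infty}}$, and the correctness of the return test ${J \models \alpha}$ follows from Theorem~\ref{thm:pseudo-mat-entails}.

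Next I would bound the running time. Lemma~\ref{lem:termination} already caps the number of loop iterations at $8|\Prog|^6$, where here $\Prog$ denotes the semi-grounding; I must convert this into the stated resource bounds. The chain is: starting from $\Prog \cup \Dat$ (over the original signature), the semi-grounding $\Prog'$ has size polynomial in $\ssize{\Prog \cup \Dat}$ and exponential in ${\max_{r} \ssize{r}}$ (each rule is instantiated by choosing constants of the program for its non-limit variables, and the maximal rule size is unchanged), so $|\Prog'|$ is polynomial in $\ssize{\Prog \cup \Dat}$ and in data complexity simply polynomial in $\ssize{\Dat}$. Each loop iteration performs a bounded number of graph constructions and one application of $\ILFPStepOp{\Prog'}{}$: building $G^J_{\Prog'}$ and detecting positive-weight cycles (e.g.\ via a Bellman--Ford-style shortest/longest-path computation on the $O(|\Prog'|^2)$-edge graph) is polynomial in $|\Prog'|$, and evaluating $\ILFPStep{\Prog'}{J}$ reduces to solving, for each rule, a system of \emph{linear} inequalities $\constraints{r}{J}$ and optimising a linear objective, which is polynomial when the number of numeric variables per rule is bounded and otherwise solvable within the budget. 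Multiplying the per-iteration cost by the $O(|\Prog'|^6)$ iteration bound yields a total that is polynomial in $|\Prog'|$, hence polynomial in $\ssize{\Prog \cup \Dat}$ and exponential in ${\max_{r} \ssize{r}}$; in data complexity, with $\Prog'$ (the fixed part) constant, everything collapses to polynomial in $\ssize{\Dat}$.

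The main obstacle I anticipate is not the high-level invariant argument but the careful integer-magnitude and encoding-size accounting needed to certify that each iteration truly runs within the claimed budget. Because $\infty$-propagation short-circuits the divergent facts, every finite limit value kept in $J$ must be shown to stay polynomially bounded in bit-length across all $O(|\Prog'|^6)$ iterations---otherwise the linear-programming subroutines in line~\ref{step:compute-imm-cons} and the cycle-detection arithmetic could incur super-polynomial costs. I would address this by arguing that, as long as a value has not been set to $\infty$, Lemma~\ref{lem:termination}'s analysis confines its growth to polynomially many increments each of magnitude bounded by the integers in $\Prog'$, so the bit-sizes stay within the single/double-exponential envelope already permitted by Theorem~\ref{th:pseudo}; feeding these bounds into the complexity of solving $\constraints{r}{J}$ closes the gap. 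A minor secondary point is verifying that replacing $\ell$ by $\infty$ in line~\ref{step:infty-propagation} is well-defined (the $\ell$ referenced is uniquely determined by $\pgNode{A}{\mathbf a}$ and $J$), which follows directly from Definition~\ref{def:pseudo-interpretation}.
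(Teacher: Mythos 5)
Your proposal is correct and follows essentially the same route as the paper: correctness from Lemma~\ref{lem:soundness} together with the fixpoint characterisation of Theorem~\ref{thm:pseudo-mat-entails} and the minimality of pseudo-models, and the running time from the iteration bound of Lemma~\ref{lem:termination} combined with a polynomial per-iteration cost (the paper delegates the evaluation of $\ILFPStepOp{\Prog}{}$ to Lemma~\ref{lem:ilfp-step-polynomial} and detects positive-weight cycles by negating weights and running a negative-cycle test, exactly as you suggest). Two small points: the theorem already assumes $\Prog$ is semi-ground, so your semi-grounding step belongs to the subsequent complexity theorem rather than here, and the integer-magnitude bookkeeping you flag is resolved not via Theorem~\ref{th:pseudo} but by observing that each application of $\ILFPStepOp{\Prog}{}$ increases bit-lengths only additively (by an amount polynomial in the rule sizes), so after polynomially many iterations $\ssize{J}$ remains polynomial in $\ssize{\Prog \cup \Dat}$.
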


Since the running time is exponential in the maximal size of a rule, and
semi-grounding does not increase rule sizes, Algorithm~\ref{alg:lim-stab-fp}
combined with a semi-grounding preprocessing step provides an exponential time
decision procedure for stable, limit-linear programs. This upper bound is tight
since entailment in plain Datalog is already $\textsc{ExpTime}$-hard in
combined and $\textsc{PTime}$-hard in data complexity.

\begin{restatable}{theorem}{stablecomplexity}
    For $\Prog$ a stable program and $\alpha$ a fact, checking ${\Prog \models
    \alpha}$ is \textsc{ExpTime}-complete in combined and
    \textsc{PTime}-complete in data complexity.
\end{restatable}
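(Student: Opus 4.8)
The plan is to establish both the upper and lower bounds for the two complexity measures, splitting the argument into a membership part and a hardness part for each of combined and data complexity.

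For the \textbf{upper bounds}, the plan is to combine the semi-grounding preprocessing with Theorem~\ref{thm:correctness}. Given a stable program $\Prog$ and a fact $\alpha$, I would first compute the semi-grounding $\Prog'$ of $\Prog$; by the remarks following Definition~\ref{def:semi-grounding}, this preserves entailment, and the key observation (already noted in the text) is that semi-grounding does not increase the maximal size $\max_{r} \ssize{r}$ of any rule, although it may produce exponentially many rules since it substitutes constants of $\Prog$ for the finitely many object variables. Then I would invoke Theorem~\ref{thm:correctness}, which decides ${\Prog' \cup \Dat \models \alpha}$ in time polynomial in $\ssize{\Prog' \cup \Dat}$ and exponential in $\max_{r \in \Prog'} \ssize{r}$. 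In combined complexity, $\ssize{\Prog'}$ is at most exponential in $\ssize{\Prog}$ while $\max_r \ssize{r}$ stays polynomial, so the total running time is exponential, giving the \textsc{ExpTime} upper bound. In data complexity, $\Prog'$ is fixed so $\max_r \ssize{r}$ is a constant, the semi-grounding of the fixed part is constant-size, and grounding over $\Dat$ adds only polynomially many rules; hence the running time is polynomial in $\ssize{\Dat}$, yielding the \textsc{PTime} upper bound. One subtlety I would check is that $\Prog'$ is still stable (stability of a limit-linear program is defined via its semi-grounding, so this should follow directly) so that Theorem~\ref{thm:correctness} applies.

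For the \textbf{lower bounds}, the plan is to reduce from entailment in plain positive Datalog without arithmetic, which is \textsc{ExpTime}-hard in combined and \textsc{PTime}-hard in data complexity. The main point is that every plain Datalog program is trivially a stable limit-linear program: a program with no numeric positions contains no limit predicates, so its value propagation graph $G^J_\Prog$ has no nodes and no edges for every $J$, whence the stability conditions of Definition~\ref{def:stable-programs} hold vacuously, and the limit-linearity conditions of Definition~\ref{def:limit-linear} also hold vacuously since there are no numeric terms. Thus the identity map is a correctness-preserving reduction from plain Datalog entailment, transferring both hardness results. I would state this observation explicitly and cite the known Datalog lower bounds.

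The combination of the matching upper and lower bounds for each measure then gives \textsc{ExpTime}-completeness in combined complexity and \textsc{PTime}-completeness in data complexity. The step I expect to require the most care is the combined-complexity upper bound: I must verify that although semi-grounding blows up the \emph{number} of rules exponentially, it leaves the per-rule size polynomial, so that the ``polynomial in total size, exponential in max rule size'' guarantee of Theorem~\ref{thm:correctness} really collapses to a single exponential rather than a double exponential. This hinges on the precise definition of semi-grounding, which only substitutes object constants for the bounded-arity object-variable positions and leaves the numeric-variable structure of each rule intact, so I would make this size bookkeeping explicit.
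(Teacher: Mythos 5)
Your proposal is correct and follows essentially the same route as the paper's own proof: semi-ground, note that $\ssize{\Prog'}$ is at most exponential in $\ssize{\Prog}$ while $\max_{r}\ssize{r}$ is preserved, apply Theorem~\ref{thm:correctness} for the upper bounds, and inherit the lower bounds from plain Datalog. The only difference is that you spell out details the paper leaves implicit (why plain Datalog programs are vacuously stable and limit-linear, and why the semi-grounding remains stable), which is a reasonable elaboration rather than a different argument.
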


\subsection{Type-Consistent Programs}

Unfortunately, the class of stable programs is not recognisable, which can
again be shown by a reduction from Hilbert's tenth problem.

\begin{restatable}{proposition}{stabilityundecidable}
    Checking stability of a limit-linear program $\Prog$ is undecidable.
\end{restatable}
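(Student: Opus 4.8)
The plan is to reduce Hilbert's tenth problem to the \emph{complement} of stability checking: from a polynomial $p(x_1,\dots,x_k)$ with integer coefficients I would build, in polynomial time, a limit-linear program $\Prog_p$ that is \emph{unstable} if and only if $p(x_1,\dots,x_k)=0$ has an integer solution. Since solvability of Diophantine equations is undecidable, and the complement of an undecidable problem is undecidable, this yields undecidability of stability. The reduction behind Theorem~\ref{thm:typed-ent-undecidable} multiplies the \emph{numeric} variables inside a comparison atom, which is forbidden in limit-linear programs; the key new idea is to confine all multiplication to the coefficient terms $s_0$ and $s_i$, which may only mention \emph{non-limit} variables. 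This keeps the program limit-linear while still letting me assemble the polynomial $p$.

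Concretely, $\Prog_p$ would consist of a single gadget rule (plus the type declarations of its predicates): one ordinary numeric predicate $A_i$ per variable $x_i$, a $\tmin$ source predicate $B$, and a $\tmax$ target predicate $H$, related by
\[
\bigwedge_{i=1}^{k} A_i(x_i)\ \wedge\ B(m)\ \wedge\ (p(\mathbf x)\le 0)\ \wedge\ (0\le p(\mathbf x))\ \wedge\ (m\le 0)\ \to\ H\big(\textstyle\sum_{i=1}^{k} x_i + m\big),
\]
where the two comparison atoms jointly encode $p(\mathbf x)=0$. Each $x_i$ occurs in the numeric argument of the head limit atom $H$, so by Definition~\ref{def:semi-grounding} it is \emph{not} eliminated during semi-grounding; being bound by $A_i(x_i)$, its value is whatever the ordinary numeric facts of the tested pseudo-interpretation record, and since those facts are unconstrained the $x_i$ effectively range over all of $\mathbb{Z}$. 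This is precisely the power that limit-linearity adds over the multiplication-free fragment, and it is what a naive linear encoding lacks.

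For correctness I would analyse the value propagation graph (Definition~\ref{def:vp-graph}). If $p(\mathbf v)=0$ for some integer tuple $\mathbf v$, take a pseudo-interpretation $J$ containing $A_i(v_i)$ for each $i$, the fact $B(\infty)$, and the fact $H(0)$. For the instantiation $x_i\mapsto v_i$ the constraint $\constraints{r}{J}$ reduces to $(p(\mathbf v)\le 0)\wedge(0\le p(\mathbf v))\wedge(m\le 0)$, which is satisfiable, so the rule produces the edge $\pgEdge{B}{}{H}{}$; since $H$ is $\tmax$ and $m$ is bounded above by $0$, the optimum $\mathsf{opt}(r,J)=\sum_i v_i$ is finite, so the edge carries finite weight even though $B(\infty)\in J$. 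This violates clause~(2) of Definition~\ref{def:stable-programs}, so $\Prog_p$ is unstable. Conversely, if $p$ has no integer root then $\constraints{r}{J}$ is unsatisfiable for every $J$ and every instantiation of the $x_i$, the rule is never applicable, no edges are ever produced, and both stability clauses hold vacuously for all $J\sqsubseteq J'$; hence $\Prog_p$ is stable.

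The hard part will be the bookkeeping that makes the two reductions meet in the middle. I would need to check carefully that $\Prog_p$ is genuinely limit-linear (the polynomial appears only inside $s_0$ and the monomial coefficients, never multiplying the limit variable $m$), that the simplifying assumptions on rule bodies are met, and---most delicately---that the machinery of Sections~\ref{sec:decidability} and~\ref{sec:tractability} applies uniformly to the non-limit head variables whose values are supplied by $J$. The subtlest point is the converse direction: I must rule out that some pair $J\sqsubseteq J'$ produces a \emph{spurious} violation of clause~(1) or~(2) when $p$ is unsolvable. Once it is established that the gadget produces its edge exactly when some instantiation satisfies $p(\mathbf x)=0$, undecidability of stability follows immediately from that of Hilbert's tenth problem.
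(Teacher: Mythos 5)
Your overall strategy---a reduction from Hilbert's tenth problem via a single limit-linear rule in which the polynomial variables are bound by ordinary numeric atoms $A_i(x_i)$ and forced to a root of $p$ by a pair of opposing comparison atoms, with a limit body atom $B(m)$ and $(m\le 0)$ driving an edge of the value propagation graph---is exactly the paper's. The paper's gadget uses the head $B(m+1)$ with $B$ a $\tmax$ predicate and witnesses instability through clause~(1) of Definition~\ref{def:stable-programs}: for $J_1=\set{A_1(\sigma(x_1)),\dots,A_n(\sigma(x_n)),B(0)}$ and $J_2$ with $B(1)$ instead of $B(0)$, the weight of the self-loop on $\pgNode{B}{}$ drops from $1$ to $0$. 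You instead witness instability through clause~(2), using $B(\infty)$ together with a finite $\mathsf{opt}(r,J)$; that clause is equally usable, and your converse direction (no root $\Rightarrow$ no applicable instance $\Rightarrow$ no edges $\Rightarrow$ vacuous stability) coincides with the paper's.

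The genuine problem is the head $H\bigl(\sum_i x_i + m\bigr)$ and the mechanism you build on it. You deliberately place the $x_i$ in the numeric argument of the head limit atom so that, by Definition~\ref{def:semi-grounding}, they survive semi-grounding, and you then let the facts of $J$ ``supply'' their values so that they ``range over all of $\mathbb{Z}$''. But the resulting rule is not semi-ground---a semi-ground rule may only contain variables occurring in a limit \emph{body} atom---and the entire stability apparatus ($\constraints{r}{J}$, applicability, $\mathsf{opt}(r,J)$, the value propagation graph of Definition~\ref{def:vp-graph}, and Definition~\ref{def:stable-programs} itself) is defined only for semi-ground programs. Concretely, clause~(ii) of Definition~\ref{def:rule-app} tests whether an object or ordinary numeric atom $\alpha\in\sbody{r}$ is a member of $J$; for a non-ground $A_i(x_i)$ this test either fails outright (adding $(0<0)$, so the rule is never applicable and no edge is ever produced), or, if the $x_i$ are instead treated as free variables of $\constraints{r}{J}$, the term $\sum_i x_i+m$ becomes unbounded above, $\mathsf{opt}(r,J)=\infty$, the edge weight is $\infty$, and clause~(2) is \emph{satisfied}---the opposite of what your forward direction needs. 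The repair is simply to keep the polynomial variables out of the head (e.g., head $H(m)$, or the paper's $B(m+1)$): then the $x_i$ occur only in ordinary numeric and comparison atoms, semi-grounding instantiates them to arbitrary integers, the instance $x_i\mapsto v_i$ is applicable exactly to pseudo-interpretations containing the facts $A_i(v_i)$, and your clause-(2) argument (or the paper's clause-(1) argument) goes through unchanged.
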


We next provide a sufficient condition for stability that captures programs
such as those in Examples~\ref{ex:diffusion} and~\ref{ex:counting-paths}. Intuitively,
Definition~\ref{def:type-consistent} syntactically prevents certain harmful
interactions. In the second rule of program $\Prog'_c$ from
Example~\ref{ex:stability}, numeric variable $m$ occurs in a $\tmax$ atom and
on the left-hand side of a comparison atom ${(m \leq n)}$; thus, if the rule is
applicable for some value of $m$, it is not necessarily applicable for each
${m' \geq m}$, which breaks stability.

\begin{definition}\label{def:type-consistent}
    A semi-ground limit-linear rule $r$ is \emph{type-consistent} if
    \begin{compactitem}[--]
        \item each numeric term $t$ in $r$ is of the form ${k_0 + \sum_{i=1}^n
        k_i \times m_i}$ where $k_0$ is an integer and each $k_i$, ${1 \leq i
        \leq n}$, is a nonzero integer, called the \emph{coefficient of
        variable $m_i$ in $t$};

        \item if ${\head{r} = A(\mathbf a, s)}$ is a limit atom, then each
        variable occurring in $s$ with a positive (resp.\ negative) coefficient
        also occurs in a (unique) limit body atom or $r$ that is of the same
        (resp.\ different) type (i.e., $\tmin$ vs.\ $\tmax$) as ${\head{r}}$;
        and
        
        \item for each comparison ${(s_1 < s_2)}$ or ${(s_1 \leq s_2)}$ in $r$,
        each variable occurring in $s_1$ with a positive (resp.\ negative)
        coefficient also occurs in a (unique) $\tmin$ (resp.\ $\tmax$) body
        atom, and each variable occurring in $s_2$ with a positive (resp.\
        negative) coefficient also occurs in a (unique) $\tmax$ (resp.\
        $\tmin$) body atom of\/ $r$.
    \end{compactitem}
    A semi-ground limit-linear program is \emph{type-consistent} if all of its
    rules are type-consistent. Moreover, a limit-linear program $\Prog$ is
    \emph{type-consistent} if the program obtained by first semi-grounding
    $\Prog$ and then simplifying all numeric terms as much as possible is
    type-consistent.
\end{definition}

The first condition of Definition~\ref{def:type-consistent} ensures that each
variable occurring in a numeric term contributes to the value of the term. For
example, it disallows terms such as $0\cdot x$ and $x-x$, since a rule with
such a term in the head may violate the second condition. Moreover, the second
condition of Definition~\ref{def:type-consistent} ensures that, if the value of
a numeric variable $x$ occurring in the head `increases' w.r.t.\ the type of
the body atom introducing $x$ (i.e., $x$ increases if it occurs in a $\tmax$
body atom and decreases otherwise), then so does the value of the numeric term
in the head; this is essential for the first condition of stability (cf.\
Definition~\ref{def:stable-programs}). Finally, the third condition of
Definition~\ref{def:type-consistent} ensures that comparisons cannot be
invalidated by `increasing' the values of the variables involved, which is
required for both conditions of stability.

Type consistency is a purely syntactic condition that can be checked by looking
at one rule and one atom at a time. Hence, checking type consistency is
feasible in $\textsc{LogSpace}$.

\begin{restatable}{proposition}{typeconsistentprogramsstable}
    Each type-consistent limit-linear program is stable.
\end{restatable}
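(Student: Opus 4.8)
The plan is to show that a type-consistent limit-linear program $\Prog$ satisfies both conditions of Definition~\ref{def:stable-programs}. Since stability of a limit-linear program is defined via its semi-grounding, and type consistency is likewise defined via semi-grounding followed by term simplification, I may assume $\Prog$ is already semi-ground and that all numeric terms are in the simplified form ${k_0 + \sum_i k_i \times m_i}$ with nonzero coefficients $k_i$. The goal is then to take arbitrary pseudo-interpretations $J \sqsubseteq J'$ with value propagation graphs ${G^J_\Prog = (\pgNodes,\pgEdges,\mu)}$ and ${G^{J'}_\Prog = (\pgNodes',\pgEdges',\mu')}$, fix an edge ${e = \pgEdge{B}{\mathbf b}{A}{\mathbf a} \in \pgEdges}$, and verify (1)~${\mu(e) \le \mu'(e)}$ and (2)~that ${B(\mathbf b,\infty) \in J}$ forces ${\mu(e) = \infty}$.

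First I would fix a rule $r$ producing $e$ and analyse how $\mathsf{opt}(r,J)$ relates to $\mathsf{opt}(r,J')$. The key monotonicity observation is that, because $J \sqsubseteq J'$ enlarges limit values in the direction dictated by each predicate's type, the third condition of type consistency guarantees that every comparison atom applicable under $J$ remains applicable under $J'$ (raising a $\tmin$ value on the left or lowering a $\tmax$ value, etc., can only make ${s_1 < s_2}$ or ${s_1 \le s_2}$ easier to satisfy); hence if $r$ is applicable to $J$ it is applicable to $J'$, consistent with the already-noted inclusion ${\pgEdges \subseteq \pgEdges'}$. Then the second condition of type consistency controls the head term: variables entering the head with positive coefficient come from body atoms whose values move in the same direction as the head's optimisation, and variables with negative coefficient come from the opposite type, so in both cases enlarging $J$ to $J'$ drives $\mathsf{opt}(r,\cdot)$ in the optimising direction. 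I would make this precise by examining the definition of $\delta^e_r$ case by case on the $\tmin$/$\tmax$ combination of $A$ and $B$, showing in each of the four finite cases that replacing $J$ by $J'$ (which changes the relevant $\ell$ for $B$ and the optimum for $A$ in compatible directions) can only increase $\delta^e_r(J)$; taking the max over producing rules then yields ${\mu(e) \le \mu'(e)}$.

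For the second condition, I would argue that if ${B(\mathbf b,\infty) \in J}$, then the body variable $m$ contributed by the atom ${B(\mathbf b,m)}$ to the head term $s$ is effectively unbounded in the optimising direction, and since its coefficient in $s$ is nonzero (first condition of type consistency) and has the sign mandated by the second condition, the optimum value $\mathsf{opt}(r,J)$ itself becomes $\infty$; by the last case of the $\delta^e_r$ definition this gives ${\delta^e_r(J) = \mathsf{opt}(r,J) = \infty}$ and hence ${\mu(e) = \infty}$. The nonzero-coefficient requirement is exactly what rules out degenerate terms like $0 \cdot m$ or $m - m$ that would otherwise let an infinite input produce a finite head.

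The main obstacle I expect is the careful bookkeeping of signs across the four $\tmin$/$\tmax$ cases in the definition of $\delta^e_r(J)$ combined with the sign conditions on coefficients in Definition~\ref{def:type-consistent}: one must confirm that ``same type'' versus ``different type'' in the head condition, and the $\tmin$-left/$\tmax$-right pattern in the comparison condition, line up precisely with the direction in which $J \sqsubseteq J'$ moves each limit value, so that every inequality points the right way. A subtle point is that $s$ may involve several body atoms simultaneously, so I would treat the head term as a sum and verify the monotone behaviour coefficient by coefficient, using that distinct variables come from distinct (unique) limit body atoms. Once the sign analysis is pinned down, both conditions follow uniformly, establishing stability.
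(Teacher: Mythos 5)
Your proposal matches the paper's proof: the paper factors the argument into exactly the two pieces you describe---Lemma~\ref{lem:delta-nondecreasing}, a per-edge sign analysis showing that $\delta_{r}^e$ is nondecreasing under $\sqsubseteq$ (condition~1 of stability), and Lemma~\ref{lem:infinity-propagates}, showing that an $\infty$-valued limit body atom whose variable occurs in the head with a nonzero, correctly-signed coefficient forces ${\mathsf{opt}(r,J) = \infty}$ (condition~2). One small correction: preservation of rule applicability under ${J \sqsubseteq J'}$ is plain monotonicity of $\constraints{r}{\cdot}$ and needs no type consistency; the third condition of Definition~\ref{def:type-consistent} is instead what lets you shift a solution's value of the relevant variable to track the new limit value while keeping the comparison atoms satisfied, which is where it is used in both lemmas.
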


\begin{restatable}{proposition}{typeconsistentprogramschecking}
    Checking whether a limit-linear program is type-consistent can be
    accomplished in $\textsc{LogSpace}$.
\end{restatable}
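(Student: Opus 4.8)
The plan is to show that, although Definition~\ref{def:type-consistent} routes type-consistency through the semi-grounding of $\Prog$ (which can be exponentially larger than $\Prog$), the condition can be decided by a local inspection of the original rules that never materialises the semi-grounding. First I would observe that semi-grounding affects only the object arguments and the non-limit variables of a rule: the limit body atoms, their predicate types ($\tmin$ vs.\ $\tmax$), and the decomposition of each numeric term into its limit-linear normal form $s_0 + \sum_i s_i \times m_i$ are all inherited unchanged from the original rule. Consequently, for a fixed rule $r$, the only quantity that varies across its semi-ground instances is the integer to which each coefficient expression $s_i$ simplifies once the non-limit variables have been replaced by integer constants of $\Prog$. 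Thus type-consistency of the whole semi-grounding reduces to a statement, for each original rule, about the possible signs of its coefficient expressions.

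Next I would make this reduction precise. Since each numeric variable occurs in at most one standard body atom, every limit variable $m_i$ has a uniquely determined body-atom type $T_i$ that is the same in all semi-ground instances. After simplification, $m_i$ appears in a given numeric term exactly in those instances where $s_i$ is nonzero, and then with a definite sign. If $s_i$ takes a positive value under one constant substitution and a negative value under another, then one instance of $r$ forces $T_i$ to match $\head{r}$ (or the relevant side of a comparison) while another forces the opposite, so $\Prog$ is not type-consistent; conversely, if the nonzero values of every coefficient expression share a single sign compatible with the corresponding body-atom type, every instance satisfies the second and third conditions of Definition~\ref{def:type-consistent} (and the first condition holds automatically once terms are in simplified linear normal form). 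Hence $\Prog$ is type-consistent if and only if, for every rule $r$ and every limit variable $m_i$ in a numeric term of $r$, the nonzero values of $s_i$ over all integer-constant substitutions have a common sign, and, when that sign is positive (resp.\ negative), the fixed type $T_i$ satisfies the same-type/different-type requirement dictated by those conditions.

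It remains to argue that each ingredient is computable in $\textsc{LogSpace}$. Determining whether the nonzero values of a coefficient expression $s_i$---a product of integer literals with combined sign $\mathrm{sign}(c)$ and $p$ variable factors---share a common sign reduces to inspecting whether the nonzero integer constants of $\Prog$ are sign-homogeneous, together with $\mathrm{sign}(c)$ and the parity of $p$: if $p = 0$ the common sign is $\mathrm{sign}(c)$; if $p \ge 1$ and the nonzero constants are all positive it is again $\mathrm{sign}(c)$, if they are all negative it is $\mathrm{sign}(c) \cdot (-1)^p$, and if both signs occur no common sign exists. A single scan over $\Prog$ records whether positive and negative nonzero constants occur; a single scan over $s_i$ computes $\mathrm{sign}(c)$ and the parity of its variable factors; and the type-matching test compares the predicate type of the one body atom containing $m_i$ against the type of $\head{r}$ or the relevant side of the comparison. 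Each test concerns one rule, one numeric term, and one variable, and uses only a constant number of input pointers and counters bounded by $\ssize{\Prog}$---hence of logarithmic size. Iterating over all rules, terms, and variables with logarithmically many index counters yields a $\textsc{LogSpace}$ procedure.

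The main obstacle I anticipate is the first reduction: correctly showing that the universal quantification over the exponentially many semi-ground instances collapses to the per-rule sign condition, and pinning down the exact syntactic characterisation of when a product of integer literals and non-limit variables has, among its nonzero values, a single fixed sign under all integer-constant substitutions. Once that characterisation is established, the $\textsc{LogSpace}$ bound follows routinely from the locality of the remaining checks.
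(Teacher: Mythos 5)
Your overall strategy coincides with the paper's: check each rule, each numeric term, and each limit variable independently, reduce type-consistency to the question of which signs the coefficient product $s_i$ can assume under integer-constant substitutions, and decide that question by a constant-pointer scan. The reduction from the exponentially large semi-grounding to this per-term sign condition is sound, for the reason you give (a violating instance of one term extends to a violating semi-ground rule, and conversely).

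However, your sign characterisation is wrong in exactly the case you flagged as the delicate one. You claim that if $\Prog$ contains both positive and negative nonzero integers and $s_i$ has $p \geq 1$ variable factors, then no common sign exists. This fails when every variable occurring in $s_i$ has \emph{even} multiplicity: for $s_i = 2 \times x \times x$, every nonzero grounding yields $2a^2 > 0$, so the nonzero values do share a common (positive) sign even though $\Prog$ may contain constants of both signs. Your procedure would therefore wrongly reject type-consistent programs. The total parity of $p$ together with $\mathrm{sign}(c)$ is not enough information; in the mixed-sign case one must additionally check whether \emph{some individual variable} occurs an odd number of times in $s_i$ (only then can the variable part of the product be flipped to $-1$ by assigning that variable a negative constant and the rest positive ones). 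The paper's proof enumerates precisely these cases---e.g., ``the product of all integers in $t_i$ is negative, $\Prog$ contains both positive and negative integers, and some variable $t_i^j$ has an odd number of occurrences in $t_i$''. The repair is cheap (counting one variable's occurrences modulo $2$ is still $\textsc{LogSpace}$), but as stated your case analysis is incomplete and the correctness of the algorithm does not go through.
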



\section{Conclusion and Future Work}

We have introduced several decidable/tractable fragments of Datalog with
integer arithmetic, thus obtaining a sound theoretical foundation for
declarative data analysis. We see many challenges for future work. First, our
formalism should be extended with aggregate functions. While certain forms of
aggregation can be simulated by iterating over the object domain, as in our
examples in Section~\ref{sec:limit-programs}, such a solution may be too
cumbersome for practical use, and it relies on the existence of a linear order
over the object domain, which is a strong theoretical assumption. Explicit
support for aggregation would allow us to formulate tasks such as the ones in
Section~\ref{sec:limit-programs} more intuitively and without relying on the
ordering assumption. Second, it is unclear whether integer constraint solving
is strictly needed in Step~\ref{step:compute-imm-cons} of
Algorithm~\ref{alg:lim-stab-fp}: it may be possible to exploit stability of
$\Prog$ to compute $\ILFPStep\Prog J$ more efficiently. Third, we shall
implement our algorithm and apply it to practical data analysis problems.
Fourth, it would be interesting to establish connections between our results
and existing work on data-aware artefact
systems~\cite{DBLP:journals/tods/DamaggioDV12,DBLP:journals/jcss/KoutsosV17},
which faces similar undecidability issues in a different formal setting.


\section*{Acknowledgments}

We thank Christoph Haase for explaining to us his results on Presburger
arithmetic and semi-linear sets as well as for providing a proof for
Lemma~\ref{lem:validity-presburger}. Our work has also benefited from
discussions with Michael Benedikt.  This research was supported by the Royal
Society and the EPSRC projects DBOnto, MaSI$^3$, and ED$^3$.

\bibliographystyle{named}
\bibliography{references}

\begin{thebibliography}{}

\bibitem[\protect\citeauthoryear{Alvaro \bgroup \em et al.\egroup
  }{2010}]{DBLP:conf/eurosys/AlvaroCCEHS10}
Peter Alvaro, Tyson Condie, Neil Conway, Khaled Elmeleegy, Joseph~M.
  Hellerstein, and Russell Sears.
\newblock {BOOM} analytics: exploring data-centric, declarative programming for
  the cloud.
\newblock In {\em EuroSys}. {ACM}, 2010.

\bibitem[\protect\citeauthoryear{Beeri \bgroup \em et al.\egroup
  }{1991}]{DBLP:journals/jlp/BeeriNST91}
Catriel Beeri, Shamim~A. Naqvi, Oded Shmueli, and Shalom Tsur.
\newblock Set constructors in a logic database language.
\newblock {\em J. Log. Program.}, 10(3{\&}4), 1991.

\bibitem[\protect\citeauthoryear{Berman}{1980}]{DBLP:journals/tcs/Berman80}
Leonard Berman.
\newblock The complexitiy of logical theories.
\newblock {\em Theor. Comput. Sci.}, 11, 1980.

\bibitem[\protect\citeauthoryear{Byrd \bgroup \em et al.\egroup
  }{1987}]{ByrdGH87}
Richard~H. Byrd, Alan~J. Goldman, and Miriam Heller.
\newblock Recognizing unbounded integer programs.
\newblock {\em Oper. Res.}, 35(1), 1987.

\bibitem[\protect\citeauthoryear{Chin \bgroup \em et al.\egroup
  }{2015}]{DBLP:conf/snapl/ChinDEHMOOP15}
Brian Chin, Daniel von Dincklage, Vuk Ercegovac, Peter Hawkins, Mark~S. Miller,
  Franz~Josef Och, Christopher Olston, and Fernando Pereira.
\newblock Yedalog: Exploring knowledge at scale.
\newblock In {\em {SNAPL}}, 2015.

\bibitem[\protect\citeauthoryear{Chistikov and
  Haase}{2016}]{DBLP:conf/icalp/ChistikovH16}
Dmitry Chistikov and Christoph Haase.
\newblock The taming of the semi-linear set.
\newblock In {\em {ICALP}}, 2016.

\bibitem[\protect\citeauthoryear{Consens and
  Mendelzon}{1993}]{DBLP:journals/tcs/ConsensM93}
Mariano~P. Consens and Alberto~O. Mendelzon.
\newblock Low complexity aggregation in {GraphLog} and {Datalog}.
\newblock {\em Theor. Comput. Sci.}, 116(1), 1993.

\bibitem[\protect\citeauthoryear{Damaggio \bgroup \em et al.\egroup
  }{2012}]{DBLP:journals/tods/DamaggioDV12}
Elio Damaggio, Alin Deutsch, and Victor Vianu.
\newblock Artifact systems with data dependencies and arithmetic.
\newblock {\em ACM Trans. Database Syst.}, 37(3):22:1--22:36, 2012.

\bibitem[\protect\citeauthoryear{Dantsin \bgroup \em et al.\egroup
  }{2001}]{DBLP:journals/csur/DantsinEGV01}
Evgeny Dantsin, Thomas Eiter, Georg Gottlob, and Andrei Voronkov.
\newblock Complexity and expressive power of logic programming.
\newblock {\em ACM Comput. Surv.}, 33(3), 2001.

\bibitem[\protect\citeauthoryear{Eisner and
  Filardo}{2011}]{DBLP:conf/datalog/EisnerF10}
Jason Eisner and Nathaniel~Wesley Filardo.
\newblock Dyna: Extending datalog for modern {AI}.
\newblock In {\em Datalog}, 2011.

\bibitem[\protect\citeauthoryear{Faber \bgroup \em et al.\egroup
  }{2011}]{DBLP:journals/ai/FaberPL11}
Wolfgang Faber, Gerald Pfeifer, and Nicola Leone.
\newblock Semantics and complexity of recursive aggregates in answer set
  programming.
\newblock {\em Artif. Intell.}, 175(1), 2011.

\bibitem[\protect\citeauthoryear{Ganguly \bgroup \em et al.\egroup
  }{1995}]{DBLP:journals/jcss/GangulyGZ95}
Sumit Ganguly, Sergio Greco, and Carlo Zaniolo.
\newblock Extrema predicates in deductive databases.
\newblock {\em J. Comput. Syst. Sci.}, 51(2), 1995.

\bibitem[\protect\citeauthoryear{Gr{\"{a}}del}{1988}]{DBLP:journals/tcs/Gradel88}
Erich Gr{\"{a}}del.
\newblock Subclasses of presburger arithmetic and the polynomial-time
  hierarchy.
\newblock {\em Theor. Comput. Sci.}, 56, 1988.

\bibitem[\protect\citeauthoryear{Haase}{2014}]{DBLP:conf/csl/Haase14}
Christoph Haase.
\newblock Subclasses of {Presburger} arithmetic and the weak {EXP} hierarchy.
\newblock In {\em {CSL-LICS}}, 2014.

\bibitem[\protect\citeauthoryear{Hougardy}{2010}]{DBLP:journals/ipl/Hougardy10}
Stefan Hougardy.
\newblock The {Floyd}-{Warshall} algorithm on graphs with negative cycles.
\newblock {\em Inf. Process. Lett.}, 110(8-9), 2010.

\bibitem[\protect\citeauthoryear{Kannan}{1987}]{DBLP:journals/mor/Kannan87}
Ravi Kannan.
\newblock Minkowski's convex body theorem and integer programming.
\newblock {\em Math. Oper. Res.}, 12(3):415--440, 1987.

\bibitem[\protect\citeauthoryear{Kemp and
  Stuckey}{1991}]{DBLP:conf/slp/KempS91}
David~B. Kemp and Peter~J. Stuckey.
\newblock Semantics of logic programs with aggregates.
\newblock In {\em {ISLP}}, 1991.

\bibitem[\protect\citeauthoryear{Koutsos and
  Vianu}{2017}]{DBLP:journals/jcss/KoutsosV17}
Adrien Koutsos and Victor Vianu.
\newblock Process-centric views of data-driven business artifacts.
\newblock {\em J. Comput. System Sci.}, 86:82--107, 2017.

\bibitem[\protect\citeauthoryear{Loo \bgroup \em et al.\egroup
  }{2009}]{DBLP:journals/cacm/LooCGGHMRRS09}
Boon~Thau Loo, Tyson Condie, Minos~N. Garofalakis, David~E. Gay, Joseph~M.
  Hellerstein, Petros Maniatis, Raghu Ramakrishnan, Timothy Roscoe, and Ion
  Stoica.
\newblock Declarative networking.
\newblock {\em Commun. {ACM}}, 52(11), 2009.

\bibitem[\protect\citeauthoryear{Markl}{2014}]{DBLP:journals/pvldb/Markl14}
Volker Markl.
\newblock Breaking the chains: On declarative data analysis and data
  independence in the big data era.
\newblock {\em {PVLDB}}, 7(13), 2014.

\bibitem[\protect\citeauthoryear{Mazuran \bgroup \em et al.\egroup
  }{2013}]{DBLP:journals/vldb/MazuranSZ13}
Mirjana Mazuran, Edoardo Serra, and Carlo Zaniolo.
\newblock Extending the power of datalog recursion.
\newblock {\em {VLDB} J.}, 22(4), 2013.

\bibitem[\protect\citeauthoryear{Mumick \bgroup \em et al.\egroup
  }{1990}]{DBLP:conf/vldb/MumickPR90}
Inderpal~Singh Mumick, Hamid Pirahesh, and Raghu Ramakrishnan.
\newblock The magic of duplicates and aggregates.
\newblock In {\em VLDB}, pages 264--277, 1990.

\bibitem[\protect\citeauthoryear{Papadimitriou}{1981}]{DBLP:journals/jacm/Papadimitriou81}
Christos~H. Papadimitriou.
\newblock On the complexity of integer programming.
\newblock {\em J. ACM}, 28(4), 1981.

\bibitem[\protect\citeauthoryear{Ross and Sagiv}{1997}]{RossS97}
Kenneth~A. Ross and Yehoshua Sagiv.
\newblock Monotonic aggregation in deductive databases.
\newblock {\em J. Comput. System Sci.}, 54(1), 1997.

\bibitem[\protect\citeauthoryear{Sch{\"{o}}ning}{1997}]{DBLP:journals/mst/Schoning97}
Uwe Sch{\"{o}}ning.
\newblock Complexity of presburger arithmetic with fixed quantifier dimension.
\newblock {\em Theory Comput. Syst.}, 30(4), 1997.

\bibitem[\protect\citeauthoryear{Seo \bgroup \em et al.\egroup
  }{2015}]{DBLP:journals/tkde/SeoGL15}
Jiwon Seo, Stephen Guo, and Monica~S. Lam.
\newblock {SociaLite}: An efficient graph query language based on datalog.
\newblock {\em {IEEE} Trans. Knowl. Data Eng.}, 27(7), 2015.

\bibitem[\protect\citeauthoryear{Shkapsky \bgroup \em et al.\egroup
  }{2016}]{DBLP:conf/sigmod/ShkapskyYICCZ16}
Alexander Shkapsky, Mohan Yang, Matteo Interlandi, Hsuan Chiu, Tyson Condie,
  and Carlo Zaniolo.
\newblock Big data analytics with datalog queries on {Spark}.
\newblock In {\em {SIGMOD}}. {ACM}, 2016.

\bibitem[\protect\citeauthoryear{{Van Gelder}}{1992}]{DBLP:conf/pods/Gelder92}
Allen {Van Gelder}.
\newblock The well-founded semantics of aggregation.
\newblock In {\em {PODS}}, 1992.

\bibitem[\protect\citeauthoryear{{von zur Gathen} and
  Sieveking}{1978}]{GathenSieveking78}
Joachim {von zur Gathen} and Malte Sieveking.
\newblock A bound on solutions of linear integer equalities and inequalities.
\newblock {\em Proc. {AMS}}, 72(1), 1978.

\bibitem[\protect\citeauthoryear{Wang \bgroup \em et al.\egroup
  }{2015}]{DBLP:journals/pvldb/WangBH15}
Jingjing Wang, Magdalena Balazinska, and Daniel Halperin.
\newblock Asynchronous and fault-tolerant recursive datalog evaluation in
  shared-nothing engines.
\newblock {\em {PVLDB}}, 8(12), 2015.

\end{thebibliography}

\clearpage
\onecolumn
\appendix
\counterwithin{theorem}{section}
\renewcommand{\theproposition}{\thesection.\arabic{theorem}}
\renewcommand{\thecorollary}{\thesection.\arabic{theorem}}
\renewcommand{\thelemma}{\thesection.\arabic{theorem}}
\renewcommand{\thedefinition}{\thesection.\arabic{theorem}}
\renewcommand{\theclaim}{\thesection.\arabic{claim}}
\renewcommand{\theexample}{\thesection.\arabic{theorem}}

\ifdraft{
    \section{Proofs for Section~\ref{sec:limit-programs}}\label{sec:proofs-opt-progs}

\factentundecidable*

\begin{proof}
We prove the claim by presenting a reduction of the halting problem for
deterministic Turing machines on the empty tape. Let $M$ be an arbitrary
deterministic Turing machine with finite alphabet $\Gamma$ containing the blank
symbol $\,\text{\textvisiblespace}\,$, the finite set of states $S$ containing
the initial state $s$ and the halting state $h$, and transition function
${\delta : S \times \Gamma \to S \times \Gamma \times \set{L,R}}$. We assume
that $M$ works on a tape that is infinite to the right, that it starts with the
empty tape and the head positioned on the leftmost cell, and that it never
moves the head off the left edge of the tape.

We encode each time point ${i \geq 0}$ using an integer $2^i$, and we index
tape positions using zero-based integers; thus, at time $i$, each position ${j
\geq 2^i}$ is necessarily empty, so we can encode a combination of a time point
$i$ and tape position $j$ with ${0 \leq j < 2^i}$ using a single integer ${2^i
+ j}$. We use this idea to encode the state of the execution of $M$ using the
following facts:
\begin{itemize}
    \item $\mathit{Num}(k)$ is true for each positive number $k$;
    
    \item $\mathit{Time}(k)$ is true if ${k = 2^i}$ and so $k$ encodes a time
    point $i$;
    
    \item $\mathit{Tape}(a,2^i+j)$ says that symbol $a$ occupies position $j$
    of the tape at time $i$, and it will be defined for each ${0 \leq j < 2^i}$;
    
    \item $\mathit{Pos}(2^i+j)$ says that the head points to position $j$ of
    the tape at time $i$;
    
    \item $\mathit{State}(q,2^i)$ says that the machine is in state $q$ at time
    $i$; and
    
    \item $\mathit{Halts}$ is a propositional variable saying that the machine
    has halted.
\end{itemize}

We next give a $\DLog$ program $\Prog_M$ that simulates the behaviour of $M$ on
the empty tape. We represent each alphabet symbol ${a \in \Gamma}$ using an
object constant $a$, and we represent each state ${q \in S}$ using an object
constant $q$. Furthermore, we abbreviate ${(s \leq t) \land (t < u)}$ as ${(s
\leq t < u)}$. Finally, we abbreviate conjunction ${(s \leq t) \land (t \leq
s)}$ as ${(s \doteq t)}$, and disjunction ${(s < t) \lor (t > s)}$ as ${(s
\not\doteq t)}$. Strictly speaking, disjunctions are not allowed in rule
bodies; however, each rule with a disjunction in the body of the form ${\varphi
\land (s \not\doteq t) \to \alpha}$ corresponds to rules ${\varphi \land (s <
t) \to \alpha}$ and ${\varphi \land (t < s) \to \alpha}$, so we use the former
form for the sake of clarity. With these considerations in mind, program
$\Prog_M$ contains rules
\eqref{eq:undec:Num:1}--\eqref{eq:undec:Tape:inertia:2}.
\begin{align}
                                                                                                                                    & \to \mathit{Num}(1)                               \label{eq:undec:Num:1}\\
    \mathit{Num}(x)                                                                                                                 & \to \mathit{Num}(x+1)                             \label{eq:undec:Num:plus1}\\
                                                                                                                                    & \to \mathit{Time}(1)                              \label{eq:undec:Time:1}\\
    \mathit{Time}(x)                                                                                                                & \to \mathit{Time}(x+x)                            \label{eq:undec:Time:power}\\
                                                                                                                                    & \to \mathit{Tape}(\,\text{\textvisiblespace}\,,1) \label{eq:undec:Tape:init}\\
                                                                                                                                    & \to \mathit{Pos}(1)                               \label{eq:undec:Pos:init}\\
                                                                                                                                    & \to \mathit{State}(s,1)                           \label{eq:undec:State:init}\\
    \mathit{State}(h,x)                                                                                                             & \to \mathit{Halts}                                \label{eq:undec:Tape:halting}\\
    \mathit{Time}(x) \land \mathit{Tape}(v,y) \land \mathit{Pos}(z) \land (x \le y<x+x) \land (x \le z<x+x) \land (y \not\doteq z)  & \to \mathit{Tape}(v,x+y)                          \label{eq:undec:Tape:inertia:1}\\
    \mathit{Time}(x) \land \mathit{Num}(u) \land (x+x+x \leq u+u<x+x+x+x)                                                           & \to \mathit{Tape}(\,\text{\textvisiblespace}\,,u) \label{eq:undec:Tape:inertia:2}
\end{align}
Moreover, for each alphabet symbol ${a \in \Gamma}$ and all states ${q,q' \in
S}$ such that ${\delta(q,a) = (q',a',D)}$ where ${D \in \set{L,R}}$ is a
direction, $\Prog_M$ contains rules
\eqref{eq:undec:Tape:change}--\eqref{eq:undec:Pos:R}.
\begin{align}
    \mathit{Time}(x) \land \mathit{State}(q,x) \land \mathit{Tape}(a,y) \land \mathit{Pos}(y) \land (x \le y<x+x)                   & \to \mathit{Tape}(a',x+y)                         \label{eq:undec:Tape:change}\\
    \mathit{Time}(x) \land \mathit{State}(q,x) \land \mathit{Tape}(a,y) \land \mathit{Pos}(y) \land (x \le y<x+x)                   & \to \mathit{State}(q',x+x)                        \label{eq:undec:State:change}\\
    \begin{array}{@{}r@{}}
        \mathit{Time}(x) \land \mathit{State}(q,x) \land \mathit{Tape}(a,y) \land \mathit{Pos}(y) \land \mathit{Num}(u) \\
        (x \le y<x+x) \land (x+y \doteq u+1) \\
    \end{array}                                                                                                                     &
    \begin{array}{@{\;}lr@{}}
        \land \\
        \to \mathit{Pos}(u) \qquad \text{if } D = L \\
    \end{array}                                                                                                                                                                         \label{eq:undec:Pos:L}\\
    \begin{array}{@{}r@{}}
        \mathit{Time}(x) \land \mathit{State}(q,x) \land \mathit{Tape}(a,y) \land \mathit{Pos}(y) \land \mathit{Num}(u) \\
        (x \le y<x+x) \land (x+y+1 \doteq u) \\
    \end{array}                                                                                                                     &
    \begin{array}{@{\;}l@{}}
        \land \\
        \to \mathit{Pos}(u) \qquad \text{if } D = R \\
    \end{array}                                                                                                                                                                         \label{eq:undec:Pos:R}
\end{align}

Rules \eqref{eq:undec:Num:1}--\eqref{eq:undec:Num:plus1} initialise
$\textit{Num}$ so that it holds of all positive integers, and rules
\eqref{eq:undec:Time:1}--\eqref{eq:undec:Time:power} initialise $\textit{Time}$
so that it holds for each integer ${k = 2^i}$. Rules
\eqref{eq:undec:Tape:init}--\eqref{eq:undec:State:init} initialise the state of
the $M$ at time ${i = 0}$. Rule \eqref{eq:undec:Tape:halting} derives
$\mathit{Halts}$ if at any point the Turing machine enters the halting state
$h$. The remaining rules encode the evolution of the state of $M$, and they are
based on the following idea: if variable $x$ encodes a time point $i$ using
value $2^i$, then variable $y$ encodes a position $j$ for time point $i$ if ${x
\leq y < x+x}$ holds; moreover, for such $y$, position $j$ at time point $i+1$
is encoded as ${2^{i+1} + j = 2^i + 2^i + j}$ and can be obtained as $x+y$, and
the encodings of positions $j-1$ and $j+1$ can be obtained as ${x+y-1}$ and
${x+y+1}$, respectively. Since our goal is to prove undecidability by just
using $+$, we simulate subtraction by looking for a value $u$ such that ${x+y =
u+1}$. With these observations in mind, one can see that rule
\eqref{eq:undec:Tape:inertia:1} copies the unaffected part of the tape from
time point $i$ to time point $i+1$. Moreover, rule
\eqref{eq:undec:Tape:inertia:2} pads the tape by filling each location $j$ with
${1.5 \times 2^i \leq j < 2 \times 2^i}$ with the blank symbol; since division
is not supported in our language, we express this condition as ${3 \times 2^i
\leq 2 \times j < 2 \times 2 \times 2^i}$. Finally, rule
\eqref{eq:undec:Tape:change} updates the tape at the position of the head; rule
\eqref{eq:undec:State:change} updates the state; and rules
\eqref{eq:undec:Pos:L} and \eqref{eq:undec:Pos:R} move the head left and right,
respectively. Consequently, we have ${\Prog_M \models \mathit{Halts}}$ if and
only if $M$ halts on the empty tape.
\end{proof}

\homogeneous*

\begin{proof}
Let $\Prog$ be an arbitrary limit program. Without loss of generality, we
construct a program $\Prog'$ containing only $\tmax$ predicates. For each
$\tmin$ predicate $A$, let $A'$ be a fresh $\tmax$ predicate uniquely
associated with $A$. We construct $\Prog'$ from $\Prog$ by modifying each rule
${r \in \Prog}$ as follows:
\begin{enumerate}
    \item if ${\head{r} = A(\mathbf t,s)}$ where $A$ is a $\tmin$ predicate,
    replace the head of $r$ with $A'(\mathbf t,-s)$;

    \item for each body atom ${A(\mathbf t,n) \in \sbody{r}}$ where $A$ is a
    $\tmin$ predicate and $n$ is a variable, replace the atom with ${A'(\mathbf
    t,m)}$ where $m$ is a fresh variable, and replace all other occurrences of
    $n$ in the rule with $-m$;

    \item for each body atom ${A(\mathbf t,k) \in \sbody{r}}$ where $A$ is a
    $\tmin$ predicate and $k$ is an integer, replace the atom with ${A'(\mathbf
    t,-k)}$.
\end{enumerate}
Finally, if ${\alpha = A(\mathbf a,k)}$ is a $\tmin$ fact, let ${\alpha' =
A(\mathbf a,-k)}$; otherwise, let ${\alpha' = \alpha}$. Now consider an
arbitrary interpretation $I$, and let $I'$ be the interpretation obtained from
$I$ by replacing each $\tmin$ fact ${A(\mathbf a,k)}$ with ${A'(\mathbf
a,-k)}$; it is straightforward to see that ${I \models \Prog}$ if and only if
${I' \models \Prog'}$, and that ${I \models \alpha}$ if and only if ${I'
\models \alpha'}$. Thus, ${\Prog \models \alpha}$ if and only if ${\Prog'
\models \alpha'}$.
\end{proof}

\section{Proofs for Section~\ref{sec:fixpoints}}\label{sec:proofs-pseudo-models}

We use the standard notion of substitutions---sort-compatible partial mappings
of variables to constants. For $\varphi$ a formula and $\sigma$ a substitution,
$\varphi\sigma$ is the formula obtained by replacing each free variable $x$ in
$\varphi$ on which $\sigma$ is defined with $\sigma(x)$.

\begin{proposition}\label{prop:applicability-characterisation}
    For each semi-ground rule ${r = \varphi \to \alpha}$, each
    pseudo-interpretation $J$, and each mapping $\sigma$ of the variables of
    $r$ to integers, $\sigma$ is an integer solution to $\constraints{r}{J}$ if
    and only if ${J \models \varphi\sigma}$.
\end{proposition}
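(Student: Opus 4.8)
The plan is to prove the equivalence conjunct-by-conjunct, exploiting the fact that semi-groundness forces all non-comparison, non-limit body atoms of $r$ to be ground. First I would observe that, since $r$ is semi-ground, every variable of $r$ is numeric and occurs in some limit body atom; combined with the standing assumption that each numeric variable occurs in at most one standard body atom, this means that every object atom and every (function-free) ordinary numeric atom in $\sbody{r}$ is already ground, so $\sigma$ affects only the limit atoms of $\sbody{r}$ and the comparison atoms of $\cbody{r}$. Both ``$J \models \varphi\sigma$'' and ``$\sigma$ solves $\constraints{r}{J}$'' therefore decompose as conjunctions indexed by the atoms of $r$, and it suffices to match them atom by atom and then conjoin the resulting equivalences.

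Second, I would dispatch the straightforward conjuncts. Each comparison atom $\beta \in \cbody{r}$ is copied verbatim into $\constraints{r}{J}$ by clause~(i) of Definition~\ref{def:rule-app}, and since comparisons are evaluated purely arithmetically, $\sigma$ satisfies $\beta$ exactly when $J \models \beta\sigma$. For a ground object or ordinary numeric atom $\alpha \in \sbody{r}$, the correspondence in Definition~\ref{def:pseudo-model} gives $J \models \alpha$ iff $\alpha \in J$: if $\alpha \in J$ then $\alpha$ contributes no constraint and both sides hold, whereas if $\alpha \notin J$ then clause~(ii) inserts the unsatisfiable atom $(0<0)$ and both sides fail.

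Third---and this is where the real work lies---I would treat each limit atom $B(\mathbf b, s) \in \sbody{r}$ by a case split on what $J$ records for the pair $(B,\mathbf b)$, relating $J \models B(\mathbf b, s\sigma)$ to membership in the corresponding limit-closed interpretation $I$ via Definition~\ref{def:pseudo-model}. If no fact $B(\mathbf b,\ell)$ lies in $J$, then $B(\mathbf b,k) \notin I$ for every $k$, so $J \not\models B(\mathbf b, s\sigma)$; correspondingly clause~(ii) adds $(0<0)$ and neither side can hold. If $B(\mathbf b,\infty) \in J$, then $B(\mathbf b,k) \in I$ for every $k$, so $J \models B(\mathbf b, s\sigma)$ always; correspondingly clauses~(ii) and~(iii) add nothing (the $\ell = \infty$ guard suppresses~(iii), and the presence of the $\infty$-fact blocks~(ii)), so the constraint is vacuously satisfied. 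Finally, if $B(\mathbf b,\ell) \in J$ for a finite $\ell$, then for a $\tmin$ (resp.\ $\tmax$) predicate $I$ contains $B(\mathbf b,k)$ exactly when $\ell \le k$ (resp.\ $k \le \ell$); clause~(iii) adds precisely $(\ell \le s)$ (resp.\ $(s \le \ell)$), so $\sigma$ satisfies it iff $\ell \le s\sigma$ (resp.\ $s\sigma \le \ell$), which is iff $J \models B(\mathbf b, s\sigma)$.

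I expect the main obstacle to be the bookkeeping in this last step: correctly reading off, from the correspondence, the membership condition for $B(\mathbf b, s\sigma)$ in $I$---in particular keeping the $\tmin$/$\tmax$ direction straight and separating the $\infty$ case from the finite-$\ell$ case---and then confirming that the guard ``$\ell \neq \infty$'' in clause~(iii) and the trigger ``$B(\mathbf b,\ell) \notin J$ for each $\ell$'' in clause~(ii) line up exactly with these cases. Once the three cases are verified, conjoining the per-atom equivalences yields the claim in both directions.
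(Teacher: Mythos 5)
Your proposal is correct and follows essentially the same route as the paper's proof: both decompose the claim atom-by-atom over the body and perform the same case analysis on comparison atoms, ground object/ordinary numeric atoms, and limit atoms (absent, finite limit value, or $\infty$), matching clauses (i)--(iii) of Definition~\ref{def:rule-app} against satisfaction in the corresponding limit-closed interpretation. The only cosmetic difference is that you phrase each case as a biconditional, whereas the paper writes out the forward direction and declares the converse analogous.
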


\begin{proof}
($\Rightarrow$) Assume that $\sigma$ is an integer solution to
$\constraints{r}{J}$. We consider each atom ${\beta \in \varphi}$ and show that
${J \models \beta\sigma}$ holds.
\begin{itemize}
    \item If $\beta$ is a comparison atom, the claim is straightforward due to
    ${\beta \in \constraints{r}{J}}$.

    \item If $\beta$ is an object atom or an ordinary numeric atom, $\beta$ is
    ground and we have ${\beta \in J}$ and ${J \models \beta}$; otherwise, ${(0
    < 0) \in \constraints{r}{J}}$ would hold and so $\sigma$ could not be a
    solution to $\constraints{r}{J}$.

    \item If $\beta$ is a $\tmax$ atom $B(\mathbf b,s)$, since ${(0 < 0)
    \not\in \constraints{r}{J}}$, either ${B(\mathbf b,\ell) \in J}$ for some
    integer ${\ell \in \mathbb{Z}}$ and ${(s \le \ell) \in
    \constraints{r}{J}}$, or ${B(\mathbf b,\infty) \in J}$. In the former case,
    since $\sigma$ is a solution to $\constraints{r}{J}$, we have ${s\sigma \le
    \ell}$, and, since $B$ is a $\tmax$ predicate, ${J \models B(\mathbf
    b,s\sigma)}$ holds. In the latter case, ${J \models B(\mathbf b,s\sigma)}$
    holds due to ${\set{ B(\mathbf b,s\sigma) } \sqsubseteq \set{ B(\mathbf
    b,\infty) }}$.
    
    \item If $\beta$ is a $\tmin$ atom, the proof is analogous to the previous
    case.
\end{itemize}
The proof of the ($\Leftarrow$) direction is analogous and we omit it for the
sake of brevity.
\end{proof}

\begin{definition}
    Given a limit-closed interpretation $I$ and a program $\Prog$, let
    \begin{displaymath}
        \ICFPStep{\Prog}{I} = \qset{\gamma}{ \textstyle\bigwedge_i \alpha_i \to \beta \text{ is a ground instance of a rule in } \Prog \text{ such that } I \models \textstyle\bigwedge_i \alpha_i \text{ and } \gamma \text{ is a fact such that } \set{ \gamma } \sqsubseteq \set{ \beta } }.
    \end{displaymath}
    Let ${\ICFPStepOp{\Prog}{0} = \emptyset}$, let ${\ICFPStepOp{\Prog}{n} =
    \ICFPStep{\Prog}{\ICFPStepOp{\Prog}{n-1}}}$ for ${n > 0}$, and let
    ${\ICFPStepOp{\Prog}{\infty} = \bigcup_{n \geq 0} \ICFPStepOp{\Prog}{n}}$.
\end{definition}

\begin{lemma}\label{lemma:operator:I}
    For each program $\Prog$, operator $\ICFPStepOp{\Prog}{}$ is monotonic
    w.r.t.\ $\subseteq$; moreover, for $I$ an interpretation, ${I \models
    \Prog}$ if and only if ${\ICFPStep{\Prog}{I} = I}$, and ${I \models \Prog}$
    implies ${\ICFPStepOp{\Prog}{\infty} \subseteq I}$.
\end{lemma}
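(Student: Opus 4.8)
The plan is to treat $\ICFPStepOp{\Prog}{}$ as the limit-closed analogue of the classical immediate-consequence operator and to establish the three claims by mirroring the standard Datalog fixpoint argument. As a preliminary I would record that $\ICFPStep{\Prog}{I}$ is itself limit-closed whenever $I$ is: for each admissible head $\beta$ the operator collects \emph{all} facts $\gamma$ with $\set{\gamma}\sqsubseteq\set{\beta}$, which is exactly the limit-closure of the fact denoted by $\beta$. This guarantees that the iterates $\ICFPStepOp{\Prog}{n}$ and their union $\ICFPStepOp{\Prog}{\infty}$ are again limit-closed, so the operator and its powers are well defined. For \textbf{monotonicity}, I would take limit-closed $I\subseteq I'$ and observe that satisfaction of a rule body is preserved under this inclusion: a comparison atom's truth is independent of the interpretation, while for a standard body atom $I\models\alpha$ means the fact obtained by evaluating $\alpha$ lies in $I$, hence in $I'$. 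Thus every ground instance whose body holds in $I$ has its body hold in $I'$, so each $\gamma$ contributed to $\ICFPStep{\Prog}{I}$ is contributed to $\ICFPStep{\Prog}{I'}$, giving $\ICFPStep{\Prog}{I}\subseteq\ICFPStep{\Prog}{I'}$.

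For the \textbf{model characterization} I would prove the containment between $\ICFPStep{\Prog}{I}$ and $I$ in both directions, the substantive one being $\ICFPStep{\Prog}{I}\subseteq I$, which is precisely the model condition (exactly paralleling Lemma~\ref{lem:imm-cons-op-props} for pseudo-interpretations). Assuming $I\models\Prog$, take $\gamma\in\ICFPStep{\Prog}{I}$ witnessed by a ground instance $\bigwedge_i\alpha_i\to\beta$ with $I\models\bigwedge_i\alpha_i$ and $\set{\gamma}\sqsubseteq\set{\beta}$; since $I$ satisfies this universally quantified ground instance, the evaluated head $\beta^{\ast}$ lies in $I$, and as $I$ is limit-closed and $\gamma$ belongs to the limit-closure of $\beta^{\ast}$ we get $\gamma\in I$. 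Conversely, assuming $\ICFPStep{\Prog}{I}\subseteq I$, for any ground instance $\bigwedge_i\alpha_i\to\beta$ with $I\models\bigwedge_i\alpha_i$ one instantiates $\gamma=\beta^{\ast}$, which satisfies $\set{\beta^{\ast}}\sqsubseteq\set{\beta}$, to conclude $\beta^{\ast}\in\ICFPStep{\Prog}{I}\subseteq I$, i.e.\ $I\models\beta$; hence $I$ satisfies every ground instance and $I\models\Prog$.

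Finally, the \textbf{least-model property} follows by induction on $n$ together with monotonicity. From $\ICFPStepOp{\Prog}{0}=\emptyset\subseteq I$ and the inductive hypothesis $\ICFPStepOp{\Prog}{n}\subseteq I$, monotonicity yields $\ICFPStepOp{\Prog}{n+1}=\ICFPStep{\Prog}{\ICFPStepOp{\Prog}{n}}\subseteq\ICFPStep{\Prog}{I}$, and the containment $\ICFPStep{\Prog}{I}\subseteq I$ established from $I\models\Prog$ gives $\ICFPStepOp{\Prog}{n+1}\subseteq I$; taking the union over all $n$ delivers $\ICFPStepOp{\Prog}{\infty}\subseteq I$. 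I expect the only real obstacle to be the bookkeeping between the \emph{unevaluated} head $\beta$ of a ground instance, its \emph{evaluated} fact $\beta^{\ast}$, and the relation $\set{\gamma}\sqsubseteq\set{\beta}$: one must read $\set{\gamma}\sqsubseteq\set{\beta}$ as ``$\gamma$ lies in the limit-closure of the fact denoted by $\beta$'' and invoke limit-closedness of $I$ exactly where a head is a $\tmin$ or $\tmax$ atom. All remaining steps are routine adaptations of the classical argument for definite Datalog programs.
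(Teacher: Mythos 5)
Your argument is sound, but it takes a genuinely different route from the paper's. The paper dispatches this lemma in two lines: it declares $\ICFPStepOp{\Prog}{}$ to be the classical immediate-consequence operator of the Datalog program obtained by adding to $\Prog$ the limit-closure axiomatisation rules from Section~\ref{sec:limit-programs}, and then cites the standard fixpoint theory of positive Datalog. You instead reprove everything directly from the definition of $\ICFPStepOp{\Prog}{}$: limit-closedness of the output, monotonicity via preservation of body satisfaction, the model characterisation via the evaluated head $\beta^{\ast}$ together with limit-closedness of $I$, and the least-model property by induction. Your version is longer but self-contained, and it is arguably more honest: the paper's identification of $\ICFPStepOp{\Prog}{}$ with the standard operator of the extended program is not literally a one-step identity (a single application of the standard operator does not yet produce the whole limit-closure of a derived head, and the closure rules fire only once the auxiliary predicate $Z$ is populated), so the classical results transfer only after an argument about the respective fixpoints that the paper leaves implicit. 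Your direct proof sidesteps that entirely, at the cost of redoing routine bookkeeping that the paper outsources to the literature.

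One caveat, which concerns the statement rather than your proposal: the lemma asserts the equality ${\ICFPStep{\Prog}{I} = I}$, but what you establish --- and what the classical theory actually gives --- is the equivalence of ${I \models \Prog}$ with ${\ICFPStep{\Prog}{I} \subseteq I}$. The reverse containment ${I \subseteq \ICFPStep{\Prog}{I}}$ fails for any model containing a fact that is not in the limit-closure of the head of an applicable ground instance, so neither your proof nor the cited standard result yields the equality as written; the $\subseteq$ form is also the one mirrored by Lemma~\ref{lem:imm-cons-op-props}, which is correctly stated with $\sqsubseteq$, and is all that the downstream results need. Your phrase about proving ``the containment in both directions'' is therefore slightly misleading: what you in fact (correctly) prove are the two directions of the biconditional for the $\subseteq$ characterisation, not the two containments needed for equality.
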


\begin{proof}
Operator $\ICFPStepOp{\Prog}{}$ is the standard immediate consequence operator
of Datalog, but applied to the program $\Prog'$ obtained by extending $\Prog$
with the rules from Section~\ref{sec:limit-programs} encoding the semantics of
limit predicates. Thus, all claims of this lemma hold in the usual way
\cite{DBLP:journals/csur/DantsinEGV01}.
\end{proof}

\begin{lemma}\label{lemma:icfp-ilfp-coincide}
    For each limit-closed interpretation $I$ and the corresponding
    pseudo-interpretation $J$, and for each semi-ground limit program $\Prog$,
    interpretation $\ICFPStep{\Prog}{I}$ corresponds to the
    pseudo-interpretation $\ILFPStep{\Prog}{J}$.
\end{lemma}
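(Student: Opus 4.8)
The plan is to fix a limit-closed interpretation $I$ with corresponding pseudo-interpretation $J$, write $I^\ast = \ICFPStep{\Prog}{I}$ and $J^\ast = \ILFPStep{\Prog}{J}$, and show directly that $I^\ast$ is the limit-closed interpretation corresponding to $J^\ast$. First I would observe that $I^\ast$ is itself limit-closed: by the condition ${\set{\gamma} \sqsubseteq \set{\beta}}$ in the definition of $\ICFPStepOp{\Prog}{}$, whenever a limit fact is added, all facts it implies under its predicate type are added as well. Hence $I^\ast$ corresponds to a unique pseudo-interpretation, and it suffices to verify that this pseudo-interpretation coincides with $J^\ast$, checking the defining clauses of correspondence (Definition~\ref{def:pseudo-model}) organised by the type of the head of the contributing rule. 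Throughout, the key device is Proposition~\ref{prop:applicability-characterisation}, which lets me replace the clause ``${I \models \body{r}\sigma}$'' governing membership in $\ICFPStep{\Prog}{I}$ by the clause ``$\sigma$ is a solution of $\constraints{r}{J}$'' governing applicability and optimum values in $\ILFPStep{\Prog}{J}$.

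For object and ordinary numeric heads the argument is immediate: since such a head $\head{r}$ is ground in a semi-ground rule and admits no nontrivial limit closure, ${\head{r} \in I^\ast}$ holds iff some substitution $\sigma$ satisfies $\body{r}\sigma$ in $I$, which by Proposition~\ref{prop:applicability-characterisation} holds iff $r$ is applicable to $J$, which is exactly the condition under which ${\mathsf{hd}(r,J) = \head{r}}$ is forced into $J^\ast$. For a $\tmax$ head ${B(\mathbf b, s)}$ (the $\tmin$ case being dual), I would compute, for each fixed pair ${B, \mathbf b}$, the set of integers $k$ with ${B(\mathbf b, k) \in I^\ast}$. By limit closure this set is a down-set determined by its supremum $L$, and ${B(\mathbf b, k) \in I^\ast}$ iff there is a rule $r$ with head on ${B, \mathbf b}$ and a solution $\sigma$ of $\constraints{r}{J}$ with ${k \le s\sigma}$. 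Taking suprema over $\sigma$ and then over such rules $r$ identifies $L$ with ${\max_r \mathsf{opt}(r,J)}$, with ${L = \infty}$ exactly when some applicable rule has ${\mathsf{opt}(r,J) = \infty}$. On the other side, $J^\ast$ is by definition the $\sqsubseteq$-least pseudo-interpretation whose corresponding interpretation satisfies each ${\mathsf{hd}(r,J) = B(\mathbf b, \mathsf{opt}(r,J))}$; for a $\tmax$ predicate this forces the limit value of ${B, \mathbf b}$ in $J^\ast$ to be at least each $\mathsf{opt}(r,J)$ and, by minimality, exactly their maximum. Hence the down-set of $J^\ast$ on ${B, \mathbf b}$ again has supremum $L$, matching $I^\ast$.

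The main obstacle I anticipate is the bookkeeping around the aggregation over rules and the $\infty$ case. One must check that $\ICFPStepOp{\Prog}{}$, being a set comprehension, genuinely collects the union of the limit closures contributed by \emph{all} applicable ground instances, so that its supremum is the maximum of the per-rule optima rather than any single rule's optimum. One must also align three sources of unboundedness---an individual constraint system $\constraints{r}{J}$ having solutions with arbitrarily large $s\sigma$, a premise of the form ${B(\mathbf b,\infty) \in J}$ (which suppresses the corresponding comparison atom in $\constraints{r}{J}$ and can thus make $s$ unbounded), and the convention ${k \le \infty}$---so that ${L = \infty}$ on the interpretation side corresponds precisely to ${B(\mathbf b,\infty) \in J^\ast}$ on the pseudo-interpretation side, as required by clause~(i) of Definition~\ref{def:pseudo-model}. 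Once these cases are matched for $\tmax$ (and dually for $\tmin$) heads, together with the object and ordinary numeric cases, all defining clauses of correspondence hold, completing the proof.
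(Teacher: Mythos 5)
Your proposal is correct and takes essentially the same route as the paper's proof: both case-split on the type of the rule head, use Proposition~\ref{prop:applicability-characterisation} to translate between satisfaction of rule bodies in $I$ and integer solutions of $\constraints{r}{J}$, and resolve the $\infty$ case by noting that an unbounded set of derived values must come from a single rule with unbounded optimum because $\Prog$ is finite. The only difference is presentational: the paper proves three separate biconditionals (object/ordinary facts, finite-valued limit facts, and the all-of-$\mathbb{Z}$ case), whereas you aggregate per predicate--tuple pair and compare suprema of down-sets, which amounts to the same argument.
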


\begin{proof}
It suffices to show that, for each fact $\alpha$, the following claims hold:
\begin{enumerate}
    \item if $\alpha$ is an object fact or ordinary numeric fact, then ${\alpha
    \in \ICFPStep{\Prog}{I}}$ if and only if ${\alpha \in \ILFPStep{\Prog}{J}}$;

    \item if $\alpha$ is a limit fact of the form $A(\mathbf a,k)$ where $k$ is
    an integer, then ${\alpha \in \ICFPStep{\Prog}{I}}$ if and only if
    ${\set{\alpha} \sqsubseteq \ILFPStep{\Prog}{J}}$; and

    \item if $\alpha$ is a limit fact of the form ${A(\mathbf a,\infty)}$, then
    ${\qset{A(\mathbf a,k)}{k \in \mathbb Z} \subseteq \ICFPStep{\Prog}{I}}$ if
    and only if ${\alpha \in \ILFPStep{\Prog}{J}}$.
\end{enumerate}

\smallskip

(Claim~1) Consider an arbitrary object fact $\alpha$ of the form ${A(\mathbf
a)}$; the proof for ordinary numeric facts is analogous.
\begin{itemize}
    \item Assume ${\alpha \in \ICFPStep{\Prog}{I}}$. Then, a rule ${r = \varphi
    \to \alpha \in \Prog}$ and a grounding $\sigma$ of $r$ exist such that ${I
    \models \varphi\sigma}$; the head of $r$ must be $\alpha$ since $\Prog$ is
    semi-ground. But then, ${J \models \varphi\sigma}$ holds as well, so
    Proposition~\ref{prop:applicability-characterisation} ensures that $\sigma$
    is a solution to $\constraints{r}{J}$; moreover, ${\mathsf{hd}(r,J) =
    \alpha}$, and thus we have ${\alpha \in \ILFPStep{\Prog}{J}}$.

    \item Assume ${\alpha \in \ILFPStep{\Prog}{J}}$. Then, there exist a rule
    ${r = \varphi \to \alpha \in \Prog}$ and an integer solution $\sigma$ to
    $\constraints{r}{J}$. Proposition~\ref{prop:applicability-characterisation}
    then ensures ${J \models \varphi\sigma}$, and so ${I \models
    \varphi\sigma}$ holds as well. Thus, we have ${\alpha\sigma = \alpha \in
    \ICFPStep{\Prog}{I}}$.
\end{itemize}

\smallskip

(Claim~2) Consider an arbitrary $\tmax$ fact $\alpha$ of the form ${A(\mathbf
a,k)}$; the proof for a $\tmin$ fact is analogous.
\begin{itemize}
    \item Assume ${\alpha \in \ICFPStep{\Prog}{I}}$. Then, a rule ${r = \varphi
    \to A(\mathbf a,s) \in \Prog}$ and a grounding $\sigma$ of $r$ exist such
    that ${I \models \varphi\sigma}$ and ${\alpha = A(\mathbf a,s\sigma)}$. But
    then, ${J \models \varphi\sigma}$ holds as well, so
    Proposition~\ref{prop:applicability-characterisation} ensures that $\sigma$
    is a solution to $\constraints{r}{J}$; moreover, ${s\sigma \le
    \mathsf{opt}(r,J)}$, and therefore we have ${\set{\alpha} \sqsubseteq
    \set{\mathsf{hd}(r,J)} = \set{ A(\mathbf a,\mathsf{opt}(r,J)) } \sqsubseteq
    \ILFPStep{\Prog}{J}}$.

    \item Assume ${\set{\alpha} \sqsubseteq \ILFPStep{\Prog}{J}}$. Then, there
    exist a rule ${r = \varphi \to A(\mathbf a,s) \in \Prog}$ and an integer
    solution $\sigma$ to $\constraints{r}{J}$ such that ${\set{\alpha}
    \sqsubseteq \set{\mathsf{hd}(r,J)} = \set{ A(\mathbf a,s\sigma)}}$ where
    ${s\sigma = \mathsf{opt}(r,J)}$.
    Proposition~\ref{prop:applicability-characterisation} then ensures ${J
    \models \varphi\sigma}$, and so ${I \models \varphi\sigma}$ holds as well.
    Thus, ${\gamma \in \ICFPStep{\Prog}{I}}$ holds for each fact $\gamma$ with
    ${\set{ \gamma } \sqsubseteq \set{ A(\mathbf a,s\sigma) }}$, so we have
    ${\alpha \in \ICFPStep{\Prog}{I}}$.
\end{itemize}

\smallskip

(Claim~3) Consider an arbitrary $\tmax$ fact $\alpha$ of the form $A(\mathbf
a,\infty)$; the proof for a $\tmin$ fact is analogous. In the following, let
${S = \qset{A(\mathbf a,k)}{k\in\mathbb Z}}$.
\begin{itemize}
    \item Assume ${S \subseteq \ICFPStep{\Prog}{I}}$. Program $\Prog$ contains
    only finitely many rules, so the infinitely many facts of $S$ in
    $\ICFPStep{\Prog}{I}$ are produced by a rule ${r = \varphi \to A(\mathbf
    a,s) \in \Prog}$ and an infinite sequence $(\sigma_i)_{i \ge 0}$ of
    groundings of $r$ such that, for each $i$, we have ${I \models
    \varphi\sigma_i}$ and ${s\sigma_i < s\sigma_{i+1}}$. But then, ${J \models
    \varphi\sigma_i}$, so Proposition~\ref{prop:applicability-characterisation}
    ensures that $\sigma_i$ satisfies $\constraints{r}{J}$ for each ${i \geq
    0}$; therefore, ${\mathsf{opt}(r,J) = \infty}$ and ${\alpha \in
    \ILFPStep{\Prog}{J}}$ holds.

    \item Assume ${A(\mathbf a,\infty) \in \ILFPStep{\Prog}{J}}$. Then, a rule
    ${r = \varphi \to A(\mathbf a,s) \in \Prog}$ exists such that
    ${\mathsf{opt}(r,J) = \infty}$, so an infinite sequence $(\sigma_i)_{i\ge
    0}$ of solutions to $\constraints{r}{J}$ exists such that ${s\sigma_i <
    s\sigma_{i+1}}$ for each ${i \geq 0}$.
    Proposition~\ref{prop:applicability-characterisation} ensures ${J \models
    \varphi\sigma_i}$ for each ${i \geq 0}$, and so ${I \models
    \varphi\sigma_i}$ as well. Thus, for each ${k \in \mathbb Z}$, some ${i
    \geq 0}$ exists such that ${k \le s\sigma_i}$, and therefore we have
    ${\set{A(\mathbf a,k)} \sqsubseteq \set{A(\mathbf a,s\sigma_i)}}$;
    consequently, ${S \subseteq \ICFPStep{\Prog}{I}}$ holds. \qedhere
\end{itemize}
\end{proof}

\pseudomodelfixedpoint*

\begin{proof}
Immediate from Lemmas~\ref{lemma:operator:I} and~\ref{lemma:icfp-ilfp-coincide}.
\end{proof}

\fixpoint*

\begin{proof}
By inductively applying Lemma~\ref{lemma:icfp-ilfp-coincide}, for each ${n \geq
0}$, the limit-closed interpretation $\ICFPStepOp{\Prog}{n}$ clearly
corresponds to the pseudo-interpretation $\ILFPStepOp{\Prog}{n}$. Thus,
$\ICFPStepOp{\Prog}{\infty}$ and $\ILFPStepOp{\Prog}{\infty}$ also correspond
on all object and ordinary numeric facts. Now consider an arbitrary $n$-ary
$\max$ predicate $A$ and a tuple of $n-1$ objects ${\mathbf a}$, and for ${M =
\qset{k}{A(\mathbf a,k) \in \ICFPStepOp{\Prog}{\infty}}}$ consider the
following cases.
\begin{itemize}
    \item ${M = \emptyset}$. Then, for each ${n \geq 0}$ and each ${k \in
    \mathbb{Z}}$, we have ${A(\mathbf a,k) \not\in \ICFPStepOp{\Prog}{n}}$,
    which implies ${A(\mathbf a,k) \not\in \ILFPStepOp{\Prog}{n}}$ and
    ${A(\mathbf a,\infty) \not\in \ILFPStepOp{\Prog}{n}}$. Finally,
    $\ILFPStepOp{\Prog}{\infty}$ is the least (w.r.t.\ $\sqsubseteq$) fixpoint
    of $\ILFPStepOp{\Prog}{}$, so ${A(\mathbf a,k) \not\in
    \ILFPStepOp{\Prog}{\infty}}$ and ${A(\mathbf a,\infty) \not\in
    \ILFPStepOp{\Prog}{\infty}}$ holds as well.
    
    \item There exists ${\ell = \max M}$. Then, there exists ${n \geq 0}$ such
    that ${A(\mathbf a,\ell) \in \ICFPStepOp{\Prog}{n}}$, and ${A(\mathbf
    a,\ell') \not\in \ICFPStepOp{\Prog}{m}}$ for each ${\ell' > \ell}$ and ${m
    \geq 0}$; but then, ${A(\mathbf a,\ell) \in \ILFPStepOp{\Prog}{n}}$, and
    ${A(\mathbf a,\ell') \not\in \ILFPStepOp{\Prog}{m}}$ for each ${\ell' >
    \ell}$ and ${m \geq 0}$; finally, $\ILFPStepOp{\Prog}{\infty}$ is the least
    (w.r.t.\ $\sqsubseteq$) fixpoint of operator $\ILFPStepOp{\Prog}{}$, so
    ${A(\mathbf a,\ell) \in \ILFPStepOp{\Prog}{\infty}}$ holds.
    
    \item ${M = \mathbb{Z}}$. Then, for each ${k \in \mathbb{Z}}$, there exists
    ${n \geq 0}$ such that ${A(\mathbf a,k) \in \ICFPStepOp{\Prog}{n}}$, and so
    ${\ILFPStepOp{\Prog}{n} \models A(\mathbf a,k)}$ holds; but then,
    ${A(\mathbf a,\infty) \in \ILFPStepOp{\Prog}{n}}$ holds as well.
\end{itemize}
Analogous reasoning holds for $\tmin$ predicates, so
$\ICFPStepOp{\Prog}{\infty}$ corresponds to $\ILFPStepOp{\Prog}{\infty}$. But
then, the first and the third claim of this theorem follow straightforwardly
from Lemma~\ref{lemma:operator:I}. Moreover, pseudo-interpretations contain at
most one fact per combination of a limit predicate and a tuple of objects (of
corresponding arity), and program $\Prog$ is semi-ground so each rule in
$\Prog$ produces at most one fact in $\ILFPStepOp{\Prog}{\infty}$, which
implies the second claim of this theorem.
\end{proof}

\section{Proofs for Section~\ref{sec:decidability}}\label{sec:proofs-undecidability}

\typedfactentailmentundecidable*

\begin{proof}
We present a reduction from Hilbert's tenth problem, which is to determine
whether, given a polynomial ${P(x_1, \dots, x_n)}$ over variables ${x_1, \dots,
x_n}$, equation ${P(x_1, \dots, x_n) = 0}$ has integer solutions. It is well
known that the problem remains undecidable even if the solutions must be
nonnegative integers, so we use that variant in this proof. For each such
polynomial $P$, let $\Prog_P$ be the program containing
rules~\eqref{eq:factent:initA}--\eqref{eq:factent:diophantine} for $A$ a unary
$\tmin$ predicate and $B$ a nullary object predicate; it is obvious that
${\Prog_P \models B}$ if and only if ${P(x_1, \dots, x_n) = 0}$ has a
nonnegative integer solution.
\begin{align}
                                                                                                            & \to A(0)      \label{eq:factent:initA} \\
    \textstyle\bigwedge_{i=1}^n A(x_i) \land (P(x_1, \dots, x_n) \leq 0) \land (P(x_1, \dots, x_n) \geq 0)  & \to B         \label{eq:factent:diophantine}
\end{align}
Moreover, rule \eqref{eq:factent:diophantine} is applicable to ${J =
\set{A(0)}}$ if and only if ${P(x_1, \dots, x_n) = 0}$ has a nonnegative
integer solution.
\end{proof}

Although Presburger arithmetic does not have propositional variables, these can
clearly be axiomatised using numeric variables. Hence, in the rest of this
section we use propositional variables in Presburger formulas for the sake of
clarity.

\begin{definition}\label{def:Presburger-encoding}
    For each $n$-ary object predicate $A$, each $(n+1)$-ary ordinary numeric
    predicate $B$, each $(n+1)$-ary limit predicate $C$, each $n$-tuple of
    objects ${\mathbf a}$, and each integer $k$, let $\defined{A}{\mathbf a}$,
    $\defined{B}{\mathbf a k}$, $\defined{C}{\mathbf a}$ and $\fin{C}{\mathbf
    a}$ be distinct propositional variables, and let $\val{C}{\mathbf a}$ a
    distinct integer variable. Moreover, let $\preceq_C$ be $\le$ (resp.\
    $\ge$) if $C$ is a $\tmax$ (resp.\ $\tmin$) predicate.

    For $\Prog$ a semi-ground program, ${\mathsf{Pres}(\Prog) = \bigwedge_{r
    \in \Prog} \mathsf{Pres}(r)}$ is the Presburger formula where
    ${\mathsf{Pres}(r) = \forall \mathbf y.r'}$ for $\mathbf y$ all numeric
    variables in $r$ and $r'$ is obtained by replacing each atom $\alpha$ in
    $r$ with its encoding $\mathsf{Pres}(\alpha)$ defined as follows:
    \begin{itemize}
        \item ${\mathsf{Pres}(\alpha) = \alpha}$ if $\alpha$ is a comparison
        atom;

        \item ${\mathsf{Pres}(\alpha) = \defined{A}{\mathbf a}}$ if $\alpha$ is
        an object atom of the form $A(\mathbf a)$;

        \item ${\mathsf{Pres}(\alpha) = \defined{B}{\mathbf a k}}$ if $\alpha$
        is an ordinary numeric atom of the form $B(\mathbf a,k)$; and

        \item ${\mathsf{Pres}(\alpha) = \defined{C}{\mathbf a} \land (\neg
        \fin{C}{\mathbf a} \lor s \preceq_C \val{C}{\mathbf a})}$ if $\alpha$
        is a limit atom of the form ${C(\mathbf a,s)}$.
    \end{itemize}

    Let $J$ be a pseudo-interpretation, and let $\pass$ be an assignment of
    Boolean and integer variables. Then, $J$ corresponds to $\pass$ if all of
    the following conditions hold for all $A$, $B$, $C$, and ${\mathbf a}$ as
    specified above, for each integer ${k \in \mathbb{Z}}$:
    \begin{itemize}
        \item ${\pval{\defined{A}{\mathbf a}} = \true}$ if and only if
        ${A(\mathbf a) \in J}$;

        \item ${\pval{\defined{B}{\mathbf a k}} = \true}$ if and only if
        ${B(\mathbf a,k) \in J}$;

        \item ${\pval{\defined{C}{\mathbf a}} = \true}$ if and only if
        ${C(\mathbf a,\infty) \in J}$ or there exists ${\ell \in \mathbb{Z}}$
        such that ${C(\mathbf a,\ell) \in J}$;

        \item ${\pval{\fin{C}{\mathbf a}} = \true}$ and ${\pval{\val{C}{\mathbf
        a}} = k}$ if and only if ${C(\mathbf a,k) \in J}$.
    \end{itemize}
\end{definition}

Note that $k$ in Definition~\ref{def:Presburger-encoding} ranges over all
integers (which excludes $\infty$), ${\pval{\val{C}{\mathbf a}}}$ is an equal
to some integer $k$, and $J$ is a pseudo-interpretation and thus cannot contain
both ${C(\mathbf a,\infty)}$ and ${C(\mathbf a,k)}$; thus, ${C(\mathbf
a,\infty) \in J}$ implies ${\pval{\fin{C}{\mathbf a}} = \false}$.

Also note that each assignment $\pass$ corresponds to precisely one $J$;
however, each $J$ corresponds to infinitely many assignments $\pass$ since
Definition~\ref{def:Presburger-encoding} does not restrict the value of
variables other than $\defined{A}{\mathbf a}$, $\defined{B}{\mathbf a k}$,
$\defined{C}{\mathbf a}$, $\fin{C}{\mathbf a}$, and $\val{C}{\mathbf a}$.
Moreover, two assignments corresponding to the same pseudo-interpretation may
differ on the value of $\val{C}{\mathbf a}$ if $\fin{C}{\mathbf a}$ is set to
$\false$ in both assignments, and they can differ on the values of
$\fin{C}{\mathbf a}$ and $\val{C}{\mathbf a}$ if $\defined{C}{\mathbf a}$ is
set to $\false$ in both assignments.

\begin{lemma}\label{lemma:pseudo-presburger-correspondence}
    Let $J$ be a pseudo-interpretation and let $\pass$ be a variable assignment
    such that $J$ corresponds to $\pass$. Then,
    \begin{enumerate}
        \item ${J \models \alpha}$ if and only if ${\mu \models
        \mathsf{Pres}(\alpha)}$ for each ground atom $\alpha$, and
        
        \item ${J \models r}$ if and only if ${\mu \models \mathsf{Pres}(r)}$
        for each semi-ground rule $r$.
    \end{enumerate}
\end{lemma}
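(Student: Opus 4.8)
The plan is to prove Lemma~\ref{lemma:pseudo-presburger-correspondence} by first establishing Claim~1 (the atom-level correspondence) and then leveraging it compositionally to obtain Claim~2 (the rule-level correspondence). For Claim~1, I would proceed by a case analysis on the syntactic shape of the ground atom $\alpha$, matching the four cases in the definition of $\mathsf{Pres}(\alpha)$ from Definition~\ref{def:Presburger-encoding}. The comparison, object, and ordinary-numeric cases are immediate: for a comparison atom, $\mathsf{Pres}(\alpha) = \alpha$ and both $J \models \alpha$ and $\mu \models \alpha$ reduce to evaluating the same arithmetic comparison; for an object atom $A(\mathbf a)$ or ordinary numeric atom $B(\mathbf a,k)$, the correspondence of $J$ and $\mu$ gives directly that $A(\mathbf a) \in J$ iff $\mu(\defined{A}{\mathbf a}) = \true$, and likewise for $B$, which is exactly the truth of $\mathsf{Pres}(\alpha)$.

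The genuine content lies in the limit-atom case, which I expect to be the main obstacle. Here $\alpha = C(\mathbf a, k)$ for some integer $k$ (since $\alpha$ is ground), and $\mathsf{Pres}(\alpha) = \defined{C}{\mathbf a} \wedge (\neg \fin{C}{\mathbf a} \vee k \preceq_C \val{C}{\mathbf a})$. I would unfold the meaning of $J \models C(\mathbf a,k)$ via the limit-closed interpretation $I$ corresponding to $J$: treating $C$ as (say) a $\tmax$ predicate, $I \models C(\mathbf a,k)$ holds exactly when either $C(\mathbf a,\infty) \in J$, or there is an integer $\ell$ with $C(\mathbf a,\ell) \in J$ and $k \leq \ell$. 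The core of the argument is then to show this disjunction matches the Presburger formula under the correspondence conditions. The subtlety is that $\defined{C}{\mathbf a}$ fires in \emph{both} the infinite and the finite-defined cases, while $\fin{C}{\mathbf a}$ distinguishes them: when $C(\mathbf a,\infty) \in J$ we have $\mu(\defined{C}{\mathbf a}) = \true$ and $\mu(\fin{C}{\mathbf a}) = \false$, so the conjunction holds via the first disjunct; when $C(\mathbf a,\ell) \in J$ for an integer $\ell$ we have both $\mu(\defined{C}{\mathbf a}) = \true$ and $\mu(\fin{C}{\mathbf a}) = \true$ with $\mu(\val{C}{\mathbf a}) = \ell$, so the formula holds iff $k \preceq_C \ell$, i.e.\ $k \leq \ell$, matching the limit-closure condition. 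I must also verify the converse directions and handle the $\tmin$ case by the symmetric argument with $\preceq_C$ being $\geq$, being careful that $\preceq_C$ is oriented so that $k \preceq_C \val{C}{\mathbf a}$ captures exactly ``$k$ is within the bound.''

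For Claim~2, I would use the fact that $\mathsf{Pres}(r) = \forall \mathbf y.\, r'$, where $r'$ replaces each atom by its encoding and $\mathbf y$ ranges over all numeric variables of $r$. Both $J \models r$ and $\mu \models \mathsf{Pres}(r)$ quantify universally over the same set of integer assignments to $\mathbf y$ (recall $r$ is semi-ground, so only numeric variables remain). Fixing one such assignment $\sigma$, the body-and-head structure of $r'$ under $\sigma$ is a Boolean combination of the atom encodings $\mathsf{Pres}(\beta\sigma)$, so Claim~1 applied atom-by-atom gives that $J \models r\sigma$ iff $\mu \models r'\sigma$; since satisfaction of a rule (an implication) is a fixed Boolean function of the truth values of its atoms, and those truth values agree by Claim~1, the equivalence lifts. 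Universally closing over all $\sigma$ then yields $J \models r$ iff $\mu \models \mathsf{Pres}(r)$. The only care needed here is to note that Proposition~\ref{prop:applicability-characterisation}'s notion of $J \models \varphi\sigma$ aligns with the atom-wise reading used in Claim~1, which is immediate from the definition of satisfaction for conjunctions of ground atoms.
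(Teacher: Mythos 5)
Your proposal is correct and follows essentially the same route as the paper's proof: a case analysis on the shape of the ground atom for Claim~1, with the limit-atom case resolved by unfolding satisfaction through the corresponding limit-closed interpretation and matching the $\defined{C}{\mathbf a}$/$\fin{C}{\mathbf a}$/$\val{C}{\mathbf a}$ encoding against the two ways a limit fact can hold, and then lifting to Claim~2 by identifying groundings of the semi-ground rule with assignments to the universally quantified numeric variables of $\mathsf{Pres}(r)$ and applying Claim~1 atom-by-atom under the fixed Boolean structure of the rule. The paper additionally makes explicit the commutation $\mathsf{Pres}(\beta\sigma) = \mathsf{Pres}(\beta)\sigma$, which your argument uses implicitly, but the substance is the same.
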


\begin{proof}
(Claim 1) We consider all possible forms of $\alpha$.
\begin{itemize}
    \item $\alpha$ is a comparison atom. Then, the truth of $\alpha$ is
    independent from $J$ so the claim is immediate.

    \item ${\alpha = A(\mathbf a)}$ is an object fact. Then,
    ${\mathsf{Pres}(\alpha) = \defined{A}{\mathbf a}}$, and
    ${\pval{\defined{A}{\mathbf a}} = \true}$ if and only if
    ${\defined{A}{\mathbf a} \in J}$, so the claim holds.

    \item ${\alpha = B(\mathbf a,k)}$ is an ordinary numeric fact. The proof is
    analogous to the case of object facts.

    \item ${\alpha = C(\mathbf a,k)}$ is a limit fact. If ${J \models \alpha}$,
    then either ${C(\mathbf a,\infty) \in J}$ or an integer $\ell$ exists such
    that ${C(\mathbf a,\ell) \in J}$ and ${k \preceq_C \ell}$; either way,
    ${\pval{\defined{C}{\mathbf a}} = \true}$ holds; moreover,
    ${\pval{\fin{C}{\mathbf a}} = \false}$ holds in the former and
    ${\pval{\val{C}{\mathbf a}} = \ell}$ holds in the latter case; thus, ${\mu
    \models \mathsf{Pres}(r)}$ clearly holds. The converse direction is
    analogous so we omit it for the sake of brevity.
\end{itemize}

\smallskip

(Claim 2) Let $r$ be an arbitrary semi-ground rule, and let $I$ be the
limit-closed interpretation corresponding to $J$. By definition, ${J \models
r}$ if and only if ${I \models r}$, and the latter is equivalent to ${I \models
r'}$ for each ground instance $r'$ of $r$ by the semantics of universal
quantification in first-order logic; but then, the latter claim is equivalent
to ${J \models r'}$ for each ground instance $r'$ of $r$.

Now note that, by construction, we have ${\mathsf{Pres}(\beta\sigma) =
\mathsf{Pres}(\beta)\sigma}$ for each semi-ground atom $\beta$ and each
grounding $\sigma$, and thus ${\mathsf{Pres}(r\sigma) =
\mathsf{Pres}(r)\sigma}$. Finally, groundings of $r$ can be equivalently seen
as variable assignments to universally quantified numeric variables in
$\mathsf{Pres}(r)$, so Claim~2 follows immediately from Claim 1.
\end{proof}

\presburgerencoding*

\begin{proof}
Lemma~\ref{lemma:pseudo-presburger-correspondence} immediately implies that
${\Prog \models \alpha}$ if and only if the sentence ${\varphi_0 = \forall
\mathbf x.\,\mathsf{Pres}(\alpha) \lor \neg\mathsf{Pres}(\Prog)}$ is valid,
where $\mathbf x$ contains all variables $\defined{A}{\mathbf a}$,
$\defined{B}{\mathbf a k}$, $\defined{C}{\mathbf a}$, $\fin{C}{\mathbf a}$, and
$\val{C}{\mathbf a}$ occurring in $\mathsf{Pres}(\Prog)$ or
$\mathsf{Pres}(\alpha)$. Clearly, $|\mathbf x|$ is polynomially bounded by
${\ssize{\Prog} + \ssize{\alpha}}$, and the magnitude of each integer in
$\varphi_0$ is bounded by the maximum magnitude of an integer in $\Prog$ and
$\alpha$. Let $\varphi_1$ be the sentence obtained from $\varphi_0$ by
converting each top-level conjunct of $\mathsf{Pres}(\Prog)$ into form $\forall
\mathbf y_i. \chi_i$ where $\chi_i$ is in CNF. Formulae $\varphi_0$ and
$\varphi_1$ are equivalent, and $\varphi_1$ is of the form
\begin{displaymath}
    \textstyle \varphi_1 = \forall \mathbf x.\,\mathsf{Pres}(\alpha) \lor \neg \bigwedge_{i=1}^n \forall \mathbf y_i.\bigwedge_{j=1}^{\ell_i} \chi_i^j,
\end{displaymath}
where ${n = |\Prog|}$ and, for each rule ${r_i \in \Prog}$, integer $\ell_i$ is
exponentially bounded by $\ssize{r_i}$, and $\ssize{\chi_i^j}$ and $|\mathbf
y_i|$ are linearly bounded by $\ssize{r_i}$. By moving all quantifiers to the
front of the formula and pushing negations inwards, we finally obtain formula
\begin{displaymath}
    \textstyle \varphi_2 = \forall \mathbf x \exists \mathbf y. \mathsf{Pres}(\alpha) \lor \bigvee_{i=1}^n \bigvee_{j=1}^{\ell_i} \psi_i^j,
\end{displaymath}
where ${\mathbf y = \bigcup_{i=1}^n \mathbf y_i}$ and each $\psi_i^j$ is the
negation-normal form of $\neg\chi_i^j$. Formula $\varphi_2$ is of the required
form, $|\mathbf y|$ is bounded polynomially by $\ssize{\Prog}$, number
${\sum_{i=1}^n \ell_i}$ is bounded polynomially by ${n = |\Prog|}$ and
exponentially by ${\max_{r \in \Prog} \ssize{r}}$, and $\ssize{\psi_i^j}$ is
bounded linearly by $\ssize{\Prog}$.
\end{proof}

\presburgervalidity*

\begin{proof}
Let ${m' = |\mathbf y|}$. Each $\psi_i$ can be seen as a system of linear
inequalities ${S_i = (A_i \mathbf x \le \mathbf c_i)}$ such that ${|\mathbf x|
\le \ell}$, ${|\mathbf c_i| \le k}$, and where the maximal magnitude of all
numbers in $A$ and ${\mathbf c_i}$ is bounded by $a^k$. By Proposition~3
of~\citeA{DBLP:conf/icalp/ChistikovH16} (adapted from the work by
\citeA{GathenSieveking78}), the set of solutions to $S_i$ can be represented by
a semi-linear set $\bigcup_{i'\in N_i} L(\mathbf b_{i'},P_{i'})$ where
${\mathbf b_{i'} \in \mathbb Z^\ell}$, ${P_{i'} \subseteq \mathbb Z^\ell}$,
${|N_i| \leq 2^\ell}$, and the magnitude of all integers in $\mathbf b_{i'}$
and $P_{i'}$ is bounded by ${2^{O(\ell\log\ell)} \cdot a^{k\ell}}$.
Consequently, disjunction ${\bigvee_{i=1}^n \psi_i}$ corresponds to a
semi-linear set ${\bigcup_{j\in M} L(\mathbf b_j,P_j)}$ where ${\mathbf b_j \in
\mathbb Z^{m+m'}}$, ${P_j \subseteq \mathbb Z^{m+m'}}$, ${|M| \leq n2^\ell}$,
and the magnitude of each integer in $\mathbf b_j$ and $P_j$ is still bounded
by $2^{O(\ell\log\ell)}\cdot a^{k\ell}$. Formula ${\exists \mathbf
y.\bigvee_{i=1}^n \psi_i}$ then corresponds to the projection of
${\bigcup_{j\in M} L(\mathbf b_j,P_j)}$ on the variables in $\mathbf x$, which
is a semi-linear set of the form ${\bigcup_{j\in M} L(\mathbf b'_j,P'_j)}$
where each ${\mathbf b'_j \in \mathbb Z^m}$ is a projection of $\mathbf b_j$ on
$\mathbf x$, and each ${P'_j \subseteq \mathbb Z^m}$ is a projection of $P_j$
on $\mathbf x$. Now, Theorem~21 by~\citeA{DBLP:conf/icalp/ChistikovH16} implies
that the satisfying assignments to the formula ${\varphi' = \neg \exists
\mathbf y.\bigvee_{i=1}^n\psi_i}$ can be represented as a semi-linear set
${\bigcup_{j'\in M'} L(\mathbf c_{j'},Q_{j'})}$ where the magnitude of each
integer in each $\mathbf c_{j'}$ and $Q_{j'}$ is bounded by
${b=(2^{O(\ell\log\ell)} \cdot a^{k\ell})^{n2^\ell\cdot O(m^4)}}$. Since
$\varphi'$ has a satisfying assignment if and only if it has a satisfying
assignment involving only numbers from some $\mathbf c_{j'}$, it follows that
$\varphi'$ is satisfiable if and only if it is satisfiable over models where
the absolute value of every integer variable is bounded by $b$. This implies
the claim of this lemma since $\varphi$ is valid if and only if $\varphi'$ is
unsatisfiable.
\end{proof}

\counterinterpretationsbounded*

\begin{proof}
The $\Leftarrow$ direction is trivial. For the $\Rightarrow$ direction assume
that ${\Prog\cup\Dat \not\models \alpha}$ holds, and let $\Dat'$ be obtained
from $\Dat$ by removing each fact that does not unify with an atom in $\Prog$
or $\alpha$; clearly, we have ${\Prog\cup\Dat' \not\models \alpha}$. Let
${\varphi = \forall \mathbf x \exists \mathbf y.\bigvee_{i=1}^n\psi_i}$ be the
Presburger sentence from Lemma~\ref{lem:presburger-encoding} for ${\Prog \cup
\Dat'}$ and $\alpha$. Sentence $\varphi$ is not valid and it satisfies the
following conditions.
\begin{itemize}
    \item Number $m = |\mathbf x|$ is polynomial in $\ssize{\Prog\cup\Dat'}$,
    which, in turn, is bounded by ${|\Prog\cup\Dat'| \cdot \max_{r \in
    \Prog\cup\Dat'} \ssize{r}}$. Moreover, $\Dat'$ contains only facts that
    unify with atoms in $\Prog$ and $\alpha$, so $m$ can be bounded further,
    namely linearly in the product $c s$, for $c = |\Prog|$ and $s=\max_{r \in
    \Prog} \ssize{\Prog}$.

    \item Number $n$ is linear in the product of $c$ and $2^s$.

    \item The size, and hence the number $\ell$ of variables in each $\psi_i$,
    are linear in $s$.
\end{itemize}
Let $a$ be the maximal magnitude of an integer in $\Prog\cup\Dat'$ (and thus in
$\varphi$ as well). By Lemma~\ref{lem:validity-presburger}, an assignment
$\pass$ exists such that ${\pass \not\models \varphi}$ and the magnitude of
each integer variable is bounded by $b = (2^{O(s\log s)} \cdot
a^{O(s^2)})^{O(c\cdot 2^s)\cdot 2^{O(s)}\cdot O((c s)^4)}$. Clearly, $b$ is
polynomial in $a$, exponential in $c$, and doubly-exponential in $s$, as
required. Moreover, clearly ${\pass \models \mathsf{Pres}(\Prog\cup\Dat')}$ and
${\pass \not\models \mathsf{Pres}(\alpha)}$. Now let $J'$ be the pseudo-model
corresponding to $\pass$; by
Lemma~\ref{lemma:pseudo-presburger-correspondence}, we have ${J' \models
\Prog\cup\Dat'}$ and ${J' \not\models \alpha}$. By construction, the magnitude
of each integer in $J'$ is bounded by $b$. Furthermore, let $J$ be the
restriction of $J'$ to the facts that unify with the head of at least one rule
in ${\Prog\cup\Dat'}$; clearly, we still have ${J \models \Prog\cup\Dat'}$ and
${J \not\models \alpha}$. Finally, ${|J| \leq |\Prog\cup\Dat'|}$ holds by our
construction, which implies our claim.
\end{proof}

\begin{lemma}\label{lem:ilfp-step-polynomial}
    For each semi-ground, limit-linear program $\Prog$, pseudo-interpretation
    $J$, and dataset $\Dat$, there exists a polynomial $p$ such that
    $\ILFPStep{\Prog\cup\Dat}{J}$ can be computed in nondeterministic
    polynomial time in ${\ssize{\Prog} + \ssize{\Dat} + \ssize{J}}$, and in
    deterministic polynomial time in ${\ssize{\Prog} + \ssize{\Dat} +
    \ssize{J}}^{p(\max_{r\in\Prog}\ssize r)}$.
\end{lemma}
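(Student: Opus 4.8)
The plan is to reduce the computation of $\ILFPStep{\Prog\cup\Dat}{J}$ to a collection of integer linear programming problems, one per rule, and then invoke standard complexity results for integer programming. First I would fix a rule ${r \in \Prog\cup\Dat}$ and analyse the constraint $\constraints{r}{J}$. By the simplifying assumptions, every numeric body atom is function-free, so its numeric argument is a single variable; since $\Prog$ is semi-ground, every variable of $r$ occurs in a limit body atom, and limit-linearity then forces each numeric term of $r$ (both in $\cbody{r}$ and in $\head{r}$) to have the form ${k_0 + \sum_i k_i m_i}$ with integer coefficients $k_i$ and with the $m_i$ ranging over the limit-atom variables. Each coefficient is a product of integers occurring in $r$, so its bit-length is polynomial in $\ssize{r}$. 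Consequently $\constraints{r}{J}$ is a system of linear inequalities over at most $\ssize{r}$ integer variables: the atoms of $\cbody{r}$ contribute general linear inequalities, while each limit body atom ${B(\mathbf b,m)}$ with ${B(\mathbf b,\ell)\in J}$ and $\ell$ finite contributes a one-sided bound ${\ell\le m}$ or ${m\le\ell}$, and any missing body atom forces ${(0<0)}$ into the system, making it infeasible. Looking up the relevant values $\ell$ in $J$ and assembling the system takes deterministic polynomial time and yields a system whose total bit-length is polynomial in ${\ssize{r}+\ssize{J}}$.

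Second I would decide applicability and compute the head. Applicability is exactly feasibility of $\constraints{r}{J}$; by the classical small-solution bounds for integer programming~\citeA{GathenSieveking78}, a feasible system of this bit-length has a solution of polynomially bounded magnitude, so feasibility is decided in nondeterministic polynomial time by guessing and verifying such a solution. If $\head{r}$ is an object or ordinary numeric atom, then ${\mathsf{hd}(r,J)=\head{r}}$ and nothing further is needed. If ${\head{r}=B(\mathbf b,s)}$ is a limit atom, then $\mathsf{opt}(r,J)$ is the optimum of the linear objective $s$ over the integer points of $\constraints{r}{J}$. Whether this optimum is $\infty$ is decided in deterministic polynomial time: the feasible region is a rational polyhedron, so its integer points are unbounded in the direction of $s$ if and only if the rational relaxation is, which is a polynomial linear-programming test on the recession cone; and when the optimum is finite it is attained at an integer point of polynomially bounded magnitude, so $\mathsf{opt}(r,J)$ has polynomial bit-length and can be produced by the nondeterministic procedure. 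Aggregating the heads $\mathsf{hd}(r,J)$ over the polynomially many rules---taking, for each limit predicate and object tuple, the best optimum obtained, and $\infty$ whenever any contributing rule yields $\infty$---produces $\ILFPStep{\Prog\cup\Dat}{J}$; this aggregation is clearly polynomial. Hence the whole step is computed in nondeterministic polynomial time in ${\ssize{\Prog}+\ssize{\Dat}+\ssize{J}}$.

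For the deterministic bound I would replace the nondeterministic feasibility and optimization steps by Lenstra's fixed-dimension algorithm. The number of variables of each $\constraints{r}{J}$ is bounded by ${\max_{r\in\Prog}\ssize{r}}$, so in this fixed dimension integer feasibility and the exact computation of the optimum of $s$ (including detection of an unbounded, hence $\infty$, optimum) are solvable in deterministic time polynomial in the bit-length of the system, with the degree of the polynomial depending only on the dimension. This gives a per-rule bound of ${(\ssize{\Prog}+\ssize{\Dat}+\ssize{J})^{p(\max_{r\in\Prog}\ssize{r})}}$ for a suitable polynomial $p$, and aggregating over the polynomially many rules preserves it. I expect the main obstacle to be the careful justification that the optimum is either $\infty$ or of polynomially bounded magnitude and that unboundedness can be separated from the finite case in polynomial time; this rests entirely on the integer-programming machinery rather than on any further property of limit programs, so the limit-$\DLog$-specific content is confined to the first step, namely showing that limit-linearity together with semi-groundedness turns $\constraints{r}{J}$ into a genuinely linear system of polynomially bounded size.
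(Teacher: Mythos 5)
Your proposal is correct and follows essentially the same route as the paper's proof: per rule, observe that semi-groundedness plus limit-linearity makes $\constraints{r}{J}$ a genuinely linear integer system of polynomial bit-length, decide feasibility via small-solution bounds (nondeterministically) or a fixed-dimension Lenstra/Kannan-style algorithm (deterministically), detect an infinite optimum by testing unboundedness of the rational relaxation, compute a finite optimum from the polynomial bound on optimal solutions, and aggregate the heads over the polynomially many rules. The only differences are in the particular references invoked (the paper uses Papadimitriou for the magnitude bounds and binary-search optimization, Byrd--Goldman--Heller for the boundedness test, and Kannan for the fixed-parameter case), which are interchangeable with yours for these purposes.
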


\begin{proof}
Let ${S = \qset{\mathsf{hd}(r,J)}{\text{rule } r \in \Prog\cup\Dat \text{ is
applicable to } J }}$. Program $\Prog$ is semi-ground, and therefore ${\Prog
\cup \Dat}$ is semi-ground as well; thus, each rule of ${\Prog \cup \Dat}$ can
contribute at most one fact to $S$, so we have ${|S| \leq |\Prog|+|\Dat|}$. By
Definition~\ref{def:rule-app}, $\ILFPStep{\Prog\cup\Dat}{J}$ is the smallest
(w.r.t.\ $\sqsubseteq$) pseudo-interpretation such that
${\ILFPStep{\Prog\cup\Dat}{J} \models S}$, so we can compute
$\ILFPStep{\Prog\cup\Dat}{J}$ as the set containing each object and ordinary
numeric fact in $S$, each fact ${A(\mathbf a,\infty) \in S}$ for $A$ a limit
predicate, each fact ${A(\mathbf a,\ell) \in S}$ such that $A$ is a $\tmin$
(resp.\ $\tmax$) predicate and ${A(\mathbf a,k) \in S}$ implies ${k \neq
\infty}$ and ${k \geq \ell}$ (resp.\ ${k \leq \ell}$). To complete the proof of
this lemma, we next argue that set $S$ can be computed within the required time
bounds.

Consider an arbitrary rule ${r \in \Prog\cup\Dat}$, and let $J'$ be the subset
of $J$ containing all facts that unify with a body atom in $r$; note that
${|J'| \le \ssize{r}}$. Rule $r$ is applicable to $J$ if and only if
conjunction $\constraints{r}{J'}$ has an integer solution. By construction,
$\ssize{\constraints{r}{J'}}$ is linear in ${\ssize{r} + \ssize{J'}}$, the
number of variables in $\constraints{r}{J'}$ and $r$ is the same,
$|\constraints{r}{J'}|$ is linear in $\ssize{r}$, and the magnitude of each
integer in $\constraints{r}{J'}$ is exponentially bounded in ${\ssize{r} +
\ssize{J}}$. But then, checking whether $\constraints{r}{J}$ has an integer
solution is in $\textsc{NP}$ w.r.t.\ ${\ssize{r} + \ssize{J}}$, and in
$\textsc{PTime}$ w.r.t.\ $\ssize{J}^{p(\ssize{r})}$ for some polynomial $p$, as
we argue next.
\begin{itemize}
    \item We first consider the former claim. Let $a$ be the maximal magnitude
    of an integer in $\constraints{r}{J'}$. Conjunction $\constraints{r}{J'}$
    contains only the numbers from $r$ and $J'$, whose magnitude is at most
    $2^{\ssize{r}}$ and $2^{\ssize{J}}$, respectively; thus, we have ${a \leq
    2^{\ssize{r} + \ssize{J}}}$. Moreover, the results by
    \citeA{DBLP:journals/jacm/Papadimitriou81} show that there exists a
    polynomial $p_1$ such that the magnitude of an integer in a solution to
    $\constraints{r}{J'}$ can be bounded by ${b = a^{p_1(\ssize{r})}}$, and so
    there exists a polynomial $p_2$ such that ${b \leq
    2^{p_2(\ssize{r}+\ssize{J})}}$. The binary representation of $b$ thus
    requires at most ${\ssize{r} + \ssize{J}}$ bits, and so we can guess it in
    polynomial time.

    \item We next consider the latter claim. By Theorem 5.4
    of~\citeA{DBLP:journals/mor/Kannan87}, checking satisfiability of
    $\constraints{r}{J}$ over $\mathbb Z$ is fixed-parameter tractable in the
    number $n$ of variables in $r$--that is, there exists a polynomial $p_3$
    such that a solution to $\constraints{r}{J}$ can be computed in time
    ${O((\ssize{r}+\ssize{J}) \cdot 2^{p_3(n)})}$. Since ${n \le \ssize{r}}$
    clearly holds, there exists a polynomial $p_4$ such that the satisfiability
    of $\constraints{r}{J}$ can be checked in time that is thus
    $\ssize{J}^{p_4(\ssize{r})}$.
\end{itemize}

Now assume that $r$ is applicable to $J$. Then ${\mathsf{hd}(r,J) = \head{r}}$
if $\head{r}$ is an object atom, so we assume that ${\head{r} = A(\mathbf
a,s)}$ is a limit atom and argue that $\mathsf{opt}(r,J)$ can be computed within
the required time bounds using the following two steps.
\begin{enumerate}
    \item Depending on whether $A$ is a $\tmin$ or a $\tmax$ predicate, we
    check whether there is a smallest/largest value for $s$ in all solutions to
    $\constraints{r}{J}$---that is, we check whether the integer linear program
    `minimise/maximise $s$ subject to $\constraints{r}{J}$' is bounded.
    \citeA{ByrdGH87} showed that this amounts to checking boundedness of the
    corresponding linear relaxation, which in turn can be reduced to checking
    linear feasibility and can be solved in deterministic polynomial time in
    ${\ssize{r}+\ssize{J}}$.

    \item If the above problem is bounded, we compute its optimal solution,
    which can be reduced to polynomially many (in $\ssize{r} + \ssize{J}$)
    feasibility checks, as shown by \citeA{DBLP:journals/jacm/Papadimitriou81}
    (Corollary~2 with binary search). Each such feasibility check is in
    \textsc{NP} w.r.t.\ $\ssize{r}+\ssize{J}$, and in \textsc{PTime} w.r.t.\
    $\ssize{J}^{p(\ssize{r})}$.
\end{enumerate}
Thus, $\mathsf{hd}(r,J)$ can be computed in nondeterministic polynomial time in
${\ssize{r}+\ssize{J}}$, and in deterministic polynomial time in
$\ssize{J}^{p(\ssize{r})}$, which implies our claim.
\end{proof}

\begin{lemma}\label{lem:adm-fact-ent-conp}
    Deciding ${\Prog \models \alpha}$ is \textsc{coNP}-hard in data complexity
    for $\Prog$ a limit-linear program and $\alpha$ a fact.
\end{lemma}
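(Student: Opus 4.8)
The plan is to prove the matching lower bound for the \textsc{coNP} upper bound of Theorem~\ref{thm:conp-conexptime}, by reducing the (\textsc{NP}-complete) square tiling problem to the \emph{complement} of entailment. Recall that non-entailment is in \textsc{NP}: by Theorem~\ref{th:pseudo} one may guess a polynomially-sized pseudo-model of ${\Prog \cup \Dat}$ with polynomially-bounded integers that refutes $\alpha$. Since limit $\DLog$ is monotone, ${\Prog \cup \Dat \models \alpha}$ holds iff $\alpha$ lies in the least model, equivalently iff \emph{every} model contains $\alpha$; hence it suffices to arrange that a model of ${\Prog \cup \Dat}$ omitting a distinguished goal fact $\mathit{Bad}$ exists iff the tiling instance is tileable. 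I would therefore fix a single program $\Prog$ and, from an instance $T$ (tiles ${\{0,\dots,p{-}1\}}$, horizontal/vertical compatibility $H,V$, grid size $n$ in unary), compute in \textsc{LogSpace} a dataset $\Dat_T$ with ${\Prog \cup \Dat_T \not\models \mathit{Bad}}$ iff $T$ is tileable.

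The encoding lays out the $n^2$ cells in row-major order via the linear-order facts ${\mathit{first}/\mathit{next}}$ and stores $p$, $n$, and crucially the \emph{complement} relations $\overline{H},\overline{V}$ as object/numeric facts. A candidate tiling is the base-$p$ digit sequence of a single integer $a$, so that ``exactly one tile per cell, each in ${\{0,\dots,p{-}1\}}$'' is automatic and only adjacency remains to be checked. The fixed program extracts digits by recursing over the cell order: it builds the powers $p^{k}$ via a rule ${\mathit{Pow}(k,v)\wedge P(p)\to\mathit{Pow}(k{+}1,p\times v)}$ and relates $a$ to its $k$-th digit through linear constraints like ${a = p^{k}\times\mathit{hi}+\mathit{lo}}$, ${0\le\mathit{lo}<p^{k}}$, ${\mathit{hi}=p\times\mathit{hi}'+d}$, ${0\le d<p}$. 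The key point for limit-linearity is that $p$ and $p^{k}$ enter only as \emph{coefficients} of guessed quantities, so two ``guessed'' numeric variables are never multiplied and every term has the shape required by Definition~\ref{def:limit-linear}. A violated horizontal (resp.\ vertical) constraint is then detected purely positively by ${\mathit{Digit}(k,d)\wedge\mathit{Digit}(k',d')\wedge\overline{H}(d,d')\to\mathit{Bad}}$ for horizontally adjacent $k,k'$ (and analogously with $\overline{V}$ and ${k'=k{+}n}$).

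The intended correctness argument runs against the least model: a valid tiling yields a model that commits to the corresponding $a$ and derives no violation, so $\mathit{Bad}$ is omitted; if no tiling exists, every committed candidate violates some adjacency and forces $\mathit{Bad}$, so ${\Prog\cup\Dat_T\models\mathit{Bad}}$. All magnitudes stay polynomial ($a$ has ${n^2\lceil\log p\rceil}$ bits and the $p^{k}$ are no larger), $\Prog$ is fixed, and the construction is plainly log-space, giving \textsc{coNP}-hardness in data complexity.

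The main obstacle—and where the construction must be built with care—is bridging the \emph{existential} nature of positive, monotone derivation with the \emph{universal} statement ``for every candidate tiling some constraint fails'' that $\mathit{Bad}$ must capture. Merely deriving $\mathit{Bad}$ from each individual faulty candidate is insufficient, and the trivial model containing no candidate must not vacuously avoid $\mathit{Bad}$; the program has to force each model to \emph{commit} to a full digit assignment and to realise the aggregation over candidates through the $\tmin$/$\tmax$ semantics of the limit predicates carrying $a$, so that $\mathit{Bad}$ is forced exactly in the non-tileable case rather than whenever any faulty candidate is present. Making this commitment/aggregation \emph{exact}—neither over- nor under-approximating tileability—while keeping all arithmetic linear and all derived integers polynomially bounded is the delicate step; I expect the same idea, driven by a binary-coded grid and doubly-exponential powers, to underlie the \textsc{coNExpTime}-hardness half of Theorem~\ref{thm:conp-conexptime}.
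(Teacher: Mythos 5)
Your high-level plan matches the paper's: both reduce the \textsc{NP}-complete square tiling problem to the complement of entailment, both encode a candidate tiling as a single integer whose base-$M$ digits are the tiles, and both extract digits with limit-linear constraints in which the shift factors $M^{i+Nj}$ (supplied as data, since $N$ is unary) appear only as coefficients of variables bound by limit atoms. However, you stop exactly at the point where the actual work lies. You correctly observe that rules of the form ``faulty candidate $\Rightarrow \mathit{Bad}$'' cannot work, because entailment of $\mathit{Bad}$ must express the \emph{universal} statement that every candidate fails, and you defer the resolution to an unspecified ``commitment/aggregation \ldots through the $\tmin$/$\tmax$ semantics.'' No such commitment mechanism is given, and the one you gesture at (forcing each model to commit to one full digit assignment) is not how the paper resolves the tension --- indeed, in a positive, monotone language the least model cannot be forced to nondeterministically ``choose'' a candidate.

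The missing idea is a deterministic \emph{sequential enumeration} of all candidates by a $\tmax$ counter. The paper asserts $\mathit{tiling}(0)$, and for each $n$ with $\mathit{tiling}(n)$ derived, rules \eqref{eq:dhard:ass:H}--\eqref{eq:dhard:ass:V} derive $\mathit{tiling}(n+1)$ \emph{only if} the tiling encoded by $n$ violates some horizontal or vertical constraint; since $\mathit{tiling}$ is a $\tmax$ predicate, the least model records exactly how far the enumeration advances. The counter passes the threshold $\mathit{maxTiling} = M^{N^2}-1$ if and only if every one of the $M^{N^2}$ candidates is faulty, and rule \eqref{eq:dhard:ass:noSol} then fires $\mathit{noSolution}$. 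This is what converts the per-candidate existential check into the required universal statement, and it is the step your proposal leaves open; without it the reduction does not go through. (Two minor points: the integers involved have \emph{exponential magnitude} but polynomial bit-length, so your claim that ``all magnitudes stay polynomial'' should be restated in terms of representation size; and your closing guess that the \textsc{coNExpTime}-hardness proof uses the same counter idea over a binary-coded grid is correct.)
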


\begin{proof}
An instance $\mathcal{T}$ of the \emph{square tiling} problem is given by an
integer $N$ coded in unary, a set ${T = \set{t_0,\dots,t_{M-1}}}$ of $M$ tiles,
and two compatibility relations ${H \subseteq T \times T}$ and ${V \subseteq T
\times T}$. The problem is to determine whether there exists a tiling ${\tau :
\{ 0, \dots, N-1 \}^2 \rightarrow T}$ of an ${N \times N}$ square such that
${\langle \tau(i,j),\tau(i+1,j) \rangle \in H}$ holds for all ${0 \leq i <
N-1}$ and ${0 \leq j < N}$, and ${\langle \tau(i,j),\tau(i,j+1) \rangle \in V}$
holds for all ${0 \leq i < N}$ and ${0 \leq j < N-1}$, which is known to be
\textsc{NP}-complete. Thus, to prove the claim of this lemma, we reduce the
complement of the problem by presenting a fixed program $\Prog_\mathit{tiling}$
and a dataset $\Dat_\mathcal{T}$ (that depends on $\mathcal{T}$), and showing
that $\mathcal{T}$ has \emph{no} solution if and only if
${\Prog_\mathit{tiling} \cup \Dat_\mathcal{T} \models \mathit{noSolution}}$.

Our encoding uses object EDB predicates $\mathit{succ}$,
$\mathit{incompatibleH}$, and $\mathit{incompatibleV}$; ordinary numeric EDB
predicates $\mathit{shift}$, $\mathit{tileNo}$, $\mathit{numTiles}$, and
$\mathit{maxTiling}$; nullary object IDB predicate $\mathit{noSolution}$; unary
$\tmin$ IDB predicate $I$; and unary $\tmax$ IDB predicate
$\mathit{tiling}$. Program $\Prog_\mathit{tiling}$ contains rules
\eqref{eq:dhard:number}--\eqref{eq:dhard:ass:noSol}, where ${(s \doteq t)}$
abbreviates ${(s \le t) \land (t \le s)}$.
\begin{align}
                                                                                                                        & \to I(0)                  \label{eq:dhard:number}\\
                                                                                                                        & \to \mathit{tiling}(0)    \label{eq:dhard:ass:init}\\
    \begin{array}{@{}r@{}}
        \mathit{tiling}(n) \land \mathit{numTiles}(\mathit{nt}) \\
        \mathit{shift}(x,y,s) \land \mathit{tileNo}(u,t) \land I(m_1) \land I(m_2) \land (n \doteq m_1 \times \mathit{nt} \times s + t \times s + m_2) \land (m_2 < s) \\
        \mathit{succ}(x,x') \\
        \mathit{shift}(x',y,s') \land \mathit{tileNo}(u',t') \land I(m_1') \land I(m_2') \land (n \doteq m_1' \times \mathit{nt} \times s' + t' \times s' + m_2') \land (m_2' < s') \\
        \mathit{incompatibleH}(u,u') \\
    \end{array}                                                                                                         &
    \begin{array}{@{\;}l@{}}
        \land\\
        \land\\
        \land\\
        \land\\
        \to \mathit{tiling}(n+1)\\
    \end{array}                                                                                                                                     \label{eq:dhard:ass:H}\\
    \begin{array}{@{}r@{}}
        \mathit{tiling}(n) \land \mathit{numTiles}(\mathit{nt}) \\
        \mathit{shift}(x,y,s) \land \mathit{tileNo}(u,t) \land I(m_1) \land I(m_2) \land (n \doteq m_1 \times \mathit{nt} \times s + t \times s + m_2) \land (m_2 < s) \\
        \mathit{succ}(y,y') \\
        \mathit{shift}(x,y',s') \land \mathit{tileNo}(u',t') \land I(m_1') \land I(m_2') \land (n \doteq m_1' \times \mathit{nt} \times s' + t' \times s' + m_2') \land (m_2' < s') \\
        \mathit{incompatibleV}(u,u') \\
    \end{array}                                                                                                         &
    \begin{array}{@{\;}l@{}}
        \land\\
        \land\\
        \land\\
        \land\\
        \to \mathit{tiling}(n+1)\\
    \end{array}                                                                                                                                     \label{eq:dhard:ass:V}\\
    \mathit{tiling}(n) \land \mathit{maxTiling}(m) \land (m<n)                                                         & \to \mathit{noSolution}    \label{eq:dhard:ass:noSol}
\end{align}
Dataset $\Dat_\mathcal{T}$ contains facts
\eqref{eq:dhard:numTiles}--\eqref{eq:dhard:shift}, where ${g_0, \dots,
g_{N-1}}$ are fresh objects, and $t_i$ for ${0 \leq i < M}$ are distinct
objects corresponding to the tiles in $T$. Since $N$ is coded in unary,
although numbers $M^{N^2}-1$ and $M^{i + N j}$ are exponential in $N$, they can
be computed in polynomial time and represented using polynomially many bits.
\begin{align}
    & \to\mathit{numTiles}(M)                                                                                           \label{eq:dhard:numTiles}\\
    & \to\mathit{tileNo}(t_i,i)                 && \text{for each } 0 \le i < M                                         \label{eq:dhard:tileNo}\\
    & \to\mathit{incompatibleH}(t_i,t_j)        && \text{for each } 0 \le i,j < M \text{ such that } (t_i,t_j) \notin H \label{eq:dhard:incH}\\
    & \to\mathit{incompatibleV}(t_i,t_j)        && \text{for each } 0 \le i,j < M \text{ such that } (t_i,t_j) \notin V \label{eq:dhard:incV}\\
    & \to\mathit{maxTiling}(M^{N^2}-1)                                                                                  \label{eq:dhard:maxTiling}\\
    & \to\mathit{succ}(g_i,g_{i+1})             && \text{for each } 0 \le i < N-1                                       \label{eq:dhard:succ}\\
    & \to\mathit{shift}(g_i,g_j,M^{i + N j})    && \text{for each } 0 \le i,j < N                                       \label{eq:dhard:shift}
\end{align}

Our reduction uses the following idea. Facts \eqref{eq:dhard:tileNo} associate
each tile $t_i$ with an integer $i$ where ${0 \le i < M}$; hence, in the rest
of this discussion, we do not distinguish a tile from its number. This allows
us to represent each tiling $\tau$ using a number ${\sum_{0 \le i,j < N} \,
\tau(i,j) \times M^{i+Nj}}$. Thus, given a number $n$ that encodes a tiling,
number $t$ with ${0 \leq t < M}$ corresponds to the tile assigned to position
$(i,j)$ if ${n = m_1 \times M \times M^{i+Nj} + t \times M^{i+Nj} + m_2}$ for
some integers $m_1$ and $m_2$ where ${0 \leq m_2 < M^{i+Nj}}$. Thus, if numeric
variable $n$ is assigned such an encoding of a tiling and numeric variable $s$
is assigned the factor $M^{i+Nj}$ corresponding to a position $(i,j)$, then
conjunction
\begin{displaymath}
    \mathit{tileNo}(u,t) \land I(m_1) \land I(m_2) \land (n \doteq m_1 \times \mathit{nt} \times s + t \times s + m_2) \land (m_2 < s)
\end{displaymath}
is true if and only if $u$ is assigned the tile object corresponding to
position $(i,j)$ in the tiling encoded by $n$. To complete the construction, we
represent each position $(i,j)$ by a pair of objects $(g_i,g_j)$, each of which
is associated with the corresponding factor $M^{i + N j}$ using facts
\eqref{eq:dhard:shift}. Facts \eqref{eq:dhard:succ} provide an ordering on
$g_i$, which allows us to identify adjacent positions. Finally, fact
\eqref{eq:dhard:maxTiling} records the maximal number that encodes a tiling as
we outlined earlier. Program $\Prog_\mathit{tiling}$ then simply checks through
all tilings: rule \eqref{eq:dhard:ass:init} ensures that the tiling encoded as
$0$ is checked; moreover, for each $n$ such that $\mathit{tiling}(n)$ holds,
rules \eqref{eq:dhard:ass:H} and \eqref{eq:dhard:ass:V} derive
$\mathit{tiling}(n+1)$ if either the horizontal or the vertical compatibility
requirement is violated for the tiling encoded by $n$. Finally, rule
\eqref{eq:dhard:ass:noSol} detects that no solution exists if
$\mathit{tiling}(M^{N^2})$ is derived.
\end{proof}

\begin{lemma}\label{lem:adm-fact-ent-conexptime}
    Deciding ${\Prog \models \alpha}$ is \textsc{coNExpTime}-hard for $\Prog$ a
    limit-linear program and $\alpha$ a fact.
\end{lemma}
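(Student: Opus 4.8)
The plan is to adapt the \textsc{coNP}-hardness construction of Lemma~\ref{lem:adm-fact-ent-conp}, replacing the bounded square tiling problem with its \emph{succinct} variant, in which the side length $N$ of the grid is given in binary (so $N$ may be exponential in the input) while the tile set and the relations $H$ and $V$ are given explicitly. Deciding whether such an instance admits \emph{no} tiling is \textsc{coNExpTime}-complete, so it suffices to build, in time polynomial in the input, a limit-linear program and a dataset $\Dat_\mathcal{T}$ such that the program together with $\Dat_\mathcal{T}$ entails $\mathit{noSolution}$ iff $\mathcal{T}$ has no tiling. As before, a tiling $\tau$ is encoded by the single integer $\sum_{0 \le p < N^2} \tau(p)\,M^{p}$ with ${p = i + Nj}$, a $\tmax$ counter $\mathit{tiling}$ is initialised to $0$ and incremented past every encoding that violates horizontal or vertical compatibility, and $\mathit{noSolution}$ is derived exactly when the counter is forced up to $M^{N^2}$; the digit of an encoding at position $p$ is read off, as in \eqref{eq:dhard:ass:H}, by a conjunction ${(n \doteq m_1 \times \mathit{nt} \times s + t \times s + m_2) \wedge (m_2 < s)}$ with $s$ the shift factor $M^{p}$.

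The difficulty absent from the polynomial-grid case is that $N$, the number $N^2$ of positions, the shift factors $M^{p}$, and the bound $M^{N^{2}}$ are all exponential or doubly exponential in the input, so none of them can be written down literally in the program, nor can the positions be enumerated as facts. I would resolve this in three steps, each respecting both the one-numeric-argument restriction of Definition~\ref{def:limit-programs} and limit-linearity. First, the powers ${M^{2^{0}}, M^{2^{1}}, \dots, M^{2^{2\lceil\log N\rceil}}}$, and from them $M^{N}$ and ${M^{N^{2}}}$, are computed by a chain of ${O(\log N)}$ rules that repeatedly square the value held in an \emph{ordinary} numeric predicate; since the squared variable occurs only in an ordinary (non-limit) body atom, a head term such as ${v \times v}$ is a legal coefficient term, so these rules are limit-linear even though they produce doubly-exponential integers. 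Second, each grid position is represented not by a number but by ${2\lceil\log N\rceil}$ object bits over the two constants $0$ and $1$ (predicate arities may depend on the fixed instance), so that the exponentially many positions are ranged over by object variables rather than listed. Third, for a position given by its bits, its shift factor ${s = M^{p}}$ is assembled on demand by an ${O(\log N)}$-rule gadget that, reading one bit at a time, multiplies an ordinary-numeric accumulator by the precomputed power $M^{2^{k}}$ (or $M^{N 2^{k}}$) whenever the corresponding bit is set; analogous carry-propagation gadgets compute the horizontal and vertical neighbours of a position and detect the grid boundaries. Every product introduced is of ordinary-numeric variables or integer constants, so no two limit values are ever multiplied and the program stays limit-linear, with maximal rule size ${O(\log N)}$ and overall size polynomial in the input.

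Correctness then follows exactly as in Lemma~\ref{lem:adm-fact-ent-conp}: the least model assigns $\mathit{tiling}$ the first encoding that violates no compatibility constraint, so the counter is forced up to $M^{N^2}$ --- and hence $\mathit{noSolution}$ is derived --- precisely when no valid tiling exists. The main obstacle is the second and third steps above: realising, within a polynomial-size limit-linear program and under the one-numeric-argument restriction, the exponentiation and positional arithmetic required to navigate a doubly-exponential grid. I expect the remaining bookkeeping --- verifying that each gadget is limit-linear and that the digit-extraction and increment rules behave as in the polynomial-grid case --- to be routine.
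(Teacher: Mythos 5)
Your overall architecture matches the paper's: a reduction from succinct square tiling, tilings encoded as single base-$M$ integers, grid positions addressed by tuples of object bits with an axiomatised successor relation, and a $\tmax$ counter driven past every invalid encoding until $\mathit{noSolution}$ fires. The gap is in your first and third steps. In limit $\DLog$ (Definition~\ref{def:limit-programs}) the head of any rule with a nonempty body must be an object or a \emph{limit} atom, so ordinary numeric predicates are necessarily extensional: no rule can derive a new ordinary numeric fact, and hence you cannot ``repeatedly square the value held in an ordinary numeric predicate.'' Any recursively computed accumulator must live in a limit predicate, at which point the squared variable occurs in a limit body atom and a head term such as $v \times v$ violates limit-linearity (Definition~\ref{def:limit-linear} allows limit variables to occur only linearly, with coefficients built from integers and non-limit variables). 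The same problem kills the digit extractor you want to reuse from the \textsc{coNP} case: in ${(n \doteq m_1 \times \mathit{nt} \times s + t \times s + m_2)}$ the shift factor $s$ would have to be a derived, hence limit, value, and $m_1 \times s$ is then a product of two limit variables. Nor can the shift factors be written down as EDB facts, since for a $2^N \times 2^N$ grid they require exponentially many bits.

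The paper flags precisely this obstacle and takes a different route: instead of extracting the digit at position $p$ from $n$ in one shot, it recursively computes the quotients ${s_{n,\mathbf i,\mathbf j} = \lfloor n / M^{\mathbf i + 2^N \mathbf j} \rfloor}$ in a $\tmax$ predicate $\mathit{shiftedTiling}$, dividing by the \emph{constant} $M$ once per step along the position order (so every multiplication has an integer coefficient and limit-linearity is preserved), reads each tile as the low-order digit of the current quotient, and checks vertical compatibility by propagating a distance-annotated $\mathit{conflict}$ predicate backwards through the $2^N$ intervening positions rather than dividing by $M^{2^N}$. You correctly identify the shift-factor arithmetic as the main obstacle, but the proposed resolution cannot be expressed as a limit-linear program; the construction needs to be reorganised along these lines.
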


\begin{proof}
We present a reduction from the \emph{succinct square tiling} problem. An
instance $\mathcal{T}$ of the problem is given by an integer $N$ coded in
unary, a set $T$ containing $M$ tiles, and horizontal and vertical
compatibility relations $H$ and $V$, respectively, as in the proof of
Lemma~\ref{lem:adm-fact-ent-conp}; however, the objective is to tile a square
of ${2^N \times 2^N}$ positions, which is known to be
\textsc{NExpTime}-complete. Thus, to prove the claim of this lemma, we reduce
the complement of the problem by presenting a program $\Prog_\mathcal{T}$ and
showing that $\mathcal{T}$ has \emph{no} solution if and only if
${\Prog_{\mathcal{T}} \models \mathit{noSolution}}$.

The main idea behind our reduction is similar to
Lemma~\ref{lem:adm-fact-ent-conp}. Program $\Prog_{\mathcal{T}}$ contains rules
\eqref{eq:chard:tileNo}--\eqref{eq:chard:incV} that associate each tile with a
number using an ordinary numeric predicate $\mathit{tileNo}$, and encode the
horizontal and vertical incompatibility relations using the object predicates
$\mathit{incompatibleH}$ and $\mathit{incompatibleV}$.
\begin{align}
    & \to\mathit{tileNo}(t_i,i)             && \text{for each } 0 \le i < M                                         \label{eq:chard:tileNo}\\
    & \to\mathit{incompatibleH}(t_i,t_j)    && \text{for each } 0 \le i,j < M \text{ such that } (t_i,t_j) \notin H \label{eq:chard:incH}\\
    & \to\mathit{incompatibleV}(t_i,t_j)    && \text{for each } 0 \le i,j < M \text{ such that } (t_i,t_j) \notin V \label{eq:chard:incV}
\end{align}

The main difference to Lemma~\ref{lem:adm-fact-ent-conp} is that, in order to
obtain a polynomial encoding, we cannot represent a position $(i,j)$ in the
grid explicitly using a pair of objects. Instead, we encode each position using
a pair $(\mathbf i,\mathbf j)$ where ${\mathbf i}$ and ${\mathbf j}$ are
$N$-tuples of objects $\Zero$ and $\One$. If we read $\Zero$ and $\One$ as
representing numbers $0$ and $1$, respectively, then each ${\mathbf i}$ and
${\mathbf j}$ can be seen as a binary number in $[0,2^N-1]$. By a slight abuse
of notation, we often identify a tuple over $\Zero$ and $\One$ with the number
it encodes and use tuples in arithmetic expressions. While positions can be
encoded using $N$ bits, we will also need to ensure distance between positions,
which requires $N+1$ bits. In the rest of this proof, $\mathbf\Zero$ and
$\mathbf\One$ stand for tuples ${\Zero,\dots,\Zero}$ and ${\One,\dots,\One}$,
respectively, whose length is often implicit from the context where these
tuples occur. Similarly, $\mathbf x$, $\mathbf x'$, $\mathbf y$, and $\mathbf
y'$ are tuples of distinct variables whose length will also be clear from the
context.

To axiomatise an ordering on numbers with $N$ bits, program
$\Prog_{\mathcal{T}}$ contains rules
\eqref{eq:chard:bool:0}--\eqref{eq:chard:succprime}, where $B$ is a unary
object predicate, $\mathit{succ}$ is a $2N$-ary object predicate, and
$\mathit{succ'}$ is a $(2N+2)$-ary object predicate. Rules
\eqref{eq:chard:bool:0}--\eqref{eq:chard:succ} ensure ${\Prog_{\mathcal{T}}
\models \mathit{succ}(\mathbf i,\mathbf j})$ where ${\mathbf i}$ and ${\mathbf
j}$ encode numbers with $N$ bits such that ${\mathbf j = 1 + \mathbf i}$; in
particular, rule \eqref{eq:chard:succ} encodes binary incrementation, where
${{\mathbf x}\One{\mathbf \Zero} = 1 + {\mathbf x}\One{\mathbf \Zero}}$ holds
for each position $k$ and each $k$-tuple of zeros and ones ${\mathbf x}$. Rules
\eqref{eq:chard:bool:0}--\eqref{eq:chard:bool:1} and \eqref{eq:chard:succprime}
ensure an analogous property for $\mathit{succ}'$, but for numbers with $N+1$
bits.
\begin{align}
                                                & \to \mathit{B}(\Zero) \label{eq:chard:bool:0}\\
                                                & \to \mathit{B}(\One)  \label{eq:chard:bool:1}\\
    \textstyle\bigwedge_{i=1}^k \mathit{B}(x_i) & \to \mathit{succ}(\mathbf x,\Zero,\mathbf \One,\mathbf x,\One,\mathbf \Zero)    && \text{for each } 0 \le k < N \text{ where }    |\mathbf x| = k \text{ and } |\mathbf \One| = |\mathbf \Zero| = N-k-1   \label{eq:chard:succ}\\
    \textstyle\bigwedge_{i=1}^k \mathit{B}(x_i) & \to \mathit{succ'}(\mathbf x,\Zero,\mathbf \One,\mathbf x,\One,\mathbf \Zero)   && \text{for each } 0 \le k < N+1 \text{ where }  |\mathbf x| = k \text{ and } |\mathbf \One| = |\mathbf \Zero| = N-k     \label{eq:chard:succprime}
\end{align}

Analogously to the proof of Lemma~\ref{lem:adm-fact-ent-conp}, we encoded
tilings using numbers in $[0,M^{2^{2N}}-1]$. To compute the maximum number
encoding a tiling, program $\Prog_{\mathcal{T}}$ contains rules
\eqref{eq:chard:mT:1}--\eqref{eq:chard:maxTiling}, where $\mathit{maxTiling}$
is a unary $\tmin$ predicate, and $\mathit{auxT}$ is a $(2N+1)$-ary $\tmin$
predicate. Auxiliary rules \eqref{eq:chard:mT:1}--\eqref{eq:chard:mT:3}
multiply $M$ with itself as many times as there are grid positions, so we have
$\Prog_{\mathcal{T}}\models{auxT}(\mathbf i,\mathbf j,M^{1+\mathbf i+{2^N}
\cdot \mathbf j})$ for each position ${(\mathbf i,\mathbf j)}$. Consequently,
rule \eqref{eq:chard:maxTiling} ensures that, for all $s$, we have
${\Prog_{\mathcal{T}} \models \mathit{maxTiling}(s)}$ if and only if ${s \ge
M^{2^{2N}}-1}$.
\begin{align}
                                                                                        & \to \mathit{auxT}(\mathbf \Zero,\mathbf \Zero,M)          \label{eq:chard:mT:1} \\
    \mathit{auxT}(\mathbf x,\mathbf y,n) \land \mathit{succ}(\mathbf x,\mathbf x')      & \to \mathit{auxT}(\mathbf x',\mathbf y,M \times n)        \label{eq:chard:mT:2} \\
    \mathit{auxT}(\mathbf \One,\mathbf y,n) \land \mathit{succ}(\mathbf y,\mathbf y')   & \to \mathit{auxT}(\mathbf \Zero,\mathbf y',M \times n)    \label{eq:chard:mT:3} \\
    \mathit{auxT}(\mathbf \One,\mathbf \One,n)                                          & \to \mathit{maxTiling}(n-1)                               \label{eq:chard:maxTiling}
\end{align}

Unlike in the proof of Lemma~\ref{lem:adm-fact-ent-conp}, we cannot include
shift factors explicitly into $\Prog_{\mathcal{T}}$ since this would make the
encoding exponential; moreover, we could precompute shift factors using rules
similar to \eqref{eq:chard:mT:1}--\eqref{eq:chard:mT:3}, but then we would need
to use values from limit predicates in multiplication, which would not produce
a limit-linear program. Therefore, we check tilings using a different approach.
As in the proof of Lemma~\ref{lem:adm-fact-ent-conp}, our construction ensures
that, for all $s$, we have ${\Prog_{\mathcal{T}} \models \mathit{tiling}(s)}$
if and only if each tiling $n$ with ${0 \leq n \leq s}$ does not satisfy the
compatibility relations. Given a tiling encoded by $n$ and a position
${(\mathbf i,\mathbf j)}$, let
\begin{displaymath}
    s_{n,\mathbf i,\mathbf j} = \left\lfloor \frac{n}{M^{{\mathbf i} + {\mathbf j} \cdot 2^N}} \right\rfloor.
\end{displaymath}
Program $\Prog_{\mathcal{T}}$ contains rules
\eqref{eq:chard:I}--\eqref{eq:chard:shiftedTiling:3} where
$\mathit{shiftedTiling}$ is a $\tmax$ predicate of arity $2N+1$ and $I$ is a
unary $\tmin$ predicate. These rules ensure that, for each $\mathbf i$,
$\mathbf j$, and tiling $n$ such that ${\Prog_{\mathcal{T}} \models
\mathit{tiling}(n)}$, we have ${\Prog_{\mathcal{T}} \models
\mathit{shiftedTiling}(\mathbf i,\mathbf j,s_{n,\mathbf i,\mathbf j})}$. To
understand how this is achieved, we order the grid positions as follows:
\begin{displaymath}
    (0,0), (1,0), \dots, (2^N-1,0), (0,1), (1,1), \dots, (2^N-1,1), \dots\dots, (0,2^N-1), (1,2^N-1), \dots, (2^N-1,2^N-1)
\end{displaymath}
Now consider an arbitrary position ${(\mathbf i, \mathbf j)}$ and its successor
${(\mathbf i', \mathbf j')}$ in the ordering. The encoding of a tiling using an
integer $n$ ensures ${s_{n,\mathbf i,\mathbf j} = M \times s_{n,\mathbf
i',\mathbf j'} + t}$ holds, where ${0 \leq t < M}$ is the number of the tile
that $n$ assigns to position ${(\mathbf i, \mathbf j)}$. Thus, rule
\eqref{eq:chard:shiftedTiling:1} ensures that position ${(\mathbf \Zero,\mathbf
\Zero)}$ satisfies the mentioned property, rule
\eqref{eq:chard:shiftedTiling:2} handles adjacent positions of the form
${(\mathbf i,\mathbf j)}$ and ${(\mathbf i+1,\mathbf j)}$, and rule
\eqref{eq:chard:shiftedTiling:3} handles adjacent positions of the form
${(\mathbf \One,\mathbf j)}$ and ${(\mathbf \Zero,\mathbf j+1)}$.
\begin{align}
                                                                                                                                                                                                & \to \mathit{I}(0)                                         \label{eq:chard:I} \\
    \mathit{tiling}(n)                                                                                                                                                                          & \to \mathit{shiftedTiling}(\mathbf \Zero,\mathbf \Zero,n) \label{eq:chard:shiftedTiling:1} \\
    \mathit{shiftedTiling}(\mathbf x,\mathbf y,n) \land \mathit{succ}(\mathbf x,\mathbf x') \land \mathit{I}(\ell) \land \mathit{I}(m) \land (\ell < M) \land (n \doteq M \times m + \ell)      & \to \mathit{shiftedTiling}(\mathbf x',\mathbf y,m)        \label{eq:chard:shiftedTiling:2} \\
    \mathit{shiftedTiling}(\mathbf\One,\mathbf y,n) \land \mathit{succ}(\mathbf y,\mathbf y') \land \mathit{I}(\ell) \land \mathit{I}(m) \land (\ell < M) \land (n \doteq M \times m + \ell)    & \to \mathit{shiftedTiling}(\mathbf \Zero,\mathbf y',m)    \label{eq:chard:shiftedTiling:3}
\end{align}
Note that, for all $n$ and $n'$ with $n < n'$ and each position ${(\mathbf
i,\mathbf j)}$, we have $s_{n,\mathbf i,\mathbf j} < s_{n',\mathbf i,\mathbf
j}$. Thus, since $\mathit{shiftedTiling}$ is a $\tmax$ predicate, the limit
value for $s$ in ${\mathit{shiftedTiling}(\mathbf i,\mathbf j,s)}$ will always
correspond to the limit value for $n$ in $\mathit{tiling}(n)$.

Checking horizontal compatibility is now easy, but checking vertical
compatibility requires dividing $s_{n,\mathbf i,\mathbf j}$ by $M^{2^N}$, which
would make the reduction exponential. Hence, $\Prog_{\mathcal{T}}$ checks
compatibility using rules
\eqref{eq:chard:conflict:H}--\eqref{eq:chard:conflict:propV}, where
$\mathit{conflict}$ is a $\tmax$ predicate of arity $3N+3$. These rules ensure
that, for each $\mathbf i$, $\mathbf j$, $\mathbf d$, $u$, and tiling $n$ such
that ${\Prog_{\mathcal{T}} \models \mathit{tiling}(n)}$ and the position that
precedes ${(\mathbf i,\mathbf j)}$ by distance $\mathbf d$ in the ordering
cannot be labelled in $n$ with tile $u$, we have ${\Prog_{\mathcal{T}} \models
\mathit{conflict}(\mathbf i,\mathbf j,\mathbf d,u,s_{n,\mathbf i,\mathbf j})}$.
To this end, assume that $(\mathbf x,\mathbf y)$ is labelled with tile $u'$;
now if ${(u,u') \not\in H}$ and ${\mathbf x \neq \mathbf \Zero}$ (i.e., the
predecessor $\mathbf x'$ of $\mathbf x$ exists), then rule
\eqref{eq:chard:conflict:H} says that the position preceding $(\mathbf
x,\mathbf y)$ by $\mathbf \Zero\One$ (i.e., the position to the left) cannot be
labelled with $u$; moreover, if ${(u,u') \not\in V}$ and ${\mathbf y \neq
\mathbf \Zero}$ (i.e., the predecessor $\mathbf y'$ of $\mathbf y$ exists),
then rule \eqref{eq:chard:conflict:V} says that the position preceding
$(\mathbf x,\mathbf y)$ by $\mathbf \One\Zero = 2^N$ (i.e., the position above)
cannot be labelled with $u$. Moreover, rule \eqref{eq:chard:conflict:propH}
propagates such constraints from position $(\mathbf i,\mathbf j)$ to $(\mathbf
i-1,\mathbf j)$ while reducing the distance by one, and rule
\eqref{eq:chard:conflict:propV} does so for positions $(\mathbf \Zero,\mathbf
j)$ and $(\mathbf \One,\mathbf j-1)$.
\begin{align}    
    \begin{array}{@{}r@{}}
        \mathit{shiftedTiling}(\mathbf x,\mathbf y,m) \land \mathit{succ}(\mathbf x',\mathbf x) \land \mathit{incompatibleH}(u,u') \land \mathit{tileNo}(u',t') \\
        \mathit{I}(\ell) \land (m \doteq M \times \ell + t') \\
    \end{array}                                                                                                                                                                                                 &
    \begin{array}{@{\;}lr@{}}
        \land \\
        \to \mathit{conflict}(\mathbf x,\mathbf y,\mathbf \Zero\One,u,m) \\
    \end{array}                                                                                                                                                                                                 \label{eq:chard:conflict:H}\\
    \begin{array}{@{}r@{}}
        \mathit{shiftedTiling}(\mathbf x,\mathbf y,m)  \land \mathit{succ}(\mathbf y',\mathbf y) \land \mathit{incompatibleV}(u,u') \land \mathit{tileNo}(u',t') \\
        \mathit{I}(\ell) \land (m \doteq M \times \ell + t') \\
    \end{array}                                                                                                                                                                                                 &
    \begin{array}{@{\;}lr@{}}
        \land \\
        \to \mathit{conflict}(\mathbf x,\mathbf y,\One\mathbf \Zero,u,m)
    \end{array}                                                                                                                                                                                                 \label{eq:chard:conflict:V}\\
    \begin{array}{@{}r@{}}
        \mathit{shiftedTiling}(\mathbf x,\mathbf y,m) \land \mathit{succ}(\mathbf x,\mathbf x') \land \mathit{conflict}(\mathbf x',\mathbf y,\mathbf z',u,m') \land \mathit{succ'}(\mathbf z,\mathbf z') \\
        \mathit{I}(\ell) \land (\ell < M) \land (m \doteq M \times m' + \ell) \\
    \end{array}                                                                                                                                                                                                 &
    \begin{array}{@{\;}lr@{}}
        \land \\
        \to \mathit{conflict}(\mathbf x,\mathbf y,\mathbf z,u,m) \\
    \end{array}                                                                                                                                                                                                 \label{eq:chard:conflict:propH}\\
    \begin{array}{@{}r@{}}
        \mathit{shiftedTiling}(\mathbf \One,\mathbf y,m) \land \mathit{succ}(\mathbf y,\mathbf y') \land \mathit{conflict}(\mathbf \Zero,\mathbf y',\mathbf z',u,m') \land \mathit{succ'}(\mathbf z,\mathbf z') \\
        \mathit{I}(\ell) \land (\ell < M) \land (m \doteq M \times m' + \ell) \\
    \end{array}                                                                                                                                                                                                 &
    \begin{array}{@{\;}lr@{}}
        \land \\
        \to \mathit{conflict}(\mathbf \One,\mathbf y,\mathbf z,u,m) \\
    \end{array}                                                                                                                                                                                                 \label{eq:chard:conflict:propV}
\end{align}

Program $\Prog_{\mathcal{T}}$ also contains rules
\eqref{eq:chard:invalid:detect}--\eqref{eq:chard:invalid:propV} where
$\mathit{invalid}$ is a $(2N+1)$-ary $\tmax$ predicate. These rules ensure
that, for each $\mathbf i$, $\mathbf j$, and each tiling $n$ such that
${\Prog_{\mathcal{T}} \models \mathit{tiling}(n)}$ and there exists a position
$(\mathbf i',\mathbf j')$ that comes after $(\mathbf i,\mathbf j)$ in the
position order such that $n$ does not satisfy the compatibility relations
between $(\mathbf i',\mathbf j')$ and its horizontal or vertical successor, we
have ${\Prog_{\mathcal{T}} \models \mathit{invalid}(\mathbf i,\mathbf j,
s_{n,\mathbf i,\mathbf j})}$. Rule \eqref{eq:chard:invalid:detect} determines
invalidity at position $(\mathbf x,\mathbf y)$ for conflicts with zero
distance, and rules \eqref{eq:chard:invalid:propH} and
\eqref{eq:chard:invalid:propV} propagate this information to preceding
positions analogously to rules \eqref{eq:chard:conflict:propH} and
\eqref{eq:chard:conflict:propV}.
\begin{align}
    \mathit{conflict}(\mathbf x,\mathbf y,\mathbf \Zero,u,m) \land \mathit{tileNo}(u,t) \land \mathit{I}(\ell) \land (m \doteq M \times \ell + t)       & \to \mathit{invalid}(\mathbf x,\mathbf y,m) \label{eq:chard:invalid:detect} \\
    \begin{array}{@{}r@{}}
        \mathit{shiftedTiling}(\mathbf x,\mathbf y,m) \land \mathit{succ}(\mathbf x,\mathbf x') \land \mathit{invalid}(\mathbf x',\mathbf y,m') \\
        \mathit{I}(\ell) \land (\ell < M) \land (m \doteq M \times m' + \ell) \\
    \end{array}                                                                                                                                         &
    \begin{array}{@{\;}lr@{}}
        \land \\
        \to \mathit{invalid}(\mathbf x,\mathbf y,m) \\
    \end{array}                                                                                                                                                                                         \label{eq:chard:invalid:propH} \\
    \begin{array}{@{}r@{}}
        \mathit{shiftedTiling}(\mathbf \One,\mathbf y,m) \land \mathit{succ}(\mathbf y,\mathbf y') \land \mathit{invalid}(\mathbf \Zero,\mathbf y',m') \\
        \mathit{I}(\ell) \land (\ell < M)\land (m \doteq M \times m' + \ell) \\
    \end{array}                                                                                                                                         &
    \begin{array}{@{\;}lr@{}}
        \land \\
        \to \mathit{invalid}(\mathbf \One,\mathbf y,m) \\
    \end{array}                                                                                                                                                                                         \label{eq:chard:invalid:propV}
\end{align}

Finally, program $\Prog_{\mathcal{T}}$ contains rules
\eqref{eq:chard:tiling:0}--\eqref{eq:chard:noSol} where $\mathit{tiling}$ is a
unary $\tmax$ predicate and $\mathit{noSolution}$ is a nullary predicate. Rule
\eqref{eq:chard:tiling:0} ensures that tiling encoded by $0$ is checked. Based
on our discussion from the previous paragraph, for each invalid tiling $n$ such
that ${\Prog_{\mathcal{T}} \models \mathit{tiling}(n)}$, we have
${\Prog_{\mathcal{T}} \models \mathit{invalid}(\mathbf \Zero,\mathbf \Zero,
s_{\mathbf \Zero,\mathbf \Zero,n})}$; moreover, ${s_{\mathbf \Zero,\mathbf
\Zero,n} = n}$ so, if ${\Prog_{\mathcal{T}} \models \mathit{invalid}(\mathbf
\Zero,\mathbf \Zero, n)}$ holds, then rule \eqref{eq:chard:tiling:inc} ensures
that tiling encoded by $n+1$ is considered; atom ${(m \doteq n)}$ is needed in
the rule since no numeric variable is allowed to occur in more than one
standard body atom. If we exhaust all available tilings, rule
\eqref{eq:chard:noSol} determines that no solution exists, just as in the proof
of Lemma~\ref{lem:adm-fact-ent-conp}.
\begin{align}
                                                                                                & \to \mathit{tiling}(0)    \label{eq:chard:tiling:0} \\
    \mathit{invalid}(\mathbf \Zero,\mathbf \Zero,m) \land \mathit{tiling}(n) \land (m \doteq n) & \to \mathit{tiling}(n+1)  \label{eq:chard:tiling:inc} \\
    \mathit{tiling}(n) \land \mathit{maxTiling}(m) \land (m < n)                                & \to \mathit{noSolution}   \label{eq:chard:noSol}
\end{align}

Based on our discussion of the consequences of $\Prog_{\mathcal{T}}$, we
conclude that instance $\mathcal{T}$ of the succinct tiling problem does not
have a solution if and only if ${\Prog_{\mathcal{T}} \models
\mathit{noSolution}}$.
\end{proof}

\begin{proposition}\label{prop:int-lim-isomorphic}
    For $J$ a pseudo-interpretation and $\alpha$ a fact, ${J \models \alpha}$
    if and only if ${\set{\alpha} \sqsubseteq J}$.
\end{proposition}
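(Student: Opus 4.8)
The plan is to unfold the definitions of $\models$ and $\sqsubseteq$ on pseudo-interpretations (Definition~\ref{def:pseudo-model}) and reduce the whole statement to a question about membership in the corresponding limit-closed interpretations. First I would observe that $\set{\alpha}$ is itself a legal pseudo-interpretation: it contains at most one fact, so the condition in Definition~\ref{def:pseudo-interpretation} holds vacuously. Let $I_J$ and $I_\alpha$ denote the limit-closed interpretations corresponding to $J$ and $\set{\alpha}$, respectively. By Definition~\ref{def:pseudo-model}, ${J \models \alpha}$ holds iff ${I_J \models \alpha}$, and ${\set{\alpha} \sqsubseteq J}$ holds iff ${I_\alpha \subseteq I_J}$. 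Since $\alpha$ is a fact---a ground, function-free, standard atom---evaluating the arithmetic functions in $\alpha$ yields $\alpha$ itself, so ${I_J \models \alpha}$ is equivalent to ${\alpha \in I_J}$ by the semantics of satisfaction. Hence it suffices to prove that ${\alpha \in I_J}$ if and only if ${I_\alpha \subseteq I_J}$.

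For the forward direction, I would suppose ${\alpha \in I_J}$ and split on the type of $\alpha$. If $\alpha$ is an object or ordinary numeric fact, then $I_\alpha = \set{\alpha}$ and the inclusion is immediate. If ${\alpha = B(\mathbf a,k)}$ is a limit fact, then $I_\alpha$ is by construction the limit-closure of $\set{\alpha}$, that is, ${\set{B(\mathbf a,k') : k \le k'}}$ if $B$ is $\tmin$ and ${\set{B(\mathbf a,k') : k' \le k}}$ if $B$ is $\tmax$; since $I_J$ is limit-closed and contains $B(\mathbf a,k)$, Definition~\ref{def:limit-closed-int} guarantees that $I_J$ contains all of these facts, so ${I_\alpha \subseteq I_J}$.

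For the backward direction, I would note that ${\alpha \in I_\alpha}$ always holds: this is trivial for object and ordinary numeric facts, and for a limit fact ${B(\mathbf a,k)}$ it follows from ${k \le k}$. Thus ${I_\alpha \subseteq I_J}$ immediately yields ${\alpha \in I_J}$, which completes the equivalence and hence the proposition.

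There is no genuinely hard step here; the statement is a direct consequence of unfolding the notion of correspondence together with limit-closedness. The only points requiring mild care are the case split on the sort of $\alpha$ (object/ordinary numeric versus $\tmin$/$\tmax$ limit) and the preliminary check that $\set{\alpha}$ is a well-formed pseudo-interpretation, so that $I_\alpha$ is defined at all.
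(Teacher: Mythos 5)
Your proof is correct and follows essentially the same route as the paper's: both arguments pass to the limit-closed interpretation corresponding to $J$ (and, implicitly or explicitly, to $\set{\alpha}$), reduce $J \models \alpha$ to membership of $\alpha$ in that interpretation, and invoke limit-closedness to handle the limit-fact case. Your version is merely more explicit about the case split and about $\set{\alpha}$ being a well-formed pseudo-interpretation, which the paper leaves implicit.
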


\begin{proof}
Consider an arbitrary pseudo-interpretation $J$ and the corresponding
limit-closed interpretation $I$. If ${J \models \alpha}$, then ${\alpha \in
I}$, so there exists a fact ${\alpha' \in J}$ such that such that
${\set{\alpha} \sqsubseteq \set{\alpha'}}$, which implies ${\set{\alpha}
\sqsubseteq J}$. Moreover, if ${\set{\alpha} \sqsubseteq J}$, then there exists
a fact ${\alpha' \in J}$ such that ${\set{\alpha} \sqsubseteq \set{\alpha'}}$
and, since $I$ is limit-closed, we have ${\alpha \in I}$, which implies ${J
\models \alpha}$.
\end{proof}

\admissiblefactentailmentconpconexptime*

\begin{proof}
Lemmas~\ref{lem:adm-fact-ent-conp} and~\ref{lem:adm-fact-ent-conexptime} prove
hardness. Moreover, the following nondeterministic algorithm decides
${\Prog\cup\Dat \not\models \alpha}$ in time polynomial in ${\ssize{\Dat} +
\ssize{\alpha}}$, and exponential in ${\ssize{\Prog} + \ssize{\Dat} +
\ssize{\alpha}}$.
\begin{enumerate}
    \item \emph{Compute the semi-grounding $\Prog'$ of $\Prog$}.

    \item \label{item:complexity-pf-step2} \emph{Guess a pseudo-interpretation
    $J$ over the signature of $\Prog'\cup\Dat$ such that the number of facts in
    $J$ and the absolute values of all integers in $J$ are bounded as in
    Theorem~\ref{th:pseudo}.}

    \item \emph{Check that $J$ is a pseudo-model of $\Prog\cup\Dat$; if not,
    return $\false$.}

    \item \emph{Return $\false$ if ${J \models \alpha}$ and $\true$ otherwise.}
\end{enumerate}
Correctness of the algorithm follows from Theorem~\ref{th:pseudo}, so we next
argue about its complexity. The mentioned data complexity holds by the
following observations.
\begin{itemize}
    \item In step~1, $\ssize{\Prog'}$, $|\Prog'|$, and the time required to
    compute $\Prog'$ are all polynomial in $\ssize{\Dat}$ and constant in
    $\ssize{\alpha}$.

    \item Since $|\Prog'|$ is polynomial in $\ssize{\Dat}$ and constant in
    $\ssize{\alpha}$, and ${\max_{r \in \Prog'} \ssize{r}}$ is constant in
    $\ssize{\Dat}$ and $\ssize{\alpha}$, the magnitude of the integers in $J$
    is exponentially bounded in ${\ssize{\alpha} + \ssize{\Dat}}$ by
    Theorem~\ref{th:pseudo}; thus, the number of bits needed to represent each
    integer in $J$ is polynomial in ${\ssize{\alpha} + \ssize{\Dat}}$.
    Furthermore, we have ${|J| \leq |\Dat|+|\Prog'|}$, and ${|\Dat|+|\Prog'|}$
    is polynomial in $\ssize{\Dat}$ and constant in $\ssize{\alpha}$; thus, $J$
    can be guessed in step~2 in nondeterministic polynomial time in
    ${\ssize{\alpha} + \ssize{\Dat}}$.

    \item By Lemma~\ref{lem:imm-cons-op-props}, checking that $J$ is a
    pseudo-model of $\Prog\cup\Dat$ amounts to checking ${\ILFPStep{\Prog' \cup
    \Dat}{J} = J}$. By Lemma~\ref{lem:ilfp-step-polynomial}, $\ILFPStep{\Prog'
    \cup \Dat}{J}$ can be computed in deterministic polynomial time in
    ${\ssize{J} + \ssize{\Dat}}$ and hence in $\ssize{\Dat}$ (as $\ssize{J}$ is
    polynomial in $\ssize{\Dat}$). Hence, step~3 requires deterministic
    polynomial time in ${\ssize{\alpha} + \ssize{\Dat}}$.

    \item By Proposition~\ref{prop:int-lim-isomorphic}, step~4 amounts to
    checking ${\set{\alpha} \sqsubseteq J}$, which can be done in time
    polynomial in $\ssize{J}$ and $\ssize{\alpha}$, and hence polynomial in
    ${\ssize{\Dat} + \ssize{\alpha}}$ as well.
\end{itemize}
Finally, the mentioned combined complexity holds by the following observations.
\begin{itemize}
    \item In step~1, $\ssize{\Prog'}$, $|\Prog'|$, and time required to compute
    $\Prog'$ are all exponential in ${\ssize{\Prog} + \ssize{\Dat}}$ and
    constant in $\ssize{\alpha}$.

    \item Since $|\Prog'|$ is exponential in ${\ssize{\Prog} + \ssize{\Dat}}$
    and constant in $\ssize{\alpha}$, and ${\max_{r \in \Prog'} \ssize{r}}$ is
    linear in $\ssize{\Prog}$ and constant in ${\ssize{\Dat} +
    \ssize{\alpha}}$, the magnitude of the integers in $J$ is doubly
    exponentially bounded in ${\ssize{\Prog} + \ssize{\Dat} + \ssize{\alpha}}$
    by Theorem~\ref{th:pseudo}; thus, the number of bits needed to represent
    each integer in $J$ is exponential in ${\ssize{\Prog} + \ssize{\Dat} +
    \ssize{\alpha}}$. Furthermore, we have ${|J| < |\Dat| + |\Prog'|}$, and
    ${|\Dat| + |\Prog'|}$ is exponential in ${\ssize{\Prog} + \ssize{\Dat}}$
    and constant in $\ssize{\alpha}$; thus, $J$ can be guessed in step~2 in
    nondeterministic exponential time in ${\ssize{\Prog} + \ssize{\Dat} +
    \ssize{\alpha}}$.

    \item By Lemma~\ref{lem:imm-cons-op-props}, checking that $J$ is a
    pseudo-model of $\Prog\cup\Dat$ amounts to checking ${\ILFPStep{\Prog' \cup
    \Dat}{J} = J}$. By Lemma~\ref{lem:ilfp-step-polynomial}, polynomial $p$
    exists such that $\ILFPStep{\Prog' \cup \Dat}{J}$ can be computed in
    deterministic polynomial time in ${\ssize{\Prog'} + \ssize{\Dat} +
    \ssize{J}^{p(\max_{r \in \Prog'} \ssize{r})}}$, which, in turn, is bounded
    by ${O(2^{\ssize{\Prog} + \ssize{\Dat}}) + \ssize{\Dat} +
    O(2^{(\ssize{\Prog}+\ssize{\Dat}) p(\max_{r\in\Prog}\ssize r)})}$. Hence,
    step~3 requires deterministic exponential time in ${\ssize{\Prog} +
    \ssize{\Dat} + \ssize{\alpha}}$.

    \item By Proposition~\ref{prop:int-lim-isomorphic}, step~4 amounts to
    checking ${\set{\alpha} \sqsubseteq J}$, which can be done in time
    polynomial in $\ssize{J}$ and $\ssize{\alpha}$, and hence in time
    exponential in ${\ssize{\Prog} + \ssize{\Dat} + \ssize{\alpha}}$. \qedhere
\end{itemize}
\end{proof}

\section{Proofs for Section~\ref{sec:tractability}}\label{sec:proofs-stable}

For arbitrary value propagation graph ${G^J_\Prog = (\pgNodes,\pgEdges,\mu)}$,
a \emph{path} in $G^J_\Prog$ is a nonempty sequence ${\pi = \pgNodeRaw{1},
\dots, \pgNodeRaw{n}}$ of nodes from $\pgNodes$ such that
${\pgEdgeRaw{v_i}{v_{i+1}} \in \pgEdges}$ holds for each ${0 \leq i < n}$; such
$\pi$ \emph{starts} in $\pgNodeRaw{1}$ and \emph{ends} in $\pgNodeRaw{n}$. We
define ${|\pi| = n}$; moreover, by a slight abuse of notation, we sometimes
write ${\pi \cap X = \emptyset}$ or ${\pgNodeRaw{i} \in \pi}$, where we
identify $\pi$ with the set of its nodes. A path $\pi$ is \emph{simple} if all
of its nodes are pair-wise distinct. A path $\pi$ is a \emph{cycle} if
${\pgNodeRaw{n} = \pgNodeRaw{1}}$.

\begin{definition}
    Given a semi-ground, limit linear program $\Prog$, a pseudo-interpretation
    $J$, value propagation graph ${G^J_\Prog = (\pgNodes,\pgEdges,\mu)}$, and a
    path ${\pi = \pgNodeRaw{1}, \dots, \pgNodeRaw{n}}$ in $G^J_\Prog$, the
    \emph{weight} $\mu(\pi)$ of $\pi$ is defined as
    \begin{displaymath}
        \mu(\pi) = \sum_{i=1}^{n-1} \mu(\pgEdgeRaw{\pgNodeRaw{i}}{\pgNodeRaw{i+1}}).
    \end{displaymath}
\end{definition}

\begin{lemma}\label{lem:cycle-weight-propagates}
    Let $\Prog$ be a semi-ground and stable limit-linear program, let $J$ be a
    pseudo-model of\/ $\Prog$, let ${G^J_\Prog = (\pgNodes,\pgEdges,\mu)}$, and
    let ${\pgNode{A}{\mathbf a},\pgNode{B}{\mathbf b} \in \pgNodes}$ be nodes
    such that $\pgNode{A}{\mathbf b}$ is reachable from $\pgNode{B}{\mathbf a}$
    by a path $\pi$. Then, for each ${k \in \mathbb{Z}}$ such that ${J \models
    B(\mathbf b,k)}$,
    \begin{itemize}
        \item ${J \models A(\mathbf a,k+\mu(\pi))}$ if $A$ and $B$ are both
        $\tmax$ predicates;

        \item ${J \models A(\mathbf a,-k-\mu(\pi))}$ if $A$ is a
        $\tmin$ predicate and $B$ is a $\tmax$ predicate;

        \item ${J \models A(\mathbf a,k-\mu(\pi))}$ if $A$ and $B$ are
        both $\tmin$ predicates; and

        \item ${J \models A(\mathbf a,-k+\mu(\pi))}$ if $A$ is a
        $\tmax$ predicate and $B$ is a $\tmin$ predicate.
    \end{itemize}
\end{lemma}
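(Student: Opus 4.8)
The plan is to induct on the length of the path $\pi$, which (consistent with the four conclusions) I read as leading from $\pgNode{B}{\mathbf b}$ to $\pgNode{A}{\mathbf a}$, after first collapsing the four sign cases into a single statement. Set $\epsilon_D = +1$ if $D$ is a $\tmax$ predicate and $\epsilon_D = -1$ if $D$ is a $\tmin$ predicate. A line-by-line check of the four items shows that each of them, ``$J \models A(\mathbf a, v_A)$'', is equivalent to the one equation $\epsilon_A\, v_A = \epsilon_B\, k + \mu(\pi)$: for instance the $\tmax/\tmax$ item reads $v_A = k + \mu(\pi)$, while the $\tmax/\tmin$ item ($B$ a $\tmax$, $A$ a $\tmin$ predicate) reads $-v_A = k + \mu(\pi)$. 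Since path weights add, $\mu(\pi) = \mu(\pi') + \mu(e)$ whenever $\pi = \pi' \cdot e$, this reformulation composes: if the invariant $\epsilon_C\, v_C = \epsilon_B\, k + \mu(\pi')$ holds at the penultimate node $\pgNode{C}{\mathbf c}$ and a single-edge step turns $J \models C(\mathbf c, v_C)$ into $\epsilon_A\, v_A = \epsilon_C\, v_C + \mu(e)$, then substitution yields exactly $\epsilon_A\, v_A = \epsilon_B\, k + \mu(\pi)$. The base case $|\pi| = 1$ has $A = B$, $\mathbf a = \mathbf b$, and $\mu(\pi) = 0$, so the conclusion is the hypothesis; hence everything reduces to proving the single-edge claim.

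For a single edge $e = \pgEdge{B}{\mathbf b}{A}{\mathbf a}$ produced by some rule, let $\ell$ be the value with $B(\mathbf b, \ell) \in J$ and assume first that $\ell \in \mathbb Z$. As $\mu(e)$ is a maximum over the finitely many producing rules, it is attained by some rule $r^\ast$, and unfolding the matching clause of the definition of $\delta^e_{r^\ast}(J)$ expresses $\mathsf{opt}(r^\ast, J)$ through $\ell$ and $\mu(e)$ (e.g. $\mathsf{opt}(r^\ast, J) = \ell + \mu(e)$ when both predicates are $\tmax$). Because $J$ is a pseudo-model, Lemma~\ref{lem:imm-cons-op-props} gives $\ILFPStep{\Prog}{J} \sqsubseteq J$, and therefore $J \models \mathsf{hd}(r^\ast, J) = A(\mathbf a, \mathsf{opt}(r^\ast, J))$. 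Finally, $J \models B(\mathbf b, k)$ with $B$ a limit predicate forces $k \le \ell$ if $B$ is $\tmax$ and $k \ge \ell$ if $B$ is $\tmin$, and limit-closedness of the interpretation corresponding to $J$ --- monotonicity of satisfaction in the numeric argument of a limit predicate --- carries $J \models A(\mathbf a, \mathsf{opt}(r^\ast, J))$ to $J \models A(\mathbf a, v_A)$. Checking that the inequalities point the right way in each of the four type combinations is routine sign bookkeeping, and is precisely the computation re-deriving $\epsilon_A\, v_A = \epsilon_B\, k + \mu(e)$.

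The main obstacle is the treatment of $\infty$, and this is exactly where stability is used. It suffices to show that, as soon as the accumulated weight of a prefix of $\pi$ becomes $\infty$, the corresponding node already carries value $\infty$ in $J$; then $\mu(\pi) = \infty$ and the conclusion reduces to $A(\mathbf a, \infty) \in J$, which is what ``$J \models A(\mathbf a, k + \mu(\pi))$'' abbreviates once the numeric argument is $\infty$. For a single edge $e = \pgEdge{C}{\mathbf c}{A}{\mathbf a}$ with $\mu(e) = \infty$, the maximum over finitely many producing rules must contain an infinite summand, so some producing rule $r$ has $\delta^e_r(J) = \infty$; by the definition of $\delta^e_r(J)$ this is possible only if $\mathsf{opt}(r, J) = \infty$, whence $\mathsf{hd}(r, J) = A(\mathbf a, \infty)$ and the pseudo-model property yields $A(\mathbf a, \infty) \in J$. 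It then remains to confirm that $\mu(e) = \infty$ in the two ways an infinite value can reach $e$: if some producing rule is already unbounded this is immediate, and if the source is infinite, $C(\mathbf c, \infty) \in J$, then condition~2 of Definition~\ref{def:stable-programs} guarantees $\mu(e) = \infty$ for a stable program. Thus stability is precisely what forbids an unbounded source feeding an edge of finite weight, the value $\infty$ propagates along $\pi$ all the way to $\pgNode{A}{\mathbf a}$, and the induction goes through.
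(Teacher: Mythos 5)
Your proof is correct and takes essentially the same route as the paper's: induction on the length of $\pi$ peeling off the last edge, using the $\max$-attaining producing rule $r^\ast$ and the pseudo-model property ${\ILFPStep{\Prog}{J} \sqsubseteq J}$ to obtain ${J \models A(\mathbf a,\mathsf{opt}(r^\ast,J))}$, the inequality between $k$ and the limit value $\ell$ together with limit-closedness to pass to the stated value, and condition~2 of Definition~\ref{def:stable-programs} combined with the definition of $\delta_r^e$ to handle the $\infty$ cases. The only difference is cosmetic: you unify the four sign combinations via $\epsilon_A,\epsilon_B$, whereas the paper proves the $\tmax$/$\tmax$ case and declares the others analogous.
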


\begin{proof}
We consider the case when $A$ and $B$ are both $\tmax$ predicates; the
remaining cases are analogous. We proceed by induction on the length of $\pi$.
The base case ($\pi$ is empty) is immediate. For the inductive step, assume
that ${\pi = \pi',\pgNode{A}{\mathbf a}}$ where $\pi'$ is a path starting at
$\pgNode{B}{\mathbf b}$ and ending in node $\pgNode{C}{\mathbf c}$. Then, there
exists an edge ${e = \pgEdge{C}{\mathbf c}{A}{\mathbf a}
\in \pgEdges}$, and $e$ is produced by a rule ${r = C(\mathbf c,n) \land
\varphi \to A(\mathbf a,s) \in \Prog}$ such that $n$ is a variable occurring in
$s$, and $\sigma$ is a grounding of $r$ such that
\begin{enumerate}
    \item ${J \models(C(\mathbf c,n) \land \varphi)\sigma}$, and

    \item ${\delta_r^e(J) = \mu(e) = \mu(\pgEdge{C}{\mathbf c}{A}{\mathbf a})}$.
\end{enumerate}
We next consider the case when $C$ is a $\tmax$ predicate; the case when $C$ is
a $\tmin$ predicate is analogous. Let $\ell$ be such that ${C(\mathbf c,\ell)
\in J}$. We have the following possibilities.
\begin{itemize}
    \item If $\mathsf{opt}(r,J)$ and $\ell$ are both integers (i.e., they are
    not $\infty$), we have ${\mu(e) = \mathsf{opt}(r,J) - \ell}$.
    
    \item If ${\mathsf{opt}(r,J) = \infty}$, then ${\mu(e) = \infty}$ by
    Definition~\ref{def:vp-graph}.
    
    \item If ${\ell = \infty}$, then ${\mu(e) = \infty}$ by
    Definition~\ref{def:stable-programs} and the fact that $\Prog$ is stable,
    and moreover ${\mathsf{opt}(r,J) = \infty}$ by
    Definition~\ref{def:vp-graph}.
\end{itemize}
Now for an arbitrary ${k \in \mathbb{Z}}$ such that ${J \models B(\mathbf
b,k)}$, we consider the following two cases.
\begin{itemize}
    \item ${\mu(e) \neq \infty}$. The inductive hypothesis holds for $\pi'$, so
    ${J \models C(\mathbf c,k+\mu(\pi'))}$ and thus ${k+\mu(\pi') \leq \ell}$.
    Consequently, we have ${\mu(\pi) = \mu(\pi') +\mu(e) = \mu(\pi') +
    \mathsf{opt}(r,J) - \ell \leq \mathsf{opt}(r,J)-k}$, and so ${k + \mu(\pi)
    \leq \mathsf{opt}(r,J)}$ holds. Moreover, ${J \models \Prog}$ implies
    ${\ILFPStep{\Prog}{J} = J}$ by Lemma~\ref{lem:imm-cons-op-props}; thus,
    Proposition~\ref{prop:int-lim-isomorphic} and the definition of
    $\ILFPStepOp{\Prog}{}$ imply ${J \models A(\mathbf a,k + \mu(\pi))}$.

    \item ${\mu(e) = \infty}$. Clearly, ${\mu(\pi) = \infty}$. Moreover, ${J
    \models \Prog}$ implies ${\ILFPStep{\Prog}{J} = J}$ by
    Lemma~\ref{lem:imm-cons-op-props}; thus, ${\mathsf{opt}(r,J) = \infty}$,
    Proposition~\ref{prop:int-lim-isomorphic}, and the definition of
    $\ILFPStepOp{\Prog}{}$ imply ${J \models A(\mathbf a,\infty)}$. \qedhere
\end{itemize}
\end{proof}

\soundness*

\begin{proof}
Let ${G^J_\Prog = (\pgNodes,\pgEdges,\mu)}$, let ${J' =
\ILFPStepOp{\Prog}{\infty}}$, and let ${G^{J'}_\Prog =
(\pgNodes',\pgEdges',\mu')}$. Now assume for the sake of a contradiction that
there exist a cycle $\pi$ in $G_{\Prog}^J$ and a node ${\pgNode{A}{\mathbf a}
\in \pi}$ such that ${\mu(\pi) > 0}$ and ${J' \not\models A(\mathbf
a,\infty)}$. Rule applicability is monotonic w.r.t.\ $\sqsubseteq$, so $\pi$ is
still a cycle in $G_\Prog^{J'}$, and, since $\Prog$ is stable, we have
${\mu'(\pi) \geq \mu(\pi) > 0}$. We consider the case when $A$ is a $\tmax$
predicate; the remaining case is analogous. Now ${\pgNode{A}{\mathbf a} \in
\pgNodes \subseteq \pgNodes'}$ implies that ${A(\mathbf a,k) \in J'}$ for some
$k$; moreover, ${J' \not\models A(\mathbf a,\infty)}$ implies ${k \neq
\infty}$. But then, Lemma~\ref{lem:cycle-weight-propagates} implies ${J'
\models A(\mathbf a,k + \mu'(\pi))}$; moreover, ${\mu'(\pi) > 0}$ implies that
${k + \mu'(\pi)}$ is either $\infty$ or it is an integer larger than $k$;
either way, this contradicts our assumption that ${A(\mathbf a,k) \in J'}$.
\end{proof}

\termination*

\begin{proof}
For $J$ a pseudo-interpretation, $A$ an $(n+1)$-ary limit predicate, and
${\mathbf a}$ an $n$-tuple of objects such such that ${A(\mathbf a,\ell) \in
J}$, let ${\Val{J}{A}{\mathbf a} = \ell}$ if ${\ell = \infty}$ or $A$ is a
$\tmax$ predicate and ${\ell \in \mathbb{Z}}$, and ${\Val{J}{A}{\mathbf a} =
-\ell}$ if $A$ is a $\tmin$ predicate and ${\ell \in \mathbb{Z}}$; moreover,
let $\ActRules{J}{A}{\mathbf a}$ be the set containing each rule ${r \in
\Prog}$ that is applicable to $J$ and where $\head{r}$ is of the form
$A(\mathbf a,s)$. By monotonicity of $\DLog$, we have ${\ActRules{J}{A}{\mathbf
a} \subseteq \ActRules{J'}{A}{\mathbf a}}$ for each $J$ and $J'$ such that ${J
\sqsubseteq J'}$. Moreover, for each edge ${e = \pgEdge{B}{\mathbf
b}{A}{\mathbf a} \in \pgEdges}$ generated by a rule ${r \in
\ActRules{J}{A}{\mathbf a}}$, Definition~\ref{def:vp-graph} ensures that the
following property holds:
\begin{align}
    \Val{J}{B}{\mathbf b} + \mu(e) \geq \Val{\ILFPStep{\set{r}}{J}}{A}{\mathbf a} \tag{$\ast$}
\end{align}

To prove this lemma, we first show the following auxiliary claim.

\medskip

\noindent\textbf{Claim ($\diamondsuit$).\ }
{\em
    For each ${n \ge 0}$ determining the pseudo-interpretation ${J =
    \ILFPStepOp{\Prog}{n}}$ and the value propagation graph ${G^J_\Prog =
    (\pgNodes,\pgEdges,\mu)}$, each ${n' \geq 1}$ determining the
    pseudo-interpretation ${J' = \ILFPStepOp{\Prog}{n+n'}}$ and the value
    propagation graph ${G^{J'}_\Prog = (\pgNodes',\pgEdges',\mu')}$, each set of
    nodes ${X \subseteq \pgNodes}$, and each node ${\pgNode{A}{\mathbf a} \in
    \pgNodes}$ of such that
    \begin{enumerate}
        \item\label{item:diamond1}
        ${\pgEdges = \pgEdges'}$;
    
        \item\label{item:diamond2}
        ${\Val{J'}{B}{\mathbf b} = \infty}$ holds for each node
        ${\pgNode{B}{\mathbf b} \in \pgNodes'}$ that occurs in $G^{J'}_\Prog$
        in a positive-weight cycle;
    
        \item\label{item:diamond3}
        $\pgNode{A}{\mathbf a} \not\in X$;

        \item\label{item:diamond4}
        $\Val{J'}{A}{\mathbf a} < \Val{\ILFPStep{\Prog}{J'}}{A}{\mathbf a}$;
    
        \item\label{item:diamond5}
        ${|\pi| \leq n'}$ holds for each simple path $\pi$ in ${G_{\Prog}^J}$
        that ends in $\pgNode{A}{\mathbf a}$ and satisfies ${\pi \cap X =
        \emptyset}$;

        \item\label{item:diamond6}
        for each node ${\pgNode{B}{\mathbf b} \in X}$, there exists a path
        $\pi$ in $G_{\Prog}^J$ that starts in $\pgNode{A}{\mathbf a}$ and ends
        in $\pgNode{B}{\mathbf b}$;
    \end{enumerate}
    one of the following holds:
    \begin{enumerate}[(i)]
        \item\label{item:diamondC1}
        ${\Val{J'}{C}{\mathbf c} + \mu'(\pi) \ge
        \Val{\ILFPStep{\Prog}{J'}}{A}{\mathbf a}}$ for some node
        ${\pgNode{C}{\mathbf c} \in X}$ and path $\pi$ in $G_\Prog^J$ starting
        in $\pgNode{C}{\mathbf c}$ and ending in $\pgNode{A}{\mathbf a}$;

        \item\label{item:diamondC2}
        ${\ActRules{J}{C}{\mathbf c} \subsetneq \ActRules{J'}{C}{\mathbf c}}$
        for some node ${\pgNode{C}{\mathbf c} \in \pgNodes}$.
    \end{enumerate}
}

\begin{proof}
For arbitrary $n$, we prove the claim by induction on $n'$. For the base case
${n' = 1}$, consider an arbitrary set ${X \subseteq \pgNodes}$ and vertex
${\pgNode{A}{\mathbf a} \in \pgNodes}$ that satisfy
properties~\ref{item:diamond1}--\ref{item:diamond6} of $(\diamondsuit)$. We
distinguish two cases.
\begin{itemize}
    \item There exists an edge ${e = \pgEdge{B}{\mathbf b}{A}{\mathbf a} \in
    \pgEdges}$ such that $\Val{J'}{B}{\mathbf
    b}+\mu'(e)=\Val{\ILFPStep{\Prog}{J'}}{A}{\mathbf a}$. Now either
    ${\pgNode{B}{\mathbf b} \in X}$ or ${\pgNode{B}{\mathbf b} =
    \pgNode{A}{\mathbf a}}$ holds: if that were not the case, path ${\pi =
    \pgNode{B}{\mathbf a},\pgNode{A}{\mathbf a}}$ would be a simple path in
    $G^J_\Prog$ such that ${|\pi| = 2}$, which would contradict
    property~\ref{item:diamond5}.

    We next show that ${\pgNode{B}{\mathbf b} = \pgNode{A}{\mathbf a}}$ is
    impossible. For the sake of a contradiction, assume that
    ${\pgNode{B}{\mathbf b} = \pgNode{A}{\mathbf a}}$ holds, and thus we have
    ${\Val{J'}{A}{\mathbf a} + \mu'(e) = \Val{\ILFPStep{\Prog}{J'}}{A}{\mathbf
    a}}$. By property~\ref{item:diamond4}, this implies ${\Val{J'}{A}{\mathbf
    a} + \mu'(e) > \Val{J'}{A}{\mathbf a}}$, and hence ${\mu'(e) > 0}$.
    Consequently, path $\pi$ is a positive-weight cycle in $G_\Prog^{J'}$, and
    so, by property~\ref{item:diamond2}, we have ${\Val{J}{A}{\mathbf a} =
    \infty}$, which, in turn, contradicts property~\ref{item:diamond4}.

    Consequently, we have ${\pgNode{B}{\mathbf b} \in X}$. But then, since, by
    assumption, ${\Val{J'}{B}{\mathbf b} + \mu'(e) =
    \Val{\ILFPStep{\Prog}{J'}}{A}{\mathbf a}}$, part~\eqref{item:diamondC1} of
    the claim holds for ${\pgNode{C}{\mathbf c} = \pgNode{B}{\mathbf b}}$.

    \item For each edge ${\pgEdge{B}{\mathbf b}{A}{\mathbf a} \in \pgEdges}$ we
    have ${\Val{J'}{B}{\mathbf b} + \mu'(e) <
    \Val{\ILFPStep{\Prog}{J'}}{A}{\mathbf a}}$. Then, for each rule ${r \in
    \Prog}$ that generates an edge ${e = \pgEdge{B}{\mathbf b}{A}{\mathbf a}}$,
    property~($\ast$) ensures ${\Val{\ILFPStep{\set r}{J'}}{A}{\mathbf a} <
    \Val{\ILFPStep\Prog{J'}}{A}{\mathbf a}}$. Since
    \begin{displaymath}
        \Val{\ILFPStep\Prog{J'}}{A}{\mathbf a} = \max_{r \in \Prog,\head{r} = A(\mathbf a,s)} \Val{\ILFPStep{\set{r}}{J'}}{A}{\mathbf a},
    \end{displaymath}
    a rule ${r \in \Prog}$ exists that satisfies
    ${\Val{\ILFPStep\Prog{J'}}{A}{\mathbf a} =
    \Val{\ILFPStep{\set{r}}{J'}}{A}{\mathbf a}}$ but does not generate an edge
    in $\pgEdges$ ending in $\pgNode{A}{\mathbf a}$. Clearly, $\head{r}$ is of
    the form $A(\mathbf a,s)$ and $r$ is applicable to $J'$, so ${r \in
    \ActRules{J'}{A}{\mathbf a}}$ holds. Moreover, $r$ is semi-ground; hence,
    if $s$ were to contain a variable, this variable would occur in a limit
    body atom of $r$, and so $r$ would generate an edge in $\pgEdges$;
    consequently, $s$ is ground. Finally, if $r$ were applicable to $J$, then
    ${\set{ A(\mathbf a,s) } \sqsubseteq J'}$ and so ${\Val{J'}{A}{\mathbf a}
    \ge \Val{\set{ A(\mathbf a,s) }}{A}{\mathbf a} =
    \Val{\ILFPStep\Prog{J'}}{A}{\mathbf a}}$, which contradicts
    property~\ref{item:diamond4}. Consequently, we have ${r \notin
    \ActRules{J}{A}{\mathbf a}}$, and so part~\eqref{item:diamondC2} of the
    claim holds for ${\pgNode{C}{\mathbf c} = \pgNode{A}{\mathbf a}}$.
\end{itemize}

\medskip

For the inductive step, we assume that ($\diamondsuit$) holds for ${n'-1 \geq
1}$, each set ${X \subseteq \pgNodes}$, and each node ${\pgNode{A}{\mathbf a}
\in \pgNodes}$; and we consider an arbitrary set ${X \subseteq \pgNodes}$ and
vertex ${\pgNode{A}{\mathbf a} \in \pgNodes}$ that satisfy
properties~\ref{item:diamond1}--\ref{item:diamond6} of $(\diamondsuit)$. By
property~\ref{item:diamond4}, there exists a rule ${r \in \Prog}$ such that
${\Val{J'}{A}{\mathbf a} < \Val{\ILFPStep{\Prog}{J'}}{A}{\mathbf a} =
\Val{\ILFPStep{\set r}{J'}}{A}{\mathbf a}}$. Now if $r$ does not generate an
edge in $\pgEdges$, then in exactly the same way as in the base case we
conclude that part~\eqref{item:diamondC2} of claim ($\diamondsuit$) holds for
${\pgNode{C}{\mathbf c} = \pgNode{A}{\mathbf a}}$; consequently, in the rest of
this proof we assume that $r$ generates at least one edge in $\pgEdges$. Let
${J''= \ILFPStepOp{\Prog}{n+n'-1}}$ and let ${G_\Prog^{J''} =
(\pgNodes'',\pgEdges'',\mu'')}$. Then, ${\pgEdges = \pgEdges'' = \pgEdges'}$ by
property~\ref{item:diamond1}, and ${\Val{\ILFPStep{\set r}{J''}}{A}{\mathbf a}
\le \Val{J'}{A}{\mathbf a} < \Val{\ILFPStep{\set r}{J'}}{A}{\mathbf a}}$, so
there exists an edge ${e = \pgEdge{B}{\mathbf b}{A}{\mathbf a} \in \pgEdges}$
such that ${\Val{J''}{B}{\mathbf b} < \Val{J'}{B}{\mathbf b}}$. Furthermore,
since $\Val{J'}{A}{\mathbf a}<\Val{\ILFPStep{\set r}{J'}}{A}{\mathbf a}$, if
$\pgNode{B}{\mathbf a}$ were equal to $\pgNode{A}{\mathbf a}$, then path
$\pgNode{A}{\mathbf a},\pgNode{A}{\mathbf a}$ would be a positive-weight cycle
containing $\pgNode{A}{\mathbf a}$, which contradicts
property~\ref{item:diamond2}; hence, we have ${\pgNode{B}{\mathbf b} \neq
\pgNode{A}{\mathbf a}}$ and so path $\pgNode{B}{\mathbf b},\pgNode{A}{\mathbf
a}$ is simple. Now if ${\pgNode{B}{\mathbf b} \in X}$ holds, then, since $r$
generates $e$ and and ${\Val{\ILFPStep{\set r}{J'}}{A}{\mathbf a} =
\Val{\ILFPStep{\Prog}{J'}}{A}{\mathbf a}}$, by property~($\ast$), we have
${\Val{J'}{B}{\mathbf b} + \mu'(e) \ge \Val{\ILFPStep{\Prog}{J'}}{A}{\mathbf
a}}$---that is, part~\eqref{item:diamondC1} of the claim holds for
$\pgNode{C}{\mathbf c}=\pgNode{B}{\mathbf b}$. Therefore, in the rest of this
proof we assume ${\pgNode{B}{\mathbf b} \notin X}$. We now distinguish two
cases.
\begin{itemize}
    \item $\pgNode{B}{\mathbf b}$ is reachable from $\pgNode{A}{\mathbf a}$ in
    $G_\Prog^J$. We next show that the set $X\cup\set{\pgNode{A}{\mathbf a}}$
    and node $\pgNode{B}{\mathbf b}$ satisfy properties~\ref{item:diamond5}
    and~\ref{item:diamond6} of the inductive hypothesis for $n'-1$.

    For property~\ref{item:diamond5}, note that, since $\pgNode B{\mathbf b}$
    is the direct predecessor of $\pgNode A{\mathbf a}$ in $G_\Prog^J$, each
    simple path $\pi$ in $G_\Prog^J$ that ends in $\pgNode{B}{\mathbf b}$ and
    does not involve $\pgNode{A}{\mathbf a}$ can be extended to the simple path
    $\pi,\pgNode{A}{\mathbf a}$ that ends in $\pgNode{A}{\mathbf a}$. Thus, we
    have
    \begin{displaymath}
    \begin{array}{@{}l@{}}
        \max \qset{|\pi|}{\pi \text{ is a simple path in } G_\Prog^J \text{ ending in } \pgNode{B}{\mathbf b} \text{ and } \pi \cap (X\cup\set{\pgNode A{\mathbf a}}) = \emptyset} <{}\\
        \hspace{6cm} \max \qset{|\pi|}{\pi \text{ is a simple path in } G_\Prog^J \text{ ending in } \pgNode{A}{\mathbf a} \text{ and } \pi \cap X = \emptyset}. \\
    \end{array}
    \end{displaymath}
    Property~\ref{item:diamond5} for $X$, $\pgNode{A}{\mathbf a}$, and $n'$
    ensures
    \begin{displaymath}
        \max \qset{|\pi|}{\pi \text{ is a simple path in } G_\Prog^J \text{ ending in } \pgNode{A}{\mathbf a} \text{ and } \pi \cap X = \emptyset} \le n',
    \end{displaymath}
    which in turn implies
    \begin{displaymath}
        \max \qset{|\pi|}{\pi \text{ is a simple path in } G_\Prog^J \text{ ending in } \pgNode{B}{\mathbf b} \text{ and } \pi \cap (X\cup\set{\pgNode A{\mathbf a}}) = \emptyset} \le n'-1.
    \end{displaymath}

    Property~\ref{item:diamond6} holds for $X$ and $\pgNode{A}{\mathbf a}$;
    moreover, there exists a path from $\pgNode{B}{\mathbf b}$ to
    $\pgNode{A}{\mathbf a}$ via the edge $e$, so the property also holds for
    the set ${X \cup \set{\pgNode{A}{\mathbf a}}}$ and node $\pgNode{B}{\mathbf
    b}$.
    
    Property~\ref{item:diamond3} ($\pgNode{B}{\mathbf b}\notin
    X\cup\set{\pgNode{A}{\mathbf a}}$) and property~\ref{item:diamond4}
    ($\Val{J''}{B}{\mathbf b} < \Val{J'}{B}{\mathbf b}$) have already been
    established for $X\cup\set{\pgNode{A}{\mathbf a}}$, $\pgNode{B}{\mathbf
    b}$, and $n'-1$; moreover, properties~\ref{item:diamond1}
    and~\ref{item:diamond2} do not depend on $X$, $\pgNode{A}{\mathbf a}$, and
    $n'$. Thus, we can apply the inductive hypothesis and conclude that one of
    the following holds:
    \begin{enumerate}[(i)]
        \item\label{item:diamondInd1}
        ${\Val{J''}{C}{\mathbf c} + \mu''(\pi) \ge \Val{J'}{B}{\mathbf b}}$
        holds for some node ${\pgNode{C}{\mathbf c} \in X \cup
        \set{\pgNode{A}{\mathbf a}}}$ and path $\pi$ in $G_\Prog^J$ that starts
        in $\pgNode{C}{\mathbf c}$ and ends in $\pgNode{B}{\mathbf b}$;

        \item\label{item:diamondInd2}
        ${\ActRules{J}{C}{\mathbf c} \subsetneq \ActRules{J''}{C}{\mathbf c}}$
        holds for some node ${\pgNode{C}{\mathbf c} \in \pgNodes}$.
    \end{enumerate}
    If~\eqref{item:diamondInd2} is true, then case~\eqref{item:diamondC2} of
    claim ($\diamondsuit$) holds since ${\ActRules{J''}{C}{\mathbf c} \subseteq
    \ActRules{J'}{C}{\mathbf c}}$. Thus, we next assume that
    case~\eqref{item:diamondInd1} holds, and we show that then
    part~\eqref{item:diamondC1} of claim ($\diamondsuit$) holds for
    $\pgNode{C}{\mathbf c}$, $X$, and $\pgNode{A}{\mathbf a}$.

    We first show that ${\pgNode{C}{\mathbf c} \ne \pgNode{A}{\mathbf a}}$. For
    contradiction, assume ${\pgNode{C}{\mathbf c} =
    \pgNode{A}{\mathbf a}}$. Then ${\Val{J''}{A}{\mathbf a} + \mu''(\pi) \ge
    \Val{J'}{B}{\mathbf b}}$. Moreover, since $r$ generates $e$, by
    property~($\ast$) and property~\ref{item:diamond4}, we have
    \begin{displaymath}
        \Val{J'}{B}{\mathbf b} + \mu'(e) \ge \Val{\ILFPStep{\set r}{J'}}{A}{\mathbf a} = \Val{\ILFPStep{\Prog}{J'}}{A}{\mathbf a} > \Val{J'}{A}{\mathbf a}.
    \end{displaymath}
    Consequently, ${\Val{J''}{A}{\mathbf a} + \mu''(\pi) +\mu'(e) >
    \Val{J'}{A}{\mathbf a}}$. Moreover, ${\Val{J'}{A}{\mathbf a} \ge
    \Val{J''}{A}{\mathbf a}}$ holds since $\ILFPStepOp\Prog{}$ is monotonic,
    and ${\mu'(\pi) > \mu''(\pi)}$ holds since $\Prog$ is stable. By these observations, we have
    \begin{displaymath}
        \Val{J'}{A}{\mathbf a} + \mu'(\pi) +\mu'(e) > \Val{J'}{A}{\mathbf a};
    \end{displaymath}
    that is, ${\mu'(\pi) + \mu'(e) > 0}$. But then $\pi,\pgNode{A}{\mathbf a}$
    is a positive-weight cycle in $G_\Prog^{J'}$, and so we have
    ${\Val{J'}{A}{\mathbf a} = \infty}$, which contradicts
    property~\ref{item:diamond4}.

    Thus, we have ${\pgNode{C}{\mathbf c} \in X}$. Then, from
    \begin{align*}
        \Val{J''}{C}{\mathbf c} + \mu''(\pi) \ge \Val{J'}{B}{\mathbf b}                                                             & \text{ and} \\
        \Val{J'}{B}{\mathbf b} + \mu'(e) \ge \Val{\ILFPStep{\set{r}}{J'}}{A}{\mathbf a} = \Val{\ILFPStep{\Prog}{J'}}{A}{\mathbf a}  & \text{ we conclude} \\
        \Val{J'}{C}{\mathbf c} + \mu'(\pi) + \mu'(e) \ge \Val{\ILFPStep\Prog{J'}}{A}{\mathbf a}
    \end{align*}
    as in the case for ${\pgNode{C}{\mathbf c} = \pgNode{A}{\mathbf a}}$. Since
    ${\mu'(\pi) + \mu'(e) = \mu'(\pi,\pgNode{A}{\mathbf a})}$,
    part~\eqref{item:diamondC1} of claim ($\diamondsuit$) holds for
    $\pgNode{C}{\mathbf c}$, $X$, and $\pgNode{A}{\mathbf a}$.

    \item $\pgNode{B}{\mathbf b}$ is not reachable from $\pgNode{A}{\mathbf a}$
    in $G_\Prog^J$. Then, by property~\ref{item:diamond6}, $\pgNode{B}{\mathbf
    b}$ is not reachable in $G_\Prog^J$ from any node in $X$; otherwise,
    $\pgNode{B}{\mathbf b}$ would also be reachable in $G_\Prog^J$ from
    $\pgNode{A}{\mathbf a}$ via some node in $X$. Thus, no simple path in
    $G_\Prog^J$ ending in $\pgNode{B}{\mathbf b}$ involves $\pgNode{A}{\mathbf
    a}$ or a node in $X$---that is, each such path can be extended to a
    simple path ending in $\pgNode{A}{\mathbf a}$. Now
    property~\ref{item:diamond5} ensures
    \begin{align*}
         \max\qset{|\pi|}{\pi \text{ is a path in } G_\Prog^J \text{ ending in } \pgNode{A}{\mathbf a} \text{ and }\pi \cap X=\emptyset} \le n', & \text{ which implies} \\
         \max\qset{|\pi|}{\pi \text{ is a path in } G_\Prog^J \text{ ending in } \pgNode{B}{\mathbf b}} \le n'-1.
    \end{align*}
    Thus, property~\ref{item:diamond5} of the inductive hypothesis for $n'-1$
    holds for the set $\emptyset$ and node $\pgNode{B}{\mathbf b}$. Moreover,
    property~\ref{item:diamond6} holds vacuously for $\emptyset$,
    properties~\ref{item:diamond3} and~\ref{item:diamond4} have already been
    established for $\pgNode{B}{\mathbf b}$, and properties~\ref{item:diamond1}
    and~\ref{item:diamond2} hold by assumption. Thus, we can apply the
    inductive hypothesis for $n'-1$ to $\emptyset$ and $\pgNode{B}{\mathbf b}$,
    and so one of the following holds:
    \begin{enumerate}[(i)]
        \item\label{item:diamondInd3}
        ${\Val{J''}{C}{\mathbf c} + \mu''(\pi) \ge \Val{J'}{B}{\mathbf b}}$ for
        some node ${\pgNode{C}{\mathbf c} \in \emptyset}$ and path $\pi$ in
        $G_\Prog^J$ that starts in $\pgNode{C}{\mathbf c}$ and ends in
        $\pgNode{B}{\mathbf b}$;

        \item\label{item:diamondInd4}
        ${\ActRules{J}{C}{\mathbf c} \subsetneq \ActRules{J''}{C}{\mathbf c}}$
        for some node $\pgNode{C}{\mathbf c}$ in $G_{\Prog}^J$.
    \end{enumerate}
    Clearly,~\eqref{item:diamondInd3} is trivially false,
    so~\eqref{item:diamondInd4} holds. But then, case~\eqref{item:diamondC2} of
    claim ($\diamondsuit$) holds since ${\ActRules{J''}{C}{\mathbf c} \subseteq
    \ActRules{J'}{C}{\mathbf c}}$. \qedhere
\end{itemize}
\end{proof}

Note that, for each ${n \ge 0}$ and each simple path $\pi$ in
$G_\Prog^{\ILFPStepOp{\Prog}{n}}$, $|\pi|$ is bounded by the number of nodes in
$G_\Prog^{\ILFPStepOp{\Prog}{n}}$, which is in turn bounded by ${m = |\Prog|}$.
Therefore, claim~($\diamondsuit$) for ${n'=m}$ and ${X = \emptyset}$ ensures
that, for each ${n \ge 0}$ such that ${\ILFPStepOp{\Prog}{n+m} \ne
\ILFPStepOp{\Prog}{n+m+1}}$, one of the following holds:
\begin{enumerate}
    \item ${\Val{\ILFPStepOp{\Prog}{n+m}}{C}{\mathbf c} \ne \infty}$ for some
    node $\pgNode{C}{\mathbf c}$ that occurs in
    $G_\Prog^{\ILFPStepOp{\Prog}{n+m}}$ in a positive weight cycle (so the
    value of the fact corresponding to $\pgNode{C}{\mathbf c}$ is set to
    $\infty$ in the next iteration of the main loop of the algorithm),

    \item $G_\Prog^{\ILFPStepOp{\Prog}{n+m}}$ contains at least one edge that
    does not occur in $G_\Prog^{\ILFPStepOp{\Prog}{n}}$, or

    \item ${\ActRules{\ILFPStepOp{\Prog}{n}}{C}{\mathbf c} \subsetneq
    \ActRules{\ILFPStepOp{\Prog}{n+m}}{C}{\mathbf c}}$ for some node
    $\pgNode{C}{\mathbf c}$ in $G_\Prog^{\ILFPStepOp{\Prog}{n}}$.
\end{enumerate}
For each ${n \ge 0}$, the size of the set
${\ActRules{\ILFPStepOp{\Prog}{n}}{C}{\mathbf c}}$ for each node
$\pgNode{C}{\mathbf c}$ and the number of nodes in
$G_\Prog^{\ILFPStepOp{\Prog}{n}}$ are both bounded by $m$, and the number of
edges in $G_\Prog^{\ILFPStepOp{\Prog}{n}}$ is bounded by $m^2$. Thus, the
number of iterations of the main loop is bounded by ${m \cdot(m+1) \cdot(m^2+1)
\cdot(m^2+1) \leq 8 m^6}$, where the first factor is given by
Claim~($\diamondsuit$), the second factor comes from the first case above, the
third factor comes from second case, and the fourth factor comes from the third
case . Hence, Algorithm~\ref{alg:lim-stab-fp} reaches a fixpoint after at most
$8 m^6$ iterations of the main loop.
\end{proof}

\algorithmcorrect*

\begin{proof}
Partial correctness follows by Lemma~\ref{lem:soundness}, while termination
follows by Lemma~\ref{lem:termination}. Moreover, the number of iterations of
the main loop of Algorithm~\ref{alg:lim-stab-fp} is polynomially bounded in
$|\Prog\cup\Dat|$, and hence $\ssize{J}$ in each such iteration is bounded by
$\ssize{\Prog\cup\Dat}$. Consequently, lines~\ref{step:compute-imm-cons}
and~\ref{step:compute-dep-graph} of Algorithm~\ref{alg:lim-stab-fp} require
time that is worst-case exponential in ${\max_{r \in \Prog} \ssize{r}}$ and
polynomial in $\ssize{\Prog \cup \Dat}$ by
Lemma~\ref{lem:ilfp-step-polynomial}. Moreover, lines~\ref{step:copy-j},
\ref{step:infty-propagation}, and the check in line~\ref{step:loop1-end}
require time polynomial in $\ssize{J}$ and hence in $\ssize{\Prog\cup\Dat}$.
Finally, we argue that the check for positive-weight cycles in
line~\ref{step:loop2-begin2} is feasible in time polynomial in
$\ssize{\Prog\cup\Dat}$. Let $G'$ be the graph obtained from $G_\Prog^J$ by
negating all weights. Then, a maximal-weight path from $\pgNodeRaw{v_1}$ to
$\pgNodeRaw{v_2}$ in $G^J_\Prog$ corresponds to the least-weight path from
$\pgNodeRaw{v_1}$ to $\pgNodeRaw{v_2}$ in $G'$. Thus, detecting whether a node
occurs in $G^J_\Prog$ in at least one positive-weight cycle reduces to
detecting whether the node occurs in $G'$ on a negative cycle (i.e., on a cycle
with a negative sum of weights), which can be solved in polynomial time using,
for example, a variant of the Floyd-Warshall
algorithm~\cite{DBLP:journals/ipl/Hougardy10}.
\end{proof}

\stablecomplexity*

\begin{proof}
The \textsc{ExpTime} lower bound in combined complexity and the \textsc{PTime}
lower bound in data complexity are inherited from plain
Datalog~\cite{DBLP:journals/csur/DantsinEGV01}. The \textsc{PTime} upper bound
in data is immediate by Theorem~\ref{thm:correctness}. For the \textsc{ExpTime}
upper bound in combined complexity, note that, for $\Prog'=\Prog'_0\cup\Dat$
the semi-grounding of $\Prog=\Prog_0\cup\Dat$ over constants in $\Prog$, we
have that $\ssize{\Prog'}$ is exponentially bounded in $\ssize{\Prog}$, whereas
${\max_{r \in \Prog'_0} \ssize{r} = \max_{r\in\Prog_0} \ssize{r}}$. Hence, by
Theorem~\ref{thm:correctness}, running Algorithm~\ref{alg:lim-stab-fp} on
$\Prog'$ gives us an exponential-time decision procedure for ${\Prog \models
\alpha}$.
\end{proof}

\stabilityundecidable*

\begin{proof}
We present a reduction from Hilbert's tenth problem, which is to determine
whether, given a polynomial ${P(x_1, \dots, x_n)}$ over variables ${x_1, \dots,
x_n}$, equation ${P(x_1, \dots, x_n) = 0}$ has integer solutions. For each such
polynomial $P$, we can assume without loss of generality that $P$ is of the
form $\sum_j(c_j \times \prod_{i=1}^n x_i^{k_{j,i}})$ for ${c_j \in
\mathbb{Z}}$ and ${k_{j,i} \geq 0}$. Now let $\Prog_P$ be the program
containing the following rule, where $B$ is a unary $\tmax$ predicate, and
${A_1, \dots, A_n}$ are distinct unary ordinary numeric predicates:
\begin{align}
    A_1(x_1) \land \dots \land A_n(x_n) \land (P(x_1,\dots,x_n) \leq 0) \land (P(x_1,\dots,x_n) \geq 0) \land B(m) \land (m \leq 0) \to B(m+1)  \label{eq:undec:stable}
\end{align}
Note that rule \eqref{eq:undec:stable} is limit-linear since variables ${x_1,
\dots, x_n}$ do not occurs in a limit atom in the rule. We show that $\Prog_P$
is stable if and only if ${P(x_1, \dots, x_n) = 0}$ has no integer solutions.

Assume that ${P(x_1, \dots, x_n) = 0}$ has no integer solutions. Then, for each
grounding $\sigma$ of \eqref{eq:undec:stable}, at least one of the first two
comparison atoms in the rule is not satisfied, and so $\Prog_P$ is trivially
stable since, for each pseudo-interpretation $J$, the value propagation graph
$G_\Prog^J$ does not contain any edges.

Assume that substitution $\sigma$ exists such that ${P(\sigma(x_1), \dots,
\sigma(x_n)) = 0}$ holds, and let $J_1$ and $J_2$ be the following
pseudo-interpretations with the corresponding value propagation graphs:
\begin{align}
    J_1 & = \set{ A_1(\sigma(x_1)), \dots, A_1(\sigma(x_n))} \cup \set{ B(0) }  & G_{\Prog_P}^{J_1} & = (\pgNodes_1,\pgEdges_1,\mu_1) \\
    J_2 & = \set{ A_1(\sigma(x_1)), \dots, A_1(\sigma(x_n))} \cup \set{ B(1) }  & G_{\Prog_P}^{J_2} & = (\pgNodes_2,\pgEdges_2,\mu_2)
\end{align}
Then, we clearly have ${J_1 \sqsubseteq J_2}$ and ${e = \pgEdge{B}{}{B}{} \in
\pgEdges_1 = \pgEdges_2}$; however, ${\mu_1(e) = 1}$ and ${\mu_2(e) = 0}$.
Consequently, program $\Prog_P$ is not stable.
\end{proof}

\begin{lemma}\label{lem:infinity-propagates}
    For each pseudo-interpretation $J$ and each semi-ground type-consistent
    rule $r$, if $r$ is applicable to $J$ and it contains a limit body atom
    ${B(\mathbf b,n) \in \sbody{r}}$ such that ${B(\mathbf b,\infty) \in J}$
    and variable $n$ occurs in $\head{r}$, then ${\mathsf{opt}(r,J) = \infty}$.
\end{lemma}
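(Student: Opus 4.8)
The plan is to show that, under the hypotheses, the numeric argument $s$ of the head can be driven arbitrarily far in the direction relevant to $\mathsf{opt}(r,J)$ by varying only the value of $n$. First I would note that, since $n$ is a numeric variable and object atoms have no numeric positions, $\head{r}$ must be a limit atom $A(\mathbf a, s)$; by the first condition of Definition~\ref{def:type-consistent}, the coefficient $c$ of $n$ in $s$ is a nonzero integer. By the second condition of Definition~\ref{def:type-consistent} together with the standing assumption that each numeric variable occurs in at most one standard body atom, the unique limit body atom containing $n$ is $B(\mathbf b, n)$; hence $B$ has the \emph{same} type as $A$ when $c>0$ and the \emph{opposite} type when $c<0$. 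Crucially, since $B(\mathbf b,\infty) \in J$, clause~(iii) of Definition~\ref{def:rule-app} contributes no constraint for the atom $B(\mathbf b, n)$ (the side condition $\ell \neq \infty$ fails); because $r$ is applicable, clause~(ii) contributes nothing either; and because body numeric atoms are function-free and $n$ occurs in no other standard body atom, no clause-(iii) term for another limit atom involves $n$. Thus $n$ occurs in $\constraints{r}{J}$ only inside the comparisons $\cbody{r}$.

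Next I would fix an integer solution $\sigma_0$ of $\constraints{r}{J}$ (which exists by applicability) and define, for each $t \geq 0$, the assignment $\sigma_t$ agreeing with $\sigma_0$ on every variable except $n$, setting $n$ to $\sigma_0(n)+t$ if $B$ is a $\tmax$ predicate and to $\sigma_0(n)-t$ if $B$ is a $\tmin$ predicate. The key claim is that every $\sigma_t$ is again a solution of $\constraints{r}{J}$, i.e.\ that each comparison of $\cbody{r}$ stays satisfied; I expect this step to be the main obstacle, and it is exactly where the third condition of Definition~\ref{def:type-consistent} does the work. Consider a comparison between terms $s_1$ and $s_2$. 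When $B$ is $\tmax$, that condition forbids $n$ from occurring in $s_1$ with a positive coefficient and in $s_2$ with a negative coefficient, so the coefficient of $n$ in $s_1$ is $\leq 0$ and in $s_2$ is $\geq 0$; since all other variables are frozen at their $\sigma_0$-values, increasing $n$ can only make $s_2-s_1$ larger, so a satisfied comparison remains satisfied. The case $B$ a $\tmin$ predicate is symmetric: the coefficient of $n$ in $s_1$ is $\geq 0$ and in $s_2$ is $\leq 0$, so decreasing $n$ preserves every comparison. A comparison in which $n$ appears on both sides is handled uniformly by this net-coefficient computation, and one in which $n$ is absent is trivially unaffected.

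Finally I would evaluate the head term under $\sigma_t$: as all variables other than $n$ are fixed, $s$ is affine in $n$ with slope $c$, so $s(\sigma_t)=s(\sigma_0)+c\cdot(\pm t)$ with the sign matching the perturbation of $n$. Combining this with the type link established above---$B$ is $\tmax$ precisely when $c>0$ and $A$ is $\tmax$, or $c<0$ and $A$ is $\tmin$, and dually for $\tmin$---one checks in each of the four sign/type combinations that $s(\sigma_t)\to+\infty$ when $A$ is a $\tmax$ predicate and $s(\sigma_t)\to-\infty$ when $A$ is a $\tmin$ predicate. Hence the value of $s$ over the solutions of $\constraints{r}{J}$ is unbounded above (resp.\ below), so by Definition~\ref{def:rule-app} no finite optimum exists and $\mathsf{opt}(r,J)=\infty$, as required.
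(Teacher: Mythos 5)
Your proposal is correct and follows essentially the same route as the paper's proof: fix a solution of $\constraints{r}{J}$, perturb $n$ in the direction permitted by ${B(\mathbf b,\infty)\in J}$, use the third type-consistency condition to show all comparison atoms remain satisfied, and use the second condition (linking the sign of $n$'s coefficient in the head to the types of $A$ and $B$) to conclude that the head term diverges in the direction of the optimum. The only difference is presentational: you treat the four sign/type combinations uniformly and spell out the comparison-preservation step in more detail, whereas the paper works through one case ($A$ a $\tmax$ predicate, negative coefficient, $B$ a $\tmin$ predicate) explicitly and declares the others analogous.
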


\begin{proof}
Consider an arbitrary pseudo-interpretation $J$ and rule $r$ applicable to $J$
that contains a limit body atom ${B(\mathbf b,n) \in \sbody{r}}$ with
${B(\mathbf b,\infty) \in J}$ and $n$ occurring in $\head{r}$. We consider the
case when ${\head{r} = A(\mathbf a,s)}$ for $A$ a $\tmax$ predicate and
variable $n$ occurs in $s$ with a negative coefficient; the cases when $A$ is a
$\tmin$ predicate and/or $n$ occurs in $s$ with a positive coefficient are
analogous. Then, term $s$ has the form ${\ell \times n + t}$ for some negative
integer $\ell$ and term $t$ not containing $n$. Moreover, $r$ is
type-consistent, so $B$ is a $\tmin$ predicate. Since $r$ is applicable to $J$,
conjunction $\constraints{r}{J}$ has a solution $\sigma$. We next show that
${\mathsf{opt}(r,J) = \infty}$ holds, for which it suffices to argue that, for
each ${k \in \mathbb{Z}}$, conjunction $\constraints{r}{J}$ has a solution
$\sigma'$ such that ${s \sigma' \ge k}$. Let ${k_0 = s\sigma}$ and let
$\sigma'$ be the grounding of $\constraints{r}{J}$ such that ${\sigma'(n) =
\sigma(n) - |k-k_0|}$ and ${\sigma'(m) = \sigma(m)}$ for each variable ${m \ne
n}$. Since ${B(\mathbf b,\infty) \in J}$, we have ${J \models B(\mathbf
b,\sigma'(n))}$. Moreover, $r$ is type-consistent, $\sigma$ satisfies all
comparison atoms in the body of $r$, $B$ is a $\tmin$ predicate, and
${\sigma'(n) \le\sigma(n)}$, so $\sigma'$ also satisfies all comparison atoms
in the body of $r$. Hence, $\sigma'$ is a solution to $\constraints{r}{J}$.
Then, the following calculation implies the claim of this lemma:
\begin{displaymath}
    s\sigma' = \ell \times(\sigma(n) - |k-k_0|) + t\sigma = (\ell \times \sigma(n) + t\sigma) - \ell \times|k-k_0| = k_0 - \ell \times |k-k_0| \ge k_0 + |k-k_0| \ge k. \qedhere
\end{displaymath}
\end{proof}

\begin{lemma}\label{lem:delta-nondecreasing}
    For each type-consistent, limit-linear rule $r$ with ${\head{r} = A(\mathbf
    t,s)}$, each limit body atom ${B(\mathbf t',n) \in \sbody{r}}$ such that
    $n$ occurs in $s$, each semi-grounding $\sigma$ of $r$ such that ${n
    \not\in \dom{\sigma}}$, and all pseudo-interpretations $J_1$ and $J_2$ such
    that ${J_1 \sqsubseteq J_2}$ and $r\sigma$ is applicable to $J_1$, we have
    ${\delta_{r\sigma}^e(J_1) \le \delta_{r\sigma}^e(J_2)}$ where ${e =
    \pgEdge{B}{\mathbf t'\sigma}{A}{\mathbf t\sigma}}$.
\end{lemma}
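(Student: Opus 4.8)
The plan is to show that the single-edge weight contribution $\delta_{r\sigma}^e$ can only grow as the pseudo-interpretation grows, which is exactly the first stability condition specialised to one rule and one edge. Throughout I write $\head{r\sigma} = A(\mathbf t\sigma, s)$, I let $B(\mathbf t'\sigma, n)$ be the distinguished limit body atom (with $n \notin \dom\sigma$, so $n$ is still a variable of $r\sigma$), and I let $\ell_i$ be the value with $B(\mathbf t'\sigma, \ell_i) \in J_i$; this $\ell_1$ exists because $r\sigma$ is applicable to $J_1$, and $\ell_2$ exists by the monotonicity established next.

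First I would record a monotonicity fact for the constraint systems. Comparing $\constraints{r\sigma}{J_1}$ and $\constraints{r\sigma}{J_2}$ clause by clause (Definition~\ref{def:rule-app}): since $J_1 \sqsubseteq J_2$, every object and ordinary numeric atom present in $J_1$ is present in $J_2$, and every limit body atom whose value is finite in $J_1$ has an at-least-as-good value in $J_2$ (larger for $\tmax$, smaller for $\tmin$); hence no $(0<0)$ is introduced and each bound $(s' \le \ell')$ or $(\ell' \le s')$ only relaxes. Thus every solution of $\constraints{r\sigma}{J_1}$ also solves $\constraints{r\sigma}{J_2}$; in particular $r\sigma$ is applicable to $J_2$, and $\mathsf{opt}(r\sigma,\cdot)$ is monotone in the appropriate direction (non-decreasing if $A$ is $\tmax$, non-increasing if $A$ is $\tmin$).

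Next I would dispose of the infinite cases. If $\mathsf{opt}(r\sigma,J_1)=\infty$ then $\delta_{r\sigma}^e(J_1)=\infty$; by the monotonicity just established $\mathsf{opt}(r\sigma,J_2)=\infty$ too, so $\delta_{r\sigma}^e(J_2)=\infty$ and the inequality holds trivially. If $\ell_1=\infty$, then $B(\mathbf t'\sigma,\infty)\in J_1\sqsubseteq J_2$ forces $\ell_2=\infty$, and Lemma~\ref{lem:infinity-propagates} applied to $J_1$ and to $J_2$ (using that $n$ occurs in the head and that $r\sigma$ is applicable to both) gives $\mathsf{opt}(r\sigma,J_i)=\infty$, reducing to the previous case. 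Finally, if $\ell_1$ is finite but $\ell_2=\infty$, then Lemma~\ref{lem:infinity-propagates} applied to $J_2$ yields $\mathsf{opt}(r\sigma,J_2)=\infty$, whence $\delta_{r\sigma}^e(J_2)=\infty\ge\delta_{r\sigma}^e(J_1)$. It therefore remains to treat the case where $\ell_1$, $\ell_2$ and both optima are finite integers.

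For the finite case I would use a value-shifting argument, presenting the subcase where $A$ and $B$ are both $\tmax$ in full and remarking that the other three type combinations are symmetric. Here $\ell_1\le\ell_2$, and by the second condition of Definition~\ref{def:type-consistent} the variable $n$ occurs in $s$ with a positive coefficient $c\ge1$, say $s = c\cdot n + t$ with $n$ absent from $t$. I would take a solution $\tau_1$ of $\constraints{r\sigma}{J_1}$ attaining $s\tau_1=\mathsf{opt}(r\sigma,J_1)$ and define $\tau_2$ by $\tau_2(n)=\tau_1(n)+(\ell_2-\ell_1)$ and $\tau_2=\tau_1$ elsewhere. The only bound on $n$ is $(n\le\ell)$ (as $B$ is $\tmax$ and $n$ occurs in just one standard body atom), and $\tau_2(n)\le\ell_2$ holds by construction. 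The key point is that, by the third condition of Definition~\ref{def:type-consistent}, the $\tmax$ variable $n$ occurs in each comparison only with a negative coefficient on the left or a positive coefficient on the right, so raising $n$ can only shrink a left-hand side or enlarge a right-hand side and hence preserves every comparison; the remaining constraints do not mention $n$. Thus $\tau_2$ solves $\constraints{r\sigma}{J_2}$, giving $s\tau_2 = \mathsf{opt}(r\sigma,J_1)+c(\ell_2-\ell_1)\ge \mathsf{opt}(r\sigma,J_1)+(\ell_2-\ell_1)$, so that $\mathsf{opt}(r\sigma,J_2)-\ell_2 \ge s\tau_2-\ell_2 \ge \mathsf{opt}(r\sigma,J_1)-\ell_1$, which is precisely $\delta_{r\sigma}^e(J_2)\ge\delta_{r\sigma}^e(J_1)$. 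The remaining combinations run identically after flipping the direction in which $\ell$ moves and the sign of $c$ and tracking the matching $\delta$-clause from Definition~\ref{def:vp-graph}. The main obstacle is exactly this sign bookkeeping, and the crucial structural input is the third type-consistency condition, which guarantees that shifting $n$ in the direction dictated by its type never invalidates a comparison atom.
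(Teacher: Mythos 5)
Your proof is correct and takes essentially the same route as the paper's: it first disposes of the infinite cases via the monotonicity of $\constraints{r\sigma}{\cdot}$ under $\sqsubseteq$ together with Lemma~\ref{lem:infinity-propagates}, and then, in the finite case, perturbs the value of $n$ in an optimal solution for $J_1$ to obtain a feasible solution for $J_2$, using the second and third type-consistency conditions to control the sign of the coefficient of $n$ and to preserve the comparison atoms. The only differences are cosmetic: you present the $\tmax$/$\tmax$ case and shift $n$ by $\ell_2-\ell_1$, whereas the paper presents the $\tmax$-head/$\tmin$-body case and sets $n$ directly to the new limit value.
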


\begin{proof}
Consider arbitrary $r$, ${B(\mathbf t',n)}$, $\sigma$, $J_1$, $J_2$, and $e$ as
stated in the lemma. We consider the case when $A$ is a $\tmax$ and $B$ is a
$\tmin$ predicate; the remaining cases are analogous. Let $\varphi$ be the body
of $r$ and let $\sigma$ be a semi-grounding of $r$. Moreover, the claim is
trivial if ${\delta_{r\sigma}^e(J_2) = \infty}$, so we next assume
${\delta_{r\sigma}^e(J_2) \ne \infty}$. Due to ${J_1 \sqsubseteq J_2}$, each
solution to $\constraints{r\sigma}{J_1}$ is a solution to
$\constraints{r\sigma}{J_2}$ as well, so therefore ${\delta_{r\sigma}^e(J_1)
\ne \infty}$ holds. By Definition~\ref{def:vp-graph}, we then have
${\mathsf{opt}(r\sigma,J_1) \neq \infty}$ and ${\mathsf{opt}(r\sigma,J_2) \neq
\infty}$. But then, by Lemma~\ref{lem:infinity-propagates}, there exist
${k_1,k_2 \in \mathbb{Z}}$ such that ${B(\mathbf t'\sigma,k_1) \in J_1}$ and
${B(\mathbf t'\sigma,k_2) \in J_2}$; since $B$ is a $\tmin$ predicate, we have
${k_2 \le k_1}$; moreover, by Definition~\ref{def:vp-graph} we have
${\delta_{r\sigma}^e(J_1) = \mathsf{opt}(r,J_1) + k_1}$ and
${\delta_{r\sigma}^e(J_2) = \mathsf{opt}(r,J_2) + k_2}$. Rule $r$ is type-consistent, so variable $n$
occurs negatively in $s$; thus, $s\sigma$ is of the form ${s' \times n + s''}$
where $s'$ is a ground product evaluating to a negative integer and $s''$ does
not mention $n$. Moreover,
${\mathsf{opt}(r\sigma,{J_1}) \ne \infty}$, so there exists a grounding $\sigma_1$
of $r\sigma$ such that ${J_1 \models \varphi\sigma\sigma_1}$ and
${\mathsf{opt}(r\sigma,J_1) = s\sigma\sigma_1 = s' \times k_1 + s''\sigma_1 = \delta_{r\sigma}^e(J_1) - k_1}$. 
Let $\sigma_2$ be the
substitution such that ${\sigma_2(n) = k_2}$ and ${\sigma_2(m) = \sigma_1(m)}$
for ${m \neq n}$. Clearly, $J_2$ satisfies all object and numeric atoms in
$\varphi\sigma\sigma_2$.  Then, we have the following:
\begin{displaymath}
    \delta_{r\sigma}^e(J_2) = \mathsf{opt}(r\sigma,J_2) + k_2 \geq s\sigma\sigma_2 + k_2 = s' \times n\sigma_2 + s''\sigma_2 + k_2 = s' \times k_2 + s''\sigma_1 + k_2
\end{displaymath}
Furthermore, we have already established ${s' \times k_1 + s''\sigma_1 = \delta_{r\sigma}^e(J_1) - k_1}$, which implies the following:
\begin{displaymath}
    \delta_{r\sigma}^e(J_2) \geq s' \times k_2 - s' \times k_1 + \delta_{r\sigma}^e(J_1) - k_1 + k_2 = (s' + 1) \times (k_2 - k_1) + \delta_{r\sigma}^e(J_1)
\end{displaymath}
But then, $s' < 0$ and ${k_2 \leq k_1}$ clearly imply ${\delta_{r\sigma}^e(J_2)
\geq \delta_{r\sigma}^e(J_1)}$, as required.
\end{proof}

\typeconsistentprogramsstable*

\begin{proof}
For $\Prog$ a type-consistent program and $\sigma$ a semi-grounding of $\Prog$,
condition 1 of Definition~\ref{def:stable-programs} follows by
Lemma~\ref{lem:delta-nondecreasing}, and condition 2 of
Definition~\ref{def:stable-programs} follows by
Lemma~\ref{lem:infinity-propagates}.
\end{proof}

\typeconsistentprogramschecking*

\begin{proof}
Let $\Prog$ be a limit-linear program. We can check whether $\Prog$ is
type-consistent by considering each rule ${r \in \Prog}$ independently. Note
that the first type consistency condition is satisfied for every semi-ground
limit-linear rule where all numeric terms are simplified as much as possible;
thus, no semi-grounding of $r$ (with constants from $\Prog$) where all numeric
terms are simplified as much as possible can violate the first condition of
Definition~\ref{def:type-consistent}. Thus, it suffices to check whether a
semi-grounding of $r$ (with constants from $\Prog$) can violate the second or
the third condition. In both cases, it suffices to consider at most one atom
$\alpha$ at a time (a limit head atom ${A(\mathbf a,s)}$ for the second
condition or a comparison atom ${s_1 < s_2}$ or ${s_1 \leq s_2}$ for the third
condition). In $\alpha$, we consider at most one numeric term $s$ at a time
(${s \in \set{s_1,s_2}}$ for the third condition), where $s$ is of the form
${t_0 + \sum_{i=1}^n t_i \times m_i}$ and $t_i$, for ${i \geq 1}$, are terms
constructed from integers, variables not occurring in limit atoms, and
multiplication. Moreover, for each such $s$, we consider each variable $m$
occurring in $s$.

By assumption, $m$ occurs in $s$, so we have ${m_i = m}$ for some $i$. For the
second condition of Definition~\ref{def:type-consistent}, we need to check
that, if the limit body atom ${B(\mathbf s,m_i)}$ introducing $m_i$ has the
same (different) type as the head atom, then term $t_i$ can only be grounded to
positive (negative) integers or zero. For the third condition, we need to check
that, if ${s = s_1}$ and the limit body atom ${B(\mathbf s,m_i)}$ introducing
$m_i$ is $\tmin$ ($\tmax$), then term $t_i$ can only be grounded to positive
(negative) integers or 0, and dually for the case ${s = s_2}$. Hence, in either
case, it suffices to check whether term $t_i$ can be semi-grounded so that it
evaluates to a positive integer, a negative integer, or zero. We next discuss
how this can be checked in logarithmic space. Let ${t_i = t_i^1 \times \dots
\times t_i^k}$, where each $t_i^j$ is an integer or a variable not occurring in
a limit atom, and assume without loss of generality that we want to check
whether $t_i$ can be grounded to a positive integer; this is the case if and
only if one of the following holds:
\begin{itemize}
    \item all $t_i^j$ are integers whose product is positive;

    \item the product of all integers in $t_i$ is positive and $\Prog$ contains
    a positive integer;

    \item the product of all integers in $t_i$ is positive, $\Prog$ contains a
    negative integer, and the total number of variable occurrences in $t_i$ is
    even;

    \item the product of all integers in $t_i$ is negative, $\Prog$ contains a
    negative integer, and the total number of variable occurrences in $t_i$ is
    odd; or

    \item the product of all integers in $t_i$ is negative, $\Prog$ contains
    both positive and negative integers, and some variable $t_i^j$ has an odd
    number of occurrences in $t_i$.
\end{itemize}
Each of these conditions can be verified using a constant number of pointers
into $\Prog$ and binary variables. This clearly requires logarithmic space, and
it implies our claim.
\end{proof}


}{}

\end{document}
